\documentclass{article}

\PassOptionsToPackage{numbers,compress}{natbib}
\usepackage[preprint]{neurips_2026}

\usepackage{microtype}
\usepackage{graphicx}
\usepackage{subcaption}
\usepackage{booktabs} 
\usepackage{wrapfig}
\usepackage{multirow}
\usepackage{enumitem}
\usepackage{tikz}
\usetikzlibrary{arrows.meta}
\usepackage{hyperref}
\usepackage{dblfloatfix} 
\usepackage[most]{tcolorbox}
\newtcbox{\compline}{
  on line,
  boxrule=0.38pt,
  colframe=blue!45!black,
  colback=blue!7,
  arc=0.8pt,
  left=0.8pt,
  right=0.8pt,
  top=0.8pt,
  bottom=0.8pt,
  boxsep=0pt
}

\usepackage[linesnumbered,ruled,vlined]{algorithm2e}
\usepackage{amsmath}
\usepackage{amssymb}
\usepackage{mathtools}
\usepackage{amsthm}
\usepackage{xspace}
\newcommand{\sys}{\textsc{eris}\xspace}

\usepackage[capitalize,noabbrev]{cleveref}
\usepackage[abbreviations]{foreign}

\usepackage{minitoc}
\doparttoc
\AtBeginDocument{%
  \setcounter{tocdepth}{2}%
  \setcounter{parttocdepth}{2}%
}
\newcommand{\appendixpart}{%
  \refstepcounter{part}%
  \adjustptc%
  \addcontentsline{toc}{part}{\protect\numberline{\thepart}Appendix}%
}

\theoremstyle{plain}
\newtheorem{theorem}{Theorem}[section]

\newtheorem{lemma}[theorem]{Lemma}
\newtheorem{corollary}[theorem]{Corollary}
\theoremstyle{definition}
\newtheorem{definition}[theorem]{Definition}
\newtheorem{assumption}[theorem]{Assumption}
\theoremstyle{remark}
\newtheorem{remark}[theorem]{Remark}

\usepackage[textsize=tiny]{todonotes}

\title{\textsc{eris}: Enhancing Privacy and Scalability in Federated Learning via Federated Shard Aggregation}

\author{%
  Dario Fenoglio \\
  Universit\`a della Svizzera italiana\\
  Lugano, Switzerland\\
  \texttt{dario.fenoglio@usi.ch} \\
  \And
  Pasquale Polverino \\
  Universit\`a della Svizzera italiana\\
  Lugano, Switzerland\\
  \texttt{pasquale.polverino@usi.ch} \\
  \And
  Jacopo Quizi \\
  Universit\`a della Svizzera italiana\\
  Lugano, Switzerland\\
  \texttt{jacopo.quizi@usi.ch} \\
  \And
  Martin Gjoreski \\
  Universit\`a della Svizzera italiana\\
  Lugano, Switzerland\\
  \texttt{martin.gjoreski@usi.ch} \\
  \And
  Akash Dhasade \\
  Carnegie Mellon University \\
  Pittsburgh, PA, United States \\
  \texttt{adhasade@andrew.cmu.edu} \\
  \And
  Marc Langheinrich \\
  Universit\`a della Svizzera italiana\\
  Lugano, Switzerland\\
  \texttt{marc.langheinrich@usi.ch} \\
}

\begin{document}
\faketableofcontents
\maketitle

\begin{abstract}
    Scaling Federated Learning (FL) to billion-parameter models forces a challenging trade-off between privacy, scalability, and model utility. 
    Existing solutions often tackle these challenges in isolation, sacrificing accuracy, relying on costly cryptographic tools, or introducing communication and optimization inefficiencies that affect convergence.
    We introduce \textsc{eris}, an FL framework centered on \emph{Federated Shard Aggregation} (FSA), a novel mechanism that partitions each client update into non-overlapping shards whose aggregation is distributed across multiple client-side aggregators.
    FSA removes the central aggregation bottleneck, limits the information visible to any single observer, and preserves the centralized FL update after reassembly. \textsc{eris} can further readily integrate Distributed Shifted Compression (DSC) to reduce transmitted payloads and exposed coordinates. 
    We prove that \textsc{eris} preserves convergence under standard assumptions and bounds mutual information leakage by the observable fraction of each update, decreasing with the number of client-side aggregators, and with the compression level when DSC is enabled. Experiments across image and text tasks, including large language models, show that \textsc{eris} achieves FedAvg-level utility while substantially reducing communication bottlenecks and improving robustness to membership inference and reconstruction attacks, without relying on heavy cryptography or utility-degrading perturbations.
\end{abstract}


\section{Introduction} 
The widespread digitalization has led to an unprecedented volume of data being continuously recorded. However, much of this data is sensitive, introducing privacy risks and regulatory constraints that limit its usability~\citep{gdpr}. Federated Learning (FL) has emerged as a distributed learning paradigm that enables multiple clients to collaboratively train machine learning (ML) models without directly sharing their private data~\citep{mcmahan_FL}. By keeping data local, FL can unlock sensitive distributed data from hospitals, corporations, vehicles, and personal devices that would otherwise remain inaccessible. At the same time, the growing demand for high-capacity models, including foundation models and large language models (LLMs), makes practical FL increasingly dependent on three fundamental requirements: preserving privacy, scaling efficiently, and maintaining model utility \cite{yuFederatedFoundationModels2024, woisetschlager2024survey, chengFederatedLargeLanguage2025}.

Although FL avoids direct data sharing, the exchanged updates can still reveal sensitive information about the underlying training data. Adversaries can exploit gradients or model updates to reconstruct private samples or infer whether specific records were used during training~\citep{liGGL2022a, yue_rog_attack2023, SIA2021a, baiMIA2024, data_rec_gan, DLG, FCleak_label, gradinv, grnn, memb_inf_attack, huMIA2022a}. Existing privacy-preserving approaches mainly follow two directions. 
Cryptographic protocols 
can hide client updates from the server but introduce additional communication, computation, or hardware requirements, thereby hurting scalability \cite{SMC3, HE3, SMC, SMC2, HE, HE2, kairouzDistributedDiscrete2021, zhaoSEAR2022, yazdinejadAP2FL2024, HashemiTEE2021}. 
Perturbation-based mechanisms, such as differential privacy (DP), or pruning, 
reduce leakage by modifying the transmitted updates, but often degrade model utility, especially in large models or low-data regimes~\citep{HE2023, DP_client, xieDefendingMIA2021, zieglerDefendingRec2022, DP, agarwalCpSGD2018, zongDP2021, sunLDPFL2021, dingDP2021, girgisShuffledDP2021, shenMemDefense2024, pruning_FL, bibikarDynamicSparse2022, priprune2024, alistarhConvSpars2018, sunSoteria2021}. 
Thus, effective privacy protection in FL remains tightly coupled with costly system overheads or accuracy loss.

Scaling FL to modern models further exposes the limits of centralized aggregation. In traditional FL, a central server collects client updates and redistributes the updated model every round, creating severe network-utilization imbalance and a single aggregation bottleneck as the number of clients grows~\cite{bonawitzFederatedLearningScale}. This issue is amplified by billion-parameter models~\citep{BERT2019, openaiGPT42024}, where transmitting full updates becomes prohibitively expensive. Fully decentralized architectures distribute communication across nodes, but rely on neighborhood-only aggregation, often reducing utility or slowing convergence depending on the topology~\cite{bellet2022d,le2023refined, kalraProxyFLDecentralizedFederated2023, chenFedDualPairWiseGossip2023, ako2016, huDecentralizedFederatedLearning2019a}. Compression techniques, such as quantization or sparsification, reduce payloads, but aggressive compression can also affect convergence or degrade accuracy~\citep{jiangPruning2023, gradient_comp_efficiency, liuDecentralizedFL2022, zhaoBEERFast}. Similar limitations hold for parameter-efficient fine-tuning, 
which often remains outperformed by full-parameter fine-tuning~\citep{rajeRavan2025,sunImprovingLoRA024}. 
As a result, existing scalable FL methods address the central bottleneck or communication overhead at the cost of reduced model utility.


For FL to be practically useful, privacy and scalability should not come at the expense of the learning. However, existing methods introduce approximations, perturbations, or deviations from centralized aggregation that alter the optimization trajectory. DP and pruning can suppress informative gradient components~\citep{DP_client, xieDefendingMIA2021, zieglerDefendingRec2022, DP, agarwalCpSGD2018, shenMemDefense2024, pruning_FL, bibikarDynamicSparse2022, priprune2024}; compression can introduce additional variance or require more rounds~\citep{gradient_comp_efficiency, liuDecentralizedFL2022, zhaoBEERFast, liAccelerationCompressedGradient2020a}; and fully decentralized schemes often aggregate only within client-local neighborhoods, affecting convergence~\citep{bellet2022d, le2023refined, kalraProxyFLDecentralizedFederated2023, chenFedDualPairWiseGossip2023}. This creates a persistent trade-off: methods that improve privacy or scalability often fail to preserve the utility of centralized FedAvg-style collaboration.


To address these limitations, we propose \textsc{eris}, an FL framework designed to jointly satisfy privacy, scalability, and utility. 
At its core, \textsc{eris} introduces \emph{Federated Shard Aggregation} (FSA), which partitions each client update into non-overlapping shards and distributes their aggregation across multiple client-side aggregators. 
Because the shards are disjoint and complete, clients recover after reassembly the same global update as induced by centralized aggregation, while no single aggregator ever observes a full client update. 
Furthermore, we show that FSA readily integrates with \emph{Distributed Shifted Compression} (DSC)~\citep{soteriafl2022}, a key component of modern systems that effectively reduces the number of transmitted parameters but does not, by itself, guarantee privacy.
To the best of our knowledge, \textsc{eris} is the first FL framework that simultaneously provides FedAvg-equivalent utility, information-theoretic privacy amplification, and scalable distributed aggregation without relying on heavy cryptography or utility-degrading perturbations.
Our key contributions are:
\begin{itemize}[noitemsep, topsep=0.pt, parsep=0.5pt, partopsep=0pt, leftmargin=1.8em]
    \item \textbf{Federated Shard Aggregation.} We introduce FSA, a distributed aggregation mechanism that shards client updates across multiple aggregators. FSA removes the central aggregation bottleneck, limits the information visible to any single observer, and preserves the centralized FL update after reassembly.
    \item \textbf{Seamless integration with Distributed Shifted Compression.} We show that FSA naturally supports DSC as a pre-processing layer, reducing transmitted and exposed coordinates. 
    This further enhances scalability and privacy while retaining the strong utility provided by FSA.
   \item \textbf{Theoretical guarantees and large-scale validation.} We provide convergence guarantees and information-theoretic privacy bounds showing that leakage decreases with the number of aggregators and, when DSC is enabled, with the compression level. Extensive experiments on four image and two text datasets---from small models to modern LLMs---and under two threat models against six SOTA baselines confirm \textsc{eris}'s strong privacy--utility--scalability trade-off.
\end{itemize}

With \textsc{eris}, we hope to provide a foundation for next-generation FL systems where privacy and scalability can be strengthened without sacrificing model utility.
\section{Background and Related Work}
\label{sec.back}
\paragraph{Federated Learning (FL).} 
Traditional FL systems~\citep{mcmahan_FL} consist of $K\!\in\!\mathbb{N}$ clients, denoted by $\mathcal{K}\!=\!\{1,2,\dots,K\}$, coordinated by a central server to collaboratively train an ML model over a distributed dataset $D$. 
Each client $k\!\in\!\mathcal{K}$ holds a private dataset $D_k\!=\!\{d_{k,s}\}_{s=1}^{S_k}$ with $S_k$ samples. During each round, clients independently update model parameters $\mathbf{x}^t\!\in\!\mathbb{R}^n$ by minimizing a nonconvex local loss $f(D_k;\mathbf{x}^t)$, producing stochastic gradients $\tilde{\mathbf{g}}_k^t$. 
After local training, each client transmits its gradients to the server, which aggregates them using a permutation-invariant operation, and updates the global model as \(\mathbf{x}^{t+1}\!=\!\mathbf{x}^t\!-\!\lambda_t \tilde{\mathbf{g}}^t\), where $\lambda_t$ is the learning rate. 
It then broadcasts $\mathbf{x}^{t+1}$ back to the clients for next round. In general, FL aims to minimize:
{
\setlength{\abovedisplayskip}{3pt} 
\setlength{\belowdisplayskip}{3pt} 
\setlength{\jot}{1pt} 
\begin{equation}
\arg\min_{\mathbf{x}} \frac{1}{K} \sum_{k=1}^K f(D_k; \mathbf{x}), \quad \text{where} \quad f(D_k; \mathbf{x}) := \frac{1}{S_k} \sum_{s=1}^{S_k} f(d_{k,s}; \mathbf{x}).
\label{eq:fedavg_opt}
\end{equation}
}

For brevity, 
we denote the loss function of the current model as $f(\mathbf{x}^t)$ for the entire dataset $D$, $f_k(\mathbf{x}^t)$ for the local dataset $D_k$, and $f_{k,s}(\mathbf{x}^t)$ for a single sample $d_{k,s}$, respectively.

\paragraph{Privacy-preserving FL.}  
Although FL avoids direct data sharing, transmitted updates (\eg, gradients) can still leak sensitive information~\citep{liGGL2022a, yue_rog_attack2023, SIA2021a, baiMIA2024, data_rec_gan, DLG, FCleak_label, gradinv, zhangMIA2023, heEMIA2024, embedding_leak, FCleak_input, data_rec_gan2, dimitrovDataLeakage2022, passive_active_memb, zariMIA2021, liPassiveMIA2022}. 
The central server, which collects full client updates, is therefore a primary vulnerability.
Existing privacy-preserving methods mainly follow two directions.
Firstly, \emph{cryptographic techniques}, such as secure aggregation~\citep{SMC3, HE3, SMC, SMC2, HE, HE2, kairouzDistributedDiscrete2021} and trusted execution environments~\citep{zhaoSEAR2022, yazdinejadAP2FL2024, HashemiTEE2021}, mask client updates from the server, but introduce significant computational overhead or require specialized hardware. 
Secondly, \emph{perturbation-based mechanisms}, such as local differential privacy (LDP)~\citep{DP_client, xieDefendingMIA2021, zieglerDefendingRec2022, DP, agarwalCpSGD2018, zongDP2021, sunLDPFL2021, dingDP2021, girgisShuffledDP2021, miaoCompressedLDP2022, yangLDPsurvey2024, adnanLDP2022, jayaramanDPFLs2018, truexLDPFed2020, jinLDP2023} and gradient pruning~\citep{shenMemDefense2024, pruning_FL, bibikarDynamicSparse2022, priprune2024, alistarhConvSpars2018, sunSoteria2021, jiangPruning2023}, reduce leakage by modifying updates: LDP clips and adds noise to provide formal guarantees, while pruning removes informative gradient components. 
However, both approaches often incur substantial utility degradation, especially for large models~\citep{liLARGELANGUAGEMODELS2022}. 
Recent works such as LotteryFL~\citep{liLotteryFL2021} and PriPrune~\citep{priprune2024} attenuate this trade-off through personalized pruning schemes, while other methods combine LDP with compression to balance privacy and communication efficiency~\citep{agarwalCpSGD2018, zongDP2021, dingDP2021, soteriafl2022, jinLDP2023}, often at the cost of increased algorithmic complexity or slower convergence. 
A complementary line of work reduces exposure by transmitting only partial updates.
Decentralized schemes such as Ako~\citep{ako2016} and Shatter~\citep{biswasShatter2025} partition the model to limit what any single participant observes. 
However, these methods typically deviate from FedAvg-style centralized aggregation and, in the case of Shatter, incur more than twice the communication overhead, slowing down convergence or degrading final accuracy.
These limitations motivate alternative privacy mechanisms that can preserve efficiency and utility.

\paragraph{Scalable FL.} 
Scalability is a central limitation of traditional FL systems, arising from single server bottleneck and high per-round communication cost. 
As the number of participating clients grows, the central server must collect and redistribute updates from all participants, creating severe network-utilization imbalance and server-side congestion. 
\emph{Decentralized architectures} alleviate this bottleneck by distributing communication and computation across multiple nodes
~\citep{kalraProxyFLDecentralizedFederated2023, chenFedDualPairWiseGossip2023, ako2016, huDecentralizedFederatedLearning2019a, liuDecentralizedFL2022, royBrainTorrent2019, pappasIPLS2021, bornsteinSWIFT2023, DBLPgossipe2023, zehtabiDSpodFL2025, shiImproving2023, gabrielliSurveyDFL2026}. 
These methods are commonly grouped into \textit{peer-to-peer synchronization}, where clients exchange updates with selected neighbors~\citep{ako2016, royBrainTorrent2019, zehtabiDSpodFL2025, shiImproving2023}, and \textit{gossip-based protocols}, which use randomized message passing to propagate updates across the network~\citep{huDecentralizedFederatedLearning2019a, pappasIPLS2021, bornsteinSWIFT2023, DBLPgossipe2023, zehtabiDSpodFL2025, shiImproving2023}.
However, existing fully decentralized methods perform neighborhood-only exchanges, forgoing the convergence and utility benefits of centralized aggregation in traditional FL and leading to client-specific models. 
At the same time, as model sizes increase---reaching billions of parameters---the amount of data transmitted per round grows substantially, making large-scale FL increasingly impractical. 
\emph{Compression techniques} reduce this payload via \emph{quantization}~\citep{zhaoBEERFast, liAccelerationCompressedGradient2020a, michelusiFiniteBitQuant2022, liCANITAFasterRates2021, alistarhQSGD2017, karimireddyErrorFeedback2019, reisizadehFedPAQ2020, gorbunovMARINAFasterNonConvex2021, mishchenko2023, khiriratCompressed2018} and \emph{sparsification}~\citep{liuDecentralizedFL2022, zhaoBEERFast, liAccelerationCompressedGradient2020a, soteriafl2022, liCANITAFasterRates2021, gorbunovMARINAFasterNonConvex2021, khiriratCompressed2018, richtarik3PCT2022, wangATOMO2018, ivkinCommunicationefficient2019}, but naive or aggressive compression can degrade utility or slow convergence, undermining scalability gains~\citep{liAccelerationCompressedGradient2020a, michelusiFiniteBitQuant2022, ivkinCommunicationefficient2019}. 
A few methods, such as Ako and C-DFL~\citep{liuDecentralizedFL2022}, combine decentralization with partitioning or compression to improve scalability, but do not consider privacy leakage in their design.

\section{\textsc{eris}} 
\label{sec:method}
In this work, we propose \textsc{eris}, a novel FL framework designed to address key limitations in privacy, scalability, and utility. This section states the design goals (Section~\ref{sec:design_goals}), describes the pipeline (Section~\ref{sec:eris_pipeline}), provides theoretical foundations for the convergence of the learning process (Section~\ref{sec:covergence_theory}), and establishes an information-theoretic upper bound on privacy leakage (Section~\ref{sec:privacy_theory}).

\subsection{Design Goals} \label{sec:design_goals}
We aim to collaboratively train a global model under three requirements: (i) \emph{privacy}: the information available to any single honest-but-curious observer should be limited, preventing direct access to complete client updates; (ii) \emph{scalability}: the protocol should remove the central aggregation bottleneck and balance communication and computation across multiple nodes; and (iii) \emph{utility}: the distributed protocol should preserve the collaborative update induced by centralized aggregation, avoiding approximation errors or perturbations that degrade the learning process.

\begin{figure*}
  \centering
    \captionsetup{skip=4pt} 
 \includegraphics[width=0.99\textwidth]{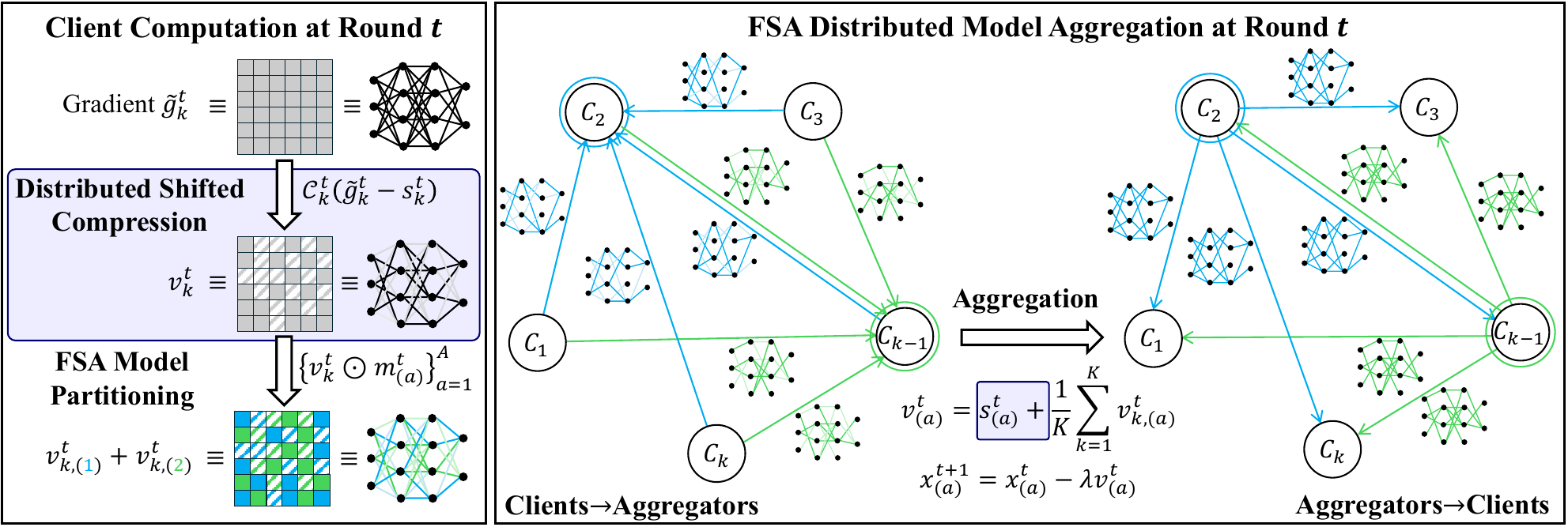}
  \caption{Illustration of \textsc{eris} at training round $t$ for two aggregators ($A\!=\!2$). \textbf{Left:} each client optionally applies \compline{DSC} and then performs FSA partitioning, generating shards \( \mathbf{v}^t_{k,(a)} \) sent to aggregators \( C_2 \) and \( C_{k-1} \). \textbf{Right:} each aggregator collects and aggregates the corresponding shards across clients to produce partial updated models \( \mathbf{x}^{t+1}_{(a)} \), which are sent back to the clients.}
  \label{fig:eris_pipeline}
  \vspace{-8pt}
\end{figure*}

\subsection{The \textsc{eris} Pipeline} \label{sec:eris_pipeline}
The \textsc{eris} pipeline is detailed in Algorithm~\ref{alg:eris} and shown in Figure~\ref{fig:eris_pipeline}. 
In each round $t$, every client $k$ computes a local update $\tilde{\mathbf{g}}_k^t$ on its private dataset $D_k$.
Now, instead of sending the update directly to a central server, \sys leverages \emph{Federated Shard Aggregation} (FSA) to distribute the aggregation task across multiple client-side aggregators.
We first present our core FSA mechanism applied to original gradient updates \ie uncompressed updates.
We then present the variant with Distributed Shifted Compression (DSC) which further enhances scalability and privacy within FSA.
The lines corresponding to DSC are uniquely \compline{highlighted} in Algorithm~\ref{alg:eris}.



\subsubsection{Federated Shard Aggregation (FSA)} \label{sec:fsa}
FSA is the main building block of \textsc{eris}. Instead of sending a complete client update to a central server, each client partitions its update into $A$ disjoint shards and sends each shard to a different aggregator. The aggregators independently combine their assigned shards and return the corresponding model segments to the clients, which then reassemble the global model. 

\paragraph{Shard-wise model partitioning.} 
More formally, let $\mathbf{v}_k^t$ denote the pre-partition update of client $k$ at round $t$. In the basic FSA variant, no compression is applied and $\mathbf{v}_k^t=\tilde{\mathbf{g}}_k^t$. FSA partitions $\mathbf{v}_k^t$ into $A$ disjoint shards using a set of binary masks $\{\mathbf{m}_{(a)}^t\}_{a=1}^A \subset \{0,1\}^n$, where each mask selects the coordinates assigned to aggregator $a$. The masks satisfy:
\begin{align}
    & \textit{Disjointness:} \quad 
    \mathbf{m}_{(a)}^t \odot \mathbf{m}_{(a')}^t = \mathbf{0}
    \quad \forall a \neq a',
    \nonumber \\
    & \textit{Completeness:} \quad 
    \sum_{a=1}^A \mathbf{m}_{(a)}^t = \mathbf{1}_n, 
    \nonumber
\end{align}
where $\mathbf{1}_n$ is the all-ones vector and $\odot$ denotes element-wise multiplication. 
Client $k$ then constructs the shard assigned to aggregator $a$ as \(\mathbf{v}_{k,(a)}^t = \mathbf{v}_k^t \odot \mathbf{m}_{(a)}^t,\) where \(a \in \{1,\dots,A\}\), and transmits each shard $\mathbf{v}_{k,(a)}^t$ to its corresponding aggregator. 
Because the masks are disjoint and complete, the original update can be recovered by summing the shards as \(\mathbf{v}_k^t\!=\!\sum_{a=1}^A \mathbf{v}_{k,(a)}^t\). 
Thus, FSA distributes each update across aggregators without discarding any coordinate. 
 

\paragraph{Distributed model aggregation.}
Each aggregator $a$ receives the shards $\{\mathbf{v}_{k,(a)}^t\}_{k=1}^K$ from the par- ticipating clients and uses a permutation-invariant aggregation over its assigned coordinates, such as:
\begin{equation}
\setlength{\abovedisplayskip}{3pt} 
\setlength{\belowdisplayskip}{3pt} 
    \mathbf{v}_{(a)}^t=
    \frac{1}{K}\sum_{k=1}^K \mathbf{v}_{k,(a)}^t.
    \label{eq:fsa_aggregation}
\end{equation}
The aggregator then updates the corresponding shard of the global model as \( \mathbf{x}_{(a)}^{t+1}\!=\!\mathbf{x}_{(a)}^t - \lambda_t \mathbf{v}_{(a)}^t\).
Finally, each aggregator broadcasts its updated shard $\mathbf{x}_{(a)}^{t+1}$ to the clients, which reassemble the full model as \(\mathbf{x}_k^{t+1}\!=\!\sum_{a=1}^A\mathbf{m}_{(a)}^t \odot \mathbf{x}_{(a)}^{t+1}\). 
By the disjointness and completeness of the masks, this reassembled model is equivalent to the model obtained by aggregating the full client updates centrally. 

Collectively, FSA eliminates the single server bottleneck while still preserving the exact update induced by centralized FL, thereby maintaining strong \emph{utility}. 
By distributing shards across multiple aggregators, it balances communication and computation, improving \emph{scalability}. 
At the same time, each aggregator observes only a subset of every client update, limiting the information accessible to any single honest-but-curious observer and thus enhancing \emph{privacy}. 
This combination distinguishes \textsc{eris} from decentralized methods that offer scalability but sacrifice centralized aggregation and from perturbation based methods that offer privacy but compromise utility. 



\begin{algorithm*}[t]
\caption{\textsc{eris}: Federated Shard Aggregation with optional Distributed Shifted Compression}
{\fontsize{8.}{8.}\selectfont
\label{alg:eris}
\KwIn{
    Initial global model $\mathbf{x}^0$,
    number of aggregators $A$,
    learning rate $\lambda_t$,
    number of clients $K$,
    number of rounds $T$,
    \compline{reference vectors $\mathbf{s}_k^0=\mathbf{0}$ and $\mathbf{s}_{(a)}^0=\mathbf{0}$ for DSC}. 
}
\KwOut{Final global model $\mathbf{x}^T$}

\For{$t = 0, 1, \dots, T-1$}{
    \tcp{Client-side operations}
    \For{\textbf{each client} $k \in \{1, \dots, K\}$ \textbf{in parallel}}{
        Compute local stochastic gradient $\tilde{\mathbf{g}}_k^t$ \;

        \compline{
        \textit{Shifted compression:}
        $\mathbf{v}_k^t =
        \mathcal{C}_k^t\!\left(\tilde{\mathbf{g}}_k^t-\mathbf{s}_k^t\right)$
        }
        \tcp*[r]{without compression: $\mathbf{v}_k^t=\tilde{\mathbf{g}}_k^t$}

        \textit{Shard-wise partitioning:}
        $\{\mathbf{v}_{k,(a)}^t\}_{a=1}^A =
        \{\mathbf{v}_k^t \odot \mathbf{m}_{(a)}^t\}_{a=1}^A$ \;

        Send each shard $\mathbf{v}_{k,(a)}^t$ to aggregator $a$,
        for $a=1,\dots,A$ \;

        \compline{
        Update client reference vector:
        $\mathbf{s}_k^{t+1} =
        \mathbf{s}_k^t + \gamma_t \mathbf{v}_k^t$
        }
    }

    \tcp{Aggregator-side operations}
    \For{\textbf{each aggregator} $a \in \{1, \dots, A\}$ \textbf{in parallel}}{
        Aggregate assigned shard:
        $\mathbf{v}_{(a)}^t =
        \frac{1}{K}\sum_{k=1}^K \mathbf{v}_{k,(a)}^t$
        \;

        \compline{
        Compensate global shift:
        $\mathbf{v}_{(a)}^t \leftarrow
        \mathbf{s}_{(a)}^t + \mathbf{v}_{(a)}^t$
        }

        Update shard of the global model:
        $\mathbf{x}_{(a)}^{t+1} =
        \mathbf{x}_{(a)}^t - \lambda_t \mathbf{v}_{(a)}^t$
        \;

        \compline{
        Update aggregator reference:
        $\mathbf{s}_{(a)}^{t+1} =
        \mathbf{s}_{(a)}^t +
        \gamma_t \frac{1}{K}\sum_{k=1}^K \mathbf{v}_{k,(a)}^t$
        }

        Broadcast updated shard $\mathbf{x}_{(a)}^{t+1}$ to all clients \;
    }

    Each client $k$ reassembles the global model:
    $\mathbf{x}_k^{t+1} =
    \sum_{a=1}^A \mathbf{m}_{(a)}^t \odot \mathbf{x}_{(a)}^{t+1}$ \;
}
}
\vspace{-4pt}
\end{algorithm*}
\vspace{-0pt}

\subsubsection{Amplifying FSA with Distributed Shifted Compression} \label{sec:dsc}
FSA already provides the core privacy and scalability benefits of \textsc{eris}. 
However, the same architecture can naturally support additional privacy- and communication-enhancing mechanisms. In this work, we instantiate this idea with DSC, an optional layer that further reduces both transmitted and exposed coordinates while preserving the FSA structure and utility. We use the standard unbiased compressor definition adopted in communication-efficient FL~\citep{liAccelerationCompressedGradient2020a, liCANITAFasterRates2021, gorbunovMARINAFasterNonConvex2021}.
\begin{definition}[Compression operator]\label{def:compression_operator}
A randomized map $\mathcal{C}:\mathbb{R}^n \rightarrow \mathbb{R}^n$ is an $\omega$-compression operator if, for all $\mathbf{x} \in \mathbb{R}^n$, it satisfies $\omega \geq 0$ and:
\begin{equation}
\setlength{\abovedisplayskip}{4pt} 
\setlength{\belowdisplayskip}{2pt} 
    \mathbb{E}[\mathcal{C}(\mathbf{x})]=\mathbf{x}, 
    \qquad 
    \mathbb{E}\!\left[
    \|\mathcal{C}(\mathbf{x})-\mathbf{x}\|^2
    \right] 
    \leq \omega \|\mathbf{x}\|^2 .
    \label{eq:compressor_operator}
\end{equation}
\end{definition}
Def.~\ref{def:compression_operator} covers common randomized compressors such as quantization and sparsification~\citep{BERT2019, soteriafl2022, alistarhQSGD2017, liAccelerationCompressedGradient2020a, liCANITAFasterRates2021}. 
For instance, random sparsification can be written as $\mathcal{C}_k^t(\mathbf{x})\!=\!\mathbf{x}\odot\mathbf{m}_{\mathcal{C}_k^t}$, where each entry of $\mathbf{m}_{\mathcal{C}_k^t}$ equals $1/p_k$ with probability $p_k$ and $0$ otherwise, giving $\mathbb{E}[\mathbf{m}_{\mathcal{C}_k^t}]\!=\!\mathbf{1}_n$ and $\omega\!=\!(1-p_k)/p_k$.


To reduce communication while preserving favorable convergence behavior, we extend shifted compression~\citep{soteriafl2022} to the distributed FSA setting. 
Each client maintains a reference vector $\mathbf{s}_k^t$ and sends the compressed shifted update \(\mathbf{v}_k^t\!=\!\mathcal{C}_k^t(\tilde{\mathbf{g}}_k^t-\mathbf{s}_k^t)\), then updates \(\mathbf{s}_k^{t+1}\!=\!\mathbf{s}_k^t+\gamma_t\mathbf{v}_k^t\) 
to track the local update direction over time. 
The vector $\mathbf{v}_k^t$ is then passed to FSA, which shards and distributes it as before. 
Hence, DSC augments FSA by simply changing the vector that FSA aggregates. 
Since compression is applied before partitioning, aggregator $a$ receives only a compressed shard. 
To compensate for the shift, it maintains a shard-level reference $\mathbf{s}_{(a)}^t$ and replaces Eq.~\eqref{eq:fsa_aggregation} with: 
\begin{equation}
\setlength{\abovedisplayskip}{3pt} 
\setlength{\belowdisplayskip}{3pt}
    \mathbf{v}_{(a)}^t
    =
    \mathbf{s}_{(a)}^t
    +
    \frac{1}{K}
    \sum_{k=1}^K
    \mathbf{v}_{k,(a)}^t, 
    \quad \quad
    \mathbf{s}_{(a)}^{t+1}=
    \mathbf{s}_{(a)}^t
    +
    \gamma_t
    \frac{1}{K}
    \sum_{k=1}^K
    \mathbf{v}_{k,(a)}^t .
    \label{eq:dsc_aggregator_shift_and_dsc_aggregator_reference}
\end{equation}
DSC amplifies both scalability and privacy. 
By reducing the number of transmitted parameters before FSA partitioning, it decreases client upload and per-aggregator traffic while also reducing the coordinates exposed in each shard. 
Importantly, privacy in \textsc{eris} does not rely on aggressive compression, which can otherwise harm utility: FSA already limits each aggregator to a subset of the update, while DSC can further shrink this observable subset. 
This modularity also shows that \textsc{eris} can incorporate other update transformations beyond DSC before the FSA partitioning, including DP (see Section~\ref{pg:pareto}), with the corresponding aggregation-side correction, when needed.
\subsection{Theoretical Analysis of Convergence and Utility} \label{sec:covergence_theory} \vspace{-2pt}
FSA across $A$ aggregators preserves the exact  update induced by centralized FL (Section~\ref{sec:fsa}). 
In Appendix~\ref{sec:convergence-analysis}, we further formally establish that FSA incurs no information loss or deviation from the centralized aggregation trajectory.
Building on this, we analyze the optional DSC layer on top of FSA, showing that \textsc{eris} preserves favorable convergence behavior under compressed updates.
\begin{theorem}[Utility of \textsc{eris} with DSC] \label{thm:utility_eris_main}
    Consider \textsc{eris} under the standard smoothness and unbiased estimator assumptions (Appendix~\ref{app:assumptions}), with compression operators $\mathcal{C}_k^t$ satisfying Definition~\ref{def:compression_operator}. 
    Let $L$ denote the smoothness constant, and let $\Gamma_1,\Gamma_2,\Gamma_3,\Gamma_4,\theta$ and $\Delta^t$ be as defined in Assumption~\ref{assumption:unbiased_estimator}. 
    Let $\alpha\!=\!\frac{3\beta \Gamma_1}{2(1+\omega)L^2\theta}$, for any $\beta\!>\!0$, and let the learning rate be defined as:
    \begin{equation}
    \lambda_t \!\equiv\! \lambda \!\le\! \min \{\!
    \frac{\sqrt{\beta K}}{\sqrt{1+2\alpha \Gamma_4 + 4 \beta (1+\omega)}(1+\omega)L},
    \frac{1}{\bigl(1+2\alpha \Gamma_4 +4 \beta(1+\omega) + 2 \alpha \Gamma_3  /\lambda^2\bigr)L}
    \!\}
    \end{equation}
    \noindent with shift stepsize $\gamma_t\!=\!\sqrt{(1+2 \omega)/(2(1+\omega)^3)}$. Then, \textsc{eris} satisfies the utility bound:
    \begin{equation}
        \setlength{\abovedisplayskip}{3pt} 
        \setlength{\belowdisplayskip}{3pt} 
        \frac{1}{T}\sum_{t=0}^{T-1} \|\nabla f(\mathbf{x}^t)\|^2 \leq \frac{2\Phi_0}{\lambda T} + \frac{3\beta \Gamma_2}{(1 + \omega)L \lambda}, \label{eq:thr_utility_main}
    \end{equation}
    where $\Phi_0\!:=\!f(\mathbf{x}^0)\!-\!f^*\!+\!\alpha L \Delta^0\!+\!\frac{\beta}{KL} \sum_{k=1}^K ||\nabla f_k(\mathbf{x}^0)\!-\!\mathbf{s}^0_k ||^2$. Equation \ref{eq:thr_utility_main} implies that the asymptotic utility of \textsc{eris} is governed by the gradient estimator variance $\Gamma_2$, which vanishes for lower-variance estimators such as SVRG/SAGA~\citep{defazioSAGA2014, SVRG2013}. Similarly, a larger local 
    batch size reduces gradient variance, leading to improved convergence, with $\Gamma_2\!=\!0$ when full local gradients are used.

\end{theorem}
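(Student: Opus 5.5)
The argument is a Lyapunov (potential-function) analysis in the style of SoteriaFL/shifted compression. It is carried out on the reassembled global iterate, which FSA makes legitimate.

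\textbf{Step 1: reduce to a centralized iteration.} First I establish that, by disjointness and completeness of the masks $\{\mathbf{m}^t_{(a)}\}$, the shard-wise recursions are equivalent to a single global recursion. Summing the aggregator updates in Eq.~\eqref{eq:dsc_aggregator_shift_and_dsc_aggregator_reference} over $a$, and setting $\mathbf{s}^t:=\sum_a \mathbf{s}^t_{(a)}$, gives
\[
\mathbf{x}^{t+1}=\mathbf{x}^t-\lambda \mathbf{v}^t, \qquad \mathbf{v}^t=\mathbf{s}^t+\tfrac1K\sum_k \mathcal{C}_k^t(\tilde{\mathbf{g}}_k^t-\mathbf{s}_k^t),
\]
together with $\mathbf{s}^{t+1}=\mathbf{s}^t+\gamma\frac1K\sum_k\mathbf{v}_k^t$. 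Starting from $\mathbf{s}^0_{(a)}=\mathbf{s}^0_k=\mathbf{0}$, an induction shows $\mathbf{s}^t=\frac1K\sum_k\mathbf{s}_k^t$. It follows that $\mathbb{E}[\mathbf{v}^t]=\frac1K\sum_k\mathbb{E}\tilde{\mathbf{g}}_k^t$, so the direction is unbiased given the past. From this point on the proof is exactly that of centralized shifted compression, and sharding plays no further role.

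\textbf{Step 2: one-step descent.} Apply $L$-smoothness to get
\[
f(\mathbf{x}^{t+1})\le f(\mathbf{x}^t)-\lambda\langle\nabla f(\mathbf{x}^t),\mathbf{v}^t\rangle+\tfrac{L\lambda^2}{2}\|\mathbf{v}^t\|^2.
\]
I take conditional expectation over the compressors first and over the estimator second. The second moment splits into three pieces: the squared mean, the compression variance, and the estimator variance. For the compression variance I use Definition~\ref{def:compression_operator} together with independence across clients, which gives
\[
\mathbb{E}\|\mathbf{v}^t-\tfrac1K\textstyle\sum_k\tilde{\mathbf{g}}_k^t\|^2\le\tfrac{\omega}{K^2}\textstyle\sum_k\|\tilde{\mathbf{g}}_k^t-\mathbf{s}_k^t\|^2 .
\]
I then split $\tilde{\mathbf{g}}_k^t-\mathbf{s}_k^t$ into $(\tilde{\mathbf{g}}_k^t-\nabla f_k)+(\nabla f_k-\mathbf{s}_k^t)$. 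The estimator part is controlled by Assumption~\ref{assumption:unbiased_estimator}, which supplies the terms $\Gamma_1\|\nabla f\|^2$-type, $\Gamma_2$, $\Gamma_3,\Gamma_4$ and $\Delta^t$ contributions and the recursion $\mathbb{E}\Delta^{t+1}\le(1-\theta)\Delta^t+\Gamma_1\ldots$.

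\textbf{Step 3: shift-tracking recursion.} For $H^t:=\frac1K\sum_k\|\nabla f_k(\mathbf{x}^t)-\mathbf{s}^t_k\|^2$, I expand $\mathbf{s}_k^{t+1}=\mathbf{s}_k^t+\gamma\mathcal{C}(\cdot)$ and use the bound $\mathbb{E}\|\mathbf{s}_k^t+\gamma\mathcal{C}(\mathbf{u})-\mathbf{g}\|^2\le\|(1-\gamma)\mathbf{u}\|^2+\gamma^2\omega\|\mathbf{u}\|^2$. Young's inequality then accounts for the drift $\|\nabla f_k(\mathbf{x}^{t+1})-\nabla f_k(\mathbf{x}^t)\|^2\le L^2\lambda^2\|\mathbf{v}^t\|^2$. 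The chosen $\gamma=\sqrt{(1+2\omega)/(2(1+\omega)^3)}$ should be exactly what makes the contraction factor of $H^t$ at most $1-\frac{1}{2(1+\omega)}$ after the Young split.

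\textbf{Step 4: Lyapunov combination.} Define
\[
\Phi_t=f(\mathbf{x}^t)-f^*+\alpha L\Delta^t+\tfrac{\beta}{L}H^t ,
\]
which matches $\Phi_0$ in the statement. The choice $\alpha=\frac{3\beta\Gamma_1}{2(1+\omega)L^2\theta}$ is meant to absorb the $\Gamma_1$ feedback into the $\Delta$-decrease. The two learning-rate conditions are then what make all coefficients of $\|\mathbf{v}^t\|^2$ (or of $\|\nabla f\|^2$ beyond $-\lambda/2$) non-positive. The first condition, with its $\sqrt{\beta K}$, controls the compression-variance term, which carries the $1/K$ factor. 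The second condition, involving $\Gamma_3/\lambda^2$, is implicit in $\lambda$ and controls the estimator terms. Together they give
\[
\mathbb{E}\Phi_{t+1}\le\Phi_t-\tfrac{\lambda}{2}\|\nabla f(\mathbf{x}^t)\|^2+\tfrac{3\beta\Gamma_2}{2(1+\omega)L}.
\]
Telescoping over $t$, dividing by $\lambda T/2$, and using $\Phi_T\ge0$ then yields Eq.~\eqref{eq:thr_utility_main}.

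The main obstacle is the constant bookkeeping in Steps 3–4. Getting exactly the stated $\gamma$ and $\alpha$ requires choosing the Young parameters correctly; I would first fix the Young parameter as $\frac{1}{2(1+\omega)}$ and then solve for $\gamma$ minimizing $(1-\gamma)^2+\gamma^2\omega$ under that split. If the factors fail to match, I would instead adjust the weight on $H^t$.
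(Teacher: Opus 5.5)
Your route is the paper's: reduce to the reassembled iterate, apply the descent lemma, bound the second moment as in Lemma~\ref{lemma:6}, run a shift recursion with contraction $1-\frac{1}{2(1+\omega)}$ as in Lemma~\ref{lemma:7}, use the same potential $\Phi_t$ as in Lemma~\ref{lemma:8}, and telescope. Your Step~1 is more explicit than the paper, which only cites the shift-free Theorem~\ref{thm:convergence_equivalence}. Summing over $a$ does tacitly need a shard assignment that is fixed across rounds, so that each $\mathbf{s}^t_{(a)}$ stays supported on shard $a$; the paper glosses over the same point.

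The gap is in Step~3, which is where every constant in the statement is produced. Your bound for $\mathbb{E}\|\mathbf{s}_k^t+\gamma\mathcal{C}(\mathbf{u})-\mathbf{g}\|^2$ holds only when $\mathbf{u}=\mathbf{g}-\mathbf{s}_k^t$ is the compressor's argument. But $H^t$ is measured against $\nabla f_k(\mathbf{x}^t)$, while $\mathcal{C}_k^t$ acts on $\tilde{\mathbf{g}}_k^t-\mathbf{s}_k^t$. Expanding around $\nabla f_k(\mathbf{x}^t)-\mathbf{s}_k^t$ and using $\mathbb{E}_t\langle\nabla f_k(\mathbf{x}^t)-\mathbf{s}_k^t,\mathcal{C}_k^t(\tilde{\mathbf{g}}_k^t-\mathbf{s}_k^t)\rangle=\|\nabla f_k(\mathbf{x}^t)-\mathbf{s}_k^t\|^2$ gives
\[
\mathbb{E}_t\|\nabla f_k(\mathbf{x}^t)-\mathbf{s}_k^{t+1}\|^2\le\bigl(1-2\gamma+\gamma^2(1+\omega)\bigr)\|\nabla f_k(\mathbf{x}^t)-\mathbf{s}_k^t\|^2+\gamma^2(1+\omega)\,\mathbb{E}_t\|\tilde{\mathbf{g}}_k^t-\nabla f_k(\mathbf{x}^t)\|^2 .
\]
The last term is essential. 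After the Young split it puts $\frac{\beta}{(1+\omega)L}(\Gamma_1\Delta^t+\Gamma_2)$ into the potential. That is the $1$ in $\frac32=1+\frac12$, both in $\alpha$ and in the residual $\frac{3\beta\Gamma_2}{2(1+\omega)L}$; the $\frac12$ comes from the $\|\mathbf{v}^t\|^2$ term under the first step-size condition. Without this term, Step~3 cannot yield the inequality you telescope.

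Your recipe for the constants would also prove a different statement. Minimizing $(1-\gamma)^2+\gamma^2\omega$ gives $\gamma=\frac{1}{1+\omega}$, not the stated $\gamma$. A Young split $(1+b)\|\cdot\|^2+(1+\frac1b)\|\cdot\|^2$ with $b=\frac{1}{2(1+\omega)}$ puts $(3+2\omega)L^2$ rather than $2(1+\omega)L^2$ on the drift term. The $\|\mathbf{v}^t\|^2$ coefficient then no longer matches the $1+2\alpha\Gamma_4+4\beta(1+\omega)$ that appears in both step-size conditions.

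The paper instead takes $b=\frac{1}{1+2\omega}$. Then $1+\frac1b=2(1+\omega)$ and $(1+b)\gamma^2(1+\omega)=\frac{1}{1+\omega}$ exactly. The contraction $(1+b)\bigl(1-2\gamma+\gamma^2(1+\omega)\bigr)\le 1-\frac{1}{2(1+\omega)}$ is \emph{verified} for the given $\gamma$, not optimized. With $\epsilon=\frac{1}{1+\omega}$, so that $\gamma=\epsilon\sqrt{1-\epsilon/2}$, it reduces to $1-\frac{3}{8}\epsilon\le\sqrt{1-\epsilon/2}$, which holds for $\epsilon\le\frac{16}{9}$.

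Finally, correct the roles of the constants in Step~2. $\Gamma_1$ multiplies $\Delta^t$ in the variance bound. $\Gamma_3\|\nabla f(\mathbf{x}^t)\|^2$ and $\Gamma_4\|\mathbf{x}^{t+1}-\mathbf{x}^t\|^2$ enter the $\Delta$ recursion. This is why $2\alpha\Gamma_4$ appears in both step-size conditions, while $2\alpha\Gamma_3/\lambda^2$ appears only in the one securing the $-\frac{\lambda}{2}\|\nabla f(\mathbf{x}^t)\|^2$ term.
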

In contrast to prior communication-efficient privacy-preserving FL methods~\citep{dingDP2021, soteriafl2022, lowyPrivateNonConvex2023}, the bound in \eqref{eq:thr_utility_main} depends primarily on the gradient-estimator variance $\Gamma_2$ and is independent of the specific privacy-preserving mechanism applied; notably, it contains no term that grows with $T$. The proof and a comparison with existing DP and compressed FL bounds are provided in Appendix~\ref{sec:utility_comm_eris}. 

\subsection{Theoretical Analysis of Privacy Guarantees} \label{sec:privacy_theory}
We analyze \textsc{eris} under the standard \textit{honest-but-curious} threat model~\citep{huangEvaluatingGradInv2021, LLMattack2022, arevaloTaskAgnostic2024}, where an adversary observes and stores transmitted updates, e.g. via eavesdropping or a compromised aggregator/server, and attempts to infer sensitive information about clients' private data $D_k$. FSA limits the adversary's view by ensuring that any fixed aggregator observes only one shard of each client update. DSC further reduces this view by retaining only a fraction $p$ of the coordinates before partitioning. 
To quantify privacy, we bound the mutual information between $D_k$ and the adversary’s view $\mathbf{v}^t_{k,(a)}$ over 
rounds.
\begin{theorem}[Privacy guarantee of \textsc{eris} with DSC]
\label{thm:privacy}
Let \(\mathbf{v}_{k,(a)}^t\!=\!(\tilde{\mathbf{g}}_k^t\!-\!\mathbf{s}_k^t) \odot \mathbf{m}_{\mathcal{C}_k^t} \odot \mathbf{m}_{(a)}^t\) denote the $a$-th transmitted shard of client $k$ at round $t$, where $\mathbf{m}_{\mathcal{C}_k^t}$ is a compression mask satisfying Definition~\ref{def:compression_operator} with probability $p$, and $\mathbf{m}_{(a)}^t$ selects one of $A$ disjoint shards. Assume that \(\max_{i,t,\mathcal{H}_t} I\!(D_k; \mathbf{x}_{k,i}^{t}\!\mid\!\mathcal{H}_t)\!<\!\infty,\) where \(\mathcal{H}_t\) denotes the full history up to round $t$ of the revealed masked updates and weights. Then, under the honest-but-curious model, the mutual information over $T$ rounds satisfies:
\begin{equation}
I_k = I\bigl(D_k; \{\mathbf{v}_{k,(a)}^{t+1}\}_{t=0}^{T-1}\bigr) \le n \; T \; \frac{p}{A} \; C_{\max},
\label{eq:privay_bound}
\end{equation}
where $n$ is the model size and $C_{\max}$ bounds the per-coordinate mutual information at any round.
\end{theorem}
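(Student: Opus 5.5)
The plan is to apply the chain rule of mutual information twice: first across rounds, then across the coordinates inside a single round. After that, I average over the randomness of the two masks, which shows that only a fraction $p/A$ of the $n$ coordinates is actually revealed in expectation.

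\textbf{Step 1 (chain rule over rounds).} Write the adversary's view as the sequence $\mathbf{v}_{k,(a)}^{1},\dots,\mathbf{v}_{k,(a)}^{T}$. The chain rule gives
\[
I_k=\sum_{t=0}^{T-1} I\bigl(D_k;\mathbf{v}_{k,(a)}^{t+1}\mid \mathbf{v}_{k,(a)}^{1},\dots,\mathbf{v}_{k,(a)}^{t}\bigr).
\]
I would then enlarge each conditioning set to the full history $\mathcal{H}_t$ of revealed masked updates and weights. This step is not free, because conditioning can increase mutual information. I would therefore bound each term directly by a quantity that is uniform over histories, namely the maximum over $\mathcal{H}_t$ in the assumption. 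Concretely, I would use $I(X;Y\mid Z)\le \sup_z I(X;Y\mid Z=z)$ and build the per-round bound from the stated $\max_{i,t,\mathcal{H}_t}$. I expect this to be the main delicacy, and I would handle it by taking $C_{\max}$ to be that supremum, so that the bound is worst-case over histories.

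\textbf{Step 2 (coordinate decomposition within a round).} The masks $\mathbf{m}_{\mathcal{C}_k^t}$ and $\mathbf{m}_{(a)}^t$ are generated independently of $D_k$. I would therefore include them in the conditioning as public side information; this costs nothing, since $I(D_k;\mathbf{v},M)=I(D_k;\mathbf{v}\mid M)$ when $M\perp D_k$. Given the masks, $\mathbf{v}_{k,(a)}^{t}$ is determined by the coordinates $x_{k,i}^t$ of the shifted update in the support set $S_t=\{i:\mathbf{m}_{\mathcal{C}_k^t,i}\mathbf{m}_{(a),i}^t\neq 0\}$, each scaled by the known constant $1/p$; all other coordinates are identically zero. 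The data-processing inequality together with the chain rule over $i\in S_t$ gives
\[
I\bigl(D_k;\mathbf{v}_{k,(a)}^t\mid \mathcal{H}_t, M\bigr)\le \sum_{i\in S_t} I\bigl(D_k; x_{k,i}^t\mid \mathcal{H}_t, M, x_{k,<i}^t\bigr)\le |S_t|\, C_{\max}.
\]
Here $C_{\max}$ is the finite per-coordinate bound from the hypothesis, understood to hold also with the earlier coordinates of the round included in the conditioning. Absorbing those coordinates into the history is the same modelling choice as in Step 1.

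\textbf{Step 3 (counting exposed coordinates).} Taking the expectation over the mask randomness gives $\mathbb{E}|S_t|=\sum_i \Pr[\mathbf{m}_{\mathcal{C},i}\neq0]\Pr[\mathbf{m}_{(a),i}=1]$. Each coordinate survives compression with probability $p$. Under uniform random sharding, and by symmetry of the disjoint and complete masks in the worst case over $a$, each coordinate lands in shard $a$ with probability $1/A$. This gives $\mathbb{E}|S_t|=np/A$. Since conditional mutual information given $M$ averages over $M$, the per-round term is at most $\frac{np}{A}C_{\max}$. Summing over the $T$ rounds in Step 1 yields $I_k\le nT\frac{p}{A}C_{\max}$.

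The main obstacle is making Step 1 rigorous, since conditioning on the history is not monotone. The clean fix is to state $C_{\max}$ as a conditional bound uniform over all histories and intra-round prefixes, which is how I would read the theorem's finiteness hypothesis. A secondary point is that Step 3 requires the shard assignment to be a uniformly random partition, or at least equal-sized shards, so that each coordinate is exposed with probability $1/A$.
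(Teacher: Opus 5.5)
Your proposal is correct and follows the paper's own route: a chain rule over rounds, adding the data-independent mask to the conditioning, bounding each revealed coordinate by $C_{\max}$, using $\mathbb{E}|S(\mathbf{m})|=np/A$ for equal-sized shards, and summing over the $T$ rounds. The only substantive difference is in rigor. Within a round, the paper bounds $I\bigl(D_k;\{\mathbf{x}_{k,i}^{t+1}\}_{i\in S(\mathbf{m})}\mid\mathcal{H}_t\bigr)$ by $\sum_{i\in S(\mathbf{m})}I\bigl(D_k;\mathbf{x}_{k,i}^{t+1}\mid\mathcal{H}_t\bigr)$, which tacitly needs something like conditional independence of the coordinates, and it states the round-wise chain rule as an equality with the enlarged history $\mathcal{H}_t$. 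Your exact chain rule over coordinates and rounds, with $C_{\max}$ read as uniform over histories and intra-round prefixes, makes both steps rigorous, at the price of an explicitly stronger reading of the hypothesis.
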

Theorem~\ref{thm:privacy} shows that privacy leakage scales with the number of observable coordinates over time, \(nTp/A\). Thus, FSA provides privacy amplification through the \(1/A\) factor, while DSC can further reduce leakage through the retention probability \(p\). This captures the most general setting: when DSC is disabled, \(p=1\) and the bound reduces to the FSA-only case \(I_k \le nT C_{\max}/A\). Full proofs, the Gaussian instantiation of \(C_{\max}\), and the extension to colluding adversaries are provided in Appendix~\ref{app:privacy_proof}. These theoretical findings are corroborated by our experiments.

\section{Results} \label{sec:results}

We now evaluate \textsc{eris} along all three dimensions: privacy, scalability, and utility. 
We detail the experimental setup in Section~\ref{sec:exp_setup} and present the results in Section~\ref{sec:num_exps}.

\subsection{Experimental Setup} \label{sec:exp_setup}
\textbf{Datasets.} $\;$ We evaluate \textsc{eris} on six publicly available datasets spanning image classification and text generation. For image classification, we use MNIST \citep{mnist} and CIFAR-10 \citep{cifar}; for text classification, IMDB Reviews \citep{imdb}; and for text generation, CNN/DailyMail \citep{cnn_dataset}. To evaluate data reconstruction attacks, we additionally use LFW 
\citep{lfw} and ImageNet \cite{dengImageNet2009}. Datasets are randomly partitioned among $K$ clients ($K{=}10$ for CNN/DailyMail, $K{=}25$ for IMDB, and $K{=}50$ for the others); while non-IID scenarios are generated using a Dirichlet distribution with $\alpha \in \{0.2,0.5\}$. We adopt GPT-Neo \citep{gptneo} (1.3B) and DistilBERT \citep{Sanh2019DistilBERTAD} (67M) as pre-trained models for CNN/DailyMail and IMDB, respectively, and train ResNet-9 (1.65M) for CIFAR-10, ResNet-18 \citep{resnet2016} (11.7M) for ImageNet, and LeNet-5 \citep{lenet} (62K) for MNIST and LFW from scratch. All experiments use 5-fold cross-validation, and reported results are averaged across folds. Training hyperparameters are detailed in App. \ref{app:model_and_hyper}.

\textbf{Baselines.} $\;$
We compare \textsc{eris} against several SOTA methods targeting privacy, scalability, or their trade-off in FL: \textit{Ako}~\citep{ako2016} and \textit{Shatter}~\citep{biswasShatter2025}, decentralized approaches with partial gradient exchange; \textit{SoteriaFL}~\citep{soteriafl2022}, which combines centralized shifted compression with DP; \textit{PriPrune}~\citep{priprune2024}, a pruning strategy that withholds the most informative gradient components from communication; and \textit{LDP}~\citep{sunLDPFL2021}. We also include \textit{FedAvg}~\citep{mcmahan_FL} as the standard baseline with no defenses or compression, and report results for an idealized upper bound (\textit{Min. Leakage}), where clients transmit no gradients and the attack is applied only to the last-round global model.

\textbf{Privacy Attacks.} $\;$
Under the standard honest-but-curious model, we assume the attacker is a compromised aggregator or server with access to client-transmitted gradients. We evaluate six representative attacks across two widely studied categories: \textit{Membership Inference Attacks (MIA)} and \textit{Data Reconstruction Attacks (DRA)}. For MIA, we adopt the privacy auditing framework of \citet{steinkePrivacyAuditOne2023}, repeating the evaluation at each round for every client; for text generation, we adapt the SPV-MIA of \citet{fuMIALLM2024} to our auditing setting. Reported results correspond to the maximum, over all $T$ rounds, of the average MIA accuracy across $K$ clients. For DRA, we consider the strongest white-box threat model, which assumes access to the gradient of a single training sample, and implement DLG \citep{DLG}, iDLG \citep{FCleak_label}, and ROG \citep{yue_rog_attack2023}, with the latter specifically designed for reconstruction from obfuscated gradients. We further include GGL \cite{liGGL2022a}, an adaptive gradient inversion attack that accounts for the gradient transformation induced by the privacy defense. 

\subsection{Numerical Experiments} \label{sec:num_exps}
\paragraph{Effect of FSA and DSC on privacy.} \label{pg:model_partitioning_compression} 
We first isolate the two mechanisms that reduce the adversary's view in \textsc{eris}. Figure~\ref{fig:combined_A_w} (left) evaluates FSA by varying the number of aggregators $A$ on MNIST. 
\begin{wrapfigure}{r}{0.480\textwidth}
  \centering
  \includegraphics[width=0.48\textwidth]{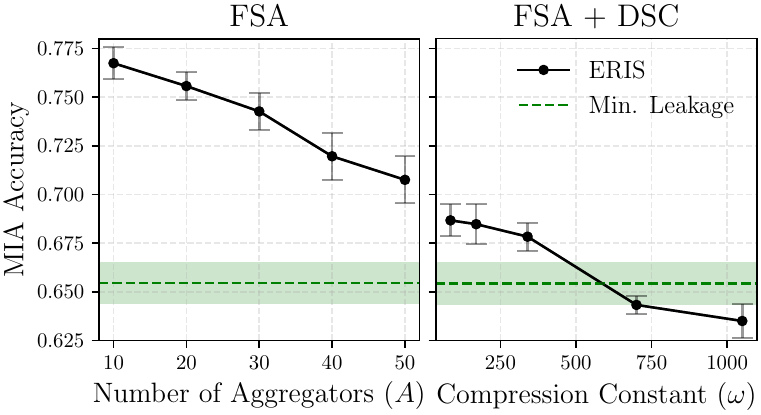}
  \caption{Effect of FSA (\textbf{left}) and DSC (\textbf{right}).
  }
  \label{fig:combined_A_w}
  \vspace{-10pt}
\end{wrapfigure}
Consistent with Theorem~\ref{thm:privacy}, increasing $A$ reduces privacy leakage by limiting each aggregator to a smaller shard of the client update, without affecting model accuracy. The observed trend follows the $1/A$ factor in the mutual-information bound. 
Figure~\ref{fig:combined_A_w} (right) evaluates DSC with $A\!=\!50$ fixed: stronger compression (higher $\omega$, i.e., lower retention probability $p$) further reduces MIA accuracy toward the idealized minimum-leakage baseline.
These results validate the decomposition in Theorem~\ref{thm:privacy}: FSA provides the core privacy amplification through sharding, while DSC further shrinks the observable subset. 
Appendix~\ref{app:shited_compr} quantifies the corresponding effect of DSC on model utility. For DRA, we find that compression alone is insufficient, especially against ROG (Table~\ref{tab:comm_strategies_quality}), whereas FSA is highly effective: even with $A\!=\!2$ (i.e., half of the update exposed), reconstructions are highly distorted and no longer preserve meaningful features of the original.

\begin{table*}
    \captionsetup{skip=3pt}
    \caption{Mean test performance (ROUGE-1 for CNN/DailyMail, accuracy for others) and MIA accuracy, averaged over varying local sample sizes. 
    Configuration: $\epsilon{=}10$; pruning rate $p\!\in\!\{0.1,0.2,0.3\}$ on IMDB/CNN-DailyMail, and $p\!\in\!\{0.01,0.05,0.1\}$ on others.
    }
    \centering
    \scriptsize
    \renewcommand{\arraystretch}{1.}
    \setlength{\tabcolsep}{2.4pt}
    \begin{tabular}{l|cc|cc|cc|cc}
    \toprule
    & \multicolumn{2}{c|}{\textbf{CNN/DailyMail – GPT-Neo}} 
    & \multicolumn{2}{c|}{\textbf{IMDB – DistilBERT}} 
    & \multicolumn{2}{c|}{\textbf{CIFAR-10 – ResNet9}} 
    & \multicolumn{2}{c}{\textbf{MNIST – LeNet5}}\\
    \textbf{Method} 
    & \textbf{R-1 ($\uparrow$)} & \textbf{MIA Acc. ($\downarrow$)} 
    & \textbf{Acc. ($\uparrow$)} & \textbf{MIA Acc. ($\downarrow$)} 
    & \textbf{Acc. ($\uparrow$)} & \textbf{MIA Acc. ($\downarrow$)} 
    & \textbf{Acc. ($\uparrow$)} & \textbf{MIA Acc. ($\downarrow$)}\\
    \midrule
    FedAvg & 33.22$\pm$0.99 & 97.94$\pm$0.63 
           & 79.60$\pm$0.83 & 68.21$\pm$1.36 
           & 34.86$\pm$0.31 & 68.46$\pm$0.96 
           & 88.91$\pm$0.35 & 65.11$\pm$0.78 \\
    FedAvg ($\epsilon,\delta$)\,-LDP  & 26.00$\pm$0.28 & 51.98$\pm$3.13 
           & 53.97$\pm$0.04 & 50.55$\pm$1.18 
           & 19.00$\pm$0.47 & 63.35$\pm$0.85 
           & 61.03$\pm$1.03 & 57.24$\pm$0.59 \\
    SoteriaFL ($\epsilon,\delta$)  & 25.40$\pm$0.70 & 52.14$\pm$2.97 
           & 54.24$\pm$0.15 & 51.25$\pm$1.19 
           & 17.18$\pm$0.24 & 58.83$\pm$0.56 
           & 57.27$\pm$0.88 & 57.13$\pm$0.56 \\
    PriPrune ($p_1$) & 24.67$\pm$4.64 & 71.35$\pm$2.83 
           & 74.15$\pm$1.00 & 66.36$\pm$1.13 
           & 26.30$\pm$0.39 & 65.67$\pm$0.84 
           & 77.41$\pm$1.52 & 62.21$\pm$1.01 \\
    PriPrune ($p_2$) & 24.67$\pm$4.64 & 71.35$\pm$2.83 
           & 66.30$\pm$2.14 & 63.61$\pm$1.11 
           & 11.24$\pm$0.71 & 56.55$\pm$0.78 
           & 27.36$\pm$1.04 & 52.69$\pm$0.83 \\
    PriPrune ($p_3$) & 24.67$\pm$4.64 & 71.35$\pm$2.83 
           & 60.32$\pm$1.98 & 60.54$\pm$1.03 
           & 10.01$\pm$0.01 & 54.86$\pm$0.77 
           & 17.83$\pm$0.60 & 52.01$\pm$0.80 \\
    Shatter & 31.95$\pm$0.71 & 70.67$\pm$4.03
            & 76.94$\pm$1.40 & 57.41$\pm$2.01 
           & 12.40$\pm$1.85 & 63.00$\pm$2.01 
           & 15.86$\pm$4.82 & 56.23$\pm$1.50 \\
    \textsc{eris} & 33.22$\pm$1.01 & 70.55$\pm$3.95 
           & 79.30$\pm$0.91 & 57.40$\pm$1.99 
           & 34.84$\pm$0.64 & 63.02$\pm$1.98 
           & 88.87$\pm$0.67 & 56.14$\pm$1.47 \\
    \textsc{eris} (+\textsc{dsc}) & 32.83$\pm$0.78 & 69.55$\pm$3.94 
           & 79.07$\pm$0.80 & 56.31$\pm$0.81 
           & 34.68$\pm$0.48 & 60.48$\pm$0.91 
           & 89.00$\pm$0.23 & 55.97$\pm$0.77 \\
    \midrule
    Min.\ Leakage & 33.23$\pm$0.99 & 60.53$\pm$4.83 
           & 79.68$\pm$0.36 & 55.58$\pm$0.76 
           & 34.92$\pm$0.29 & 58.85$\pm$0.93 
           & 88.90$\pm$0.40 & 55.22$\pm$0.64 \\
    \bottomrule
    \end{tabular}
    \label{tab:overall_acc_mia_all}
    \vspace{-5pt}
\end{table*}
\vspace{-5pt}
\begin{figure*}
  \centering
    \captionsetup{skip=3pt} 
\includegraphics[width=0.99\textwidth]{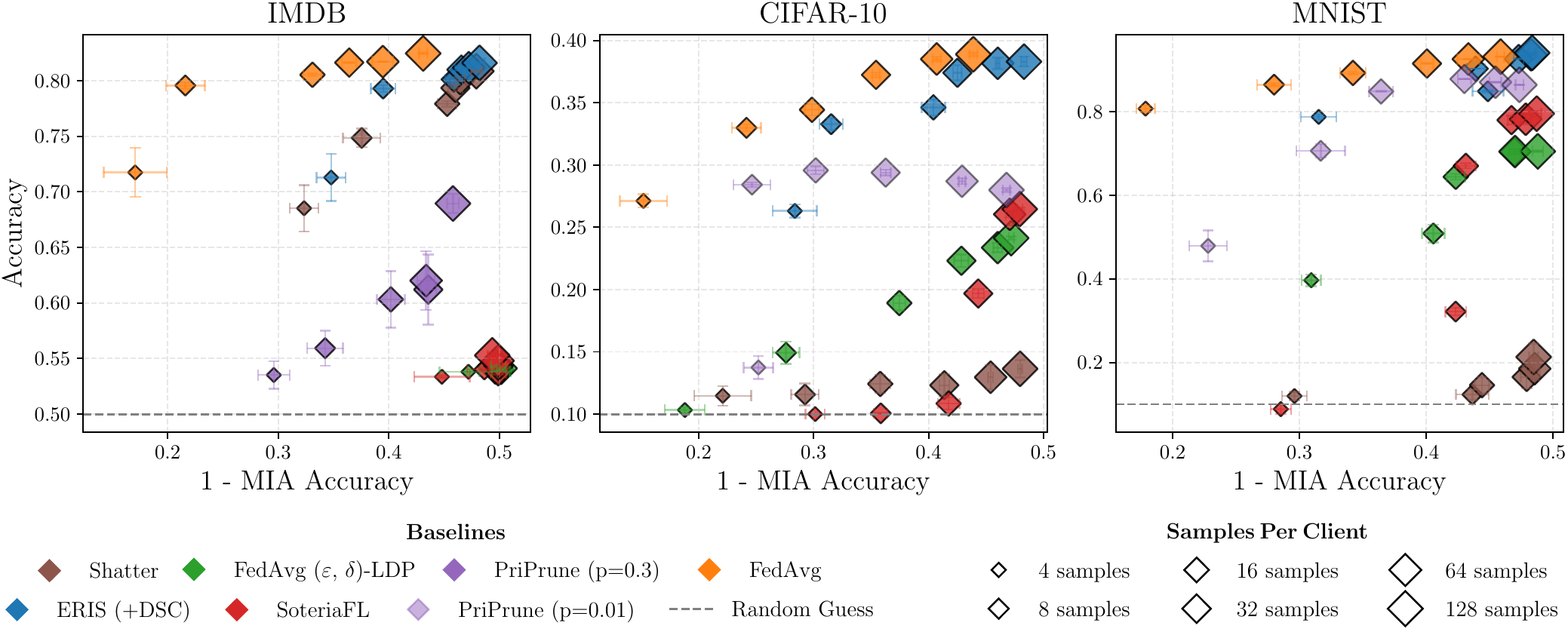}
  \caption{Comparison of test accuracy and (1- MIA) accuracy across varying model capacities (one per dataset) and client-side overfitting levels, controlled via the number of training samples per client.
  } 
  \label{fig:unbiased_acc_priv}
\end{figure*}

\paragraph{Balancing Utility and Privacy.}\label{pg:balancing_u_p} 
To evaluate the utility–privacy trade-off, we benchmark \textsc{eris} with DSC against SOTA baselines across settings that influence memorization and overfitting. First, we vary model capacity, a key factor in memorization, spanning from large-scale 1.3B-parameter model on CNN/DailyMail to a lightweight 62K-parameter model on MNIST. Second, we control overfitting by varying the number of local training samples, from 4 to 128. Figure~\ref{fig:unbiased_acc_priv} shows that \textsc{eris} consistently maintains high utility, on par with non-private FedAvg, while significantly reducing privacy leakage—approaching the idealized \textit{Min. Leakage} scenario. In contrast, privacy-preserving methods such as FedAvg-LDP, PriPrune, and SoteriaFL reduce leakage only at the cost of degraded performance. This confirms prior findings~\citep{liLARGELANGUAGEMODELS2022} that DP can substantially impair utility, particularly for large models, where low leakage may largely reflect the model's inability to learn the task. Notably, while Shatter's partial gradient exchange offers privacy protection comparable to or weaker than \textsc{eris}, its fragmented collaboration substantially slows convergence, especially when models are trained from scratch. Table~\ref{tab:overall_acc_mia_all} summarizes mean utility and MIA accuracy over client training-sample sizes, confirming that \textsc{eris} achieves the strongest utility--privacy trade-off among all baselines. Appendix~\ref{app:nonIID_tradeoff} reports the same experiments under non-IID settings, with consistent conclusions.

\paragraph{Pareto Analysis under Varying Privacy Constraints.}\label{pg:pareto} 
To further investigate the utility--privacy trade-off, we evaluate each method under varying strengths of its privacy mechanism: for DP-based methods, we vary $\epsilon$ and clipping thresholds; for PriPrune, the pruning rate; and for \textsc{eris} and Shatter, we add LDP on top of their 
masking mechanisms. Full settings 
are provided in App.~\ref{app:pareto}. Figure~\ref{fig:pareto_16} plots accuracy against privacy ($1-$MIA accuracy) on CIFAR-10 with 16 training samples per client. 
The Pareto front contains solutions for which no method improves utility without increasing leakage, or reduces leakage without sacrificing utility. 
\textsc{eris} contributes a majority of the Pareto-optimal points, confirming its ability to balance privacy and utility more effectively than the baselines.

\begin{figure}[t]
    \centering

    \begin{minipage}[t]{0.35\textwidth}
        \centering
        \captionsetup{type=figure, skip=4pt}
        \includegraphics[width=\textwidth]{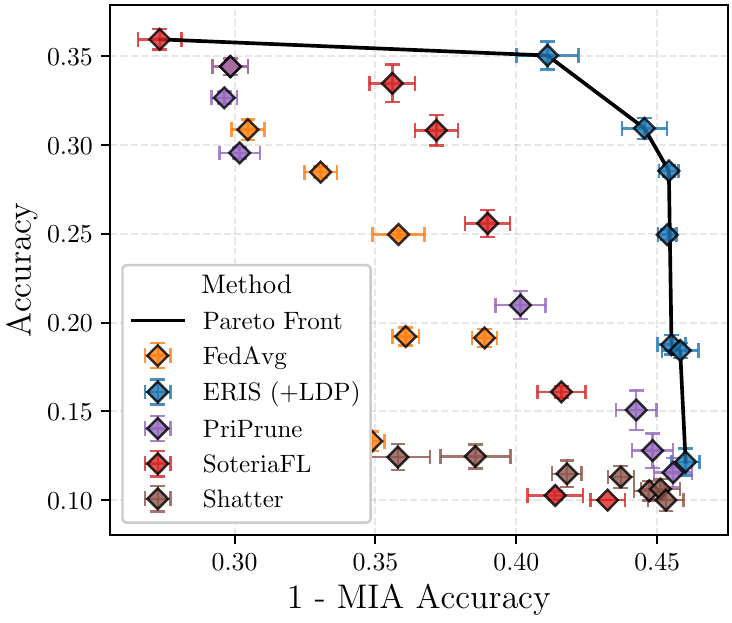}
        \caption{Pareto analysis of utility and privacy on CIFAR-10.}
        \label{fig:pareto_16}
    \end{minipage}
    \hfill
    \begin{minipage}[t]{0.58\textwidth}
        \centering
        \captionsetup{type=table, skip=4pt}
        \caption{Scalability comparison: per-client upload and minimum distribution time per round.}
        \label{tab:comm_efficiency}
        \small
        \renewcommand{\arraystretch}{1.}
        \setlength{\tabcolsep}{1.3pt}
    
        \resizebox{\textwidth}{!}{%
        \begin{tabular}{lcccc}
            \toprule
            \multirow{2}{*}{\textbf{Method}} & 
            \multicolumn{2}{c}{\textbf{CNN/DailyMail}} & 
            \multicolumn{2}{c}{\textbf{CIFAR-10}} \\
            \cmidrule(lr){2-3} \cmidrule(lr){4-5}
            & \textbf{Upload} & \textbf{Dist. Time} & \textbf{Upload} & \textbf{Dist. Time} \\
            \midrule
            FedAvg (-LDP)    & 5.2GB (100\%) & 5200s & 6.6MB (100\%) & 33s \\
            Shatter          & 5.2GB (100\%) & 780s  & 6.6MB (100\%) & 1.32s \\
            PriPrune ($p_1$) & 4.68GB (90\%) & 4940s & 6.53MB (99\%) & 32.84s \\
            PriPrune ($p_2$) & 4.16GB (80\%) & 4680s & 6.27MB (95\%) & 32.17s \\
            PriPrune ($p_3$) & 3.64GB (70\%) & 4420s & 5.9MB (90\%)  & 31.35s \\
            SoteriaFL        & 0.26GB (5\%)  & 2730s  & 0.33MB (5\%)  & 17.32s \\
            \textsc{eris} & 4.68GB (90\%) & 468s & 6.47MB (98\%) & 0.65s \\
            \textsc{eris} (+\textsc{dsc}) & 46.8MB (1\%) & 236s & 0.04MB (0.6\%) & 0.33s \\
            \bottomrule
        \end{tabular}%
        }
    \end{minipage}
    \vspace{-8pt}
\end{figure}
\paragraph{Scalability.}\label{pg:comm_eff}
Table~\ref{tab:comm_efficiency} evaluates scalability in terms of per-client upload size and minimum distribution time per round, assuming 20MB/s bandwidth and using the same experimental setting as Table~\ref{tab:overall_acc_mia_all}. The results show that \textsc{eris} improves scalability through two complementary mechanisms. First, FSA removes the central aggregation bottleneck by distributing communication across aggregators. Even without DSC, \textsc{eris} substantially reduces distribution time, from 5200s to 468s on CNN/DailyMail and from 33s to 0.65s on CIFAR-10, with larger gains as the number of aggregators increases (CNN/DailyMail: $A\!=\!10$; CIFAR-10: $A\!=\!50$). Second, DSC further reduces the transmitted payload, yielding the full \textsc{eris} gains: on CNN/DailyMail, upload size drops from 5.2GB in FedAvg to 46.8MB (1\%) and distribution time to 236s; on CIFAR-10, communication drops to 0.6\% of the full update and distribution time to 0.33s. Unlike prior decentralized methods, \textsc{eris} preserves the full collaborative update after reassembly, retaining the utility of centralized aggregation while scaling to billion-parameter models. A full comparison across additional FL and decentralized FL methods, including per-client upload/download, total 
communication, and distribution time for each dataset/model pair, is reported in Appendix~\ref{app:scalability}. 

\section{Discussion} \label{sec:discussion} 
\paragraph{Limitations.} \label{sec:limitations}
While \textsc{eris} achieves strong empirical and theoretical performance, it also introduces system-level trade-offs. 
First, distributing aggregation shifts coordination to clients, making reliability dependent on aggregator availability, client--aggregator link stability, and client-side computational capacity. 
To capture this effect, we quantify the first two failure modes in Appendix~\ref{app:aggregator_dropout}. 
Our results show that accuracy remains nearly unchanged up to 70\% aggregator dropout and 50\% link failures, with losses mainly due to slower convergence under a fixed round budget. 
Further, to support heterogeneous deployments, FSA could allocate larger shards to stronger aggregators and smaller shards to weaker or bandwidth-constrained nodes.
This flexibility can be achieved since FSA only requires disjoint and complete masks.
The privacy trade-off is that worst-case leakage would then be governed by the largest shard rather than $n/A$. 
Second, \textsc{eris}'s strong privacy guarantees require independent aggregators: under collusion, leakage increases with the number of colluding nodes, although it remains substantially lower than centralized FL, where one server observes full updates. 
The effect of collusion is evaluated in Figure~\ref{fig:collusion} and theoretically assessed in Corollary~\ref{cor:colluding_privacy}. 

\paragraph{Benefits.}
As alluded to in Section~\ref{sec:covergence_theory}, FSA supports the execution of any centralized FL algorithm beyond FedAvg, such as FedAdam~\cite{reddi2021adaptive}, FedYogi~\cite{reddi2021adaptive}, FedNova~\cite{wang2020tackling}, \etc since it exhibits no deviation from the centralized aggregation trajectory.
Furthermore, FSA acts as a modular aggregation layer: transformations such as compression, DP, or robust aggregations can be readily integrated while preserving FSA's benefits.
We view this flexibility and modularity as key advantages of FSA, enabling \textsc{eris} to serve as a general foundation for scalable and privacy-preserving FL.


\section{Conclusion} \label{sec:conclusions}
We introduced \textsc{eris}, an FL framework that jointly addresses privacy, scalability, and utility through Federated Shard Aggregation (FSA). By sharding client updates across multiple aggregators, FSA removes the central server bottleneck, limits the information visible to any single observer, and preserves the centralized FL update after reassembly. We further instantiate Distributed Shifted Compression (DSC) on top of FSA to reduce transmitted and exposed coordinates while preserving favorable convergence behavior. We provide convergence bounds and information-theoretic privacy guarantees, and validate them through extensive experiments across diverse datasets and model scales. Our results show that privacy and scalability in FL can be enhanced without sacrificing utility.



\newpage
\bibliographystyle{unsrtnat} 
\bibliography{references} 

\newpage
\appendix
\onecolumn

\appendixpart
\parttoc

\newpage
\section*{Appendix}
\noindent This appendix is organised as follows. Section~\ref{app:assumptions} restates the main assumptions used in our theoretical analysis. Section~\ref{sec:convergence-analysis} proves that FSA maintains the same convergence behavior as centralized FedAvg in the absence of compression. Section~\ref{sec:utility_comm_eris} presents the convergence and communication analysis for \textsc{eris} with DSC. Section~\ref{app:privacy_proof} provides the privacy guarantees and includes an extension of Theorem~\ref{thm:privacy} to colluding aggregators. Section~\ref{app:experimental_setup} details our experimental setup, including models, hyperparameters, privacy attacks, datasets, licenses, and hardware for full reproducibility. Finally, Sections~\ref{app:weight_distribution}--\ref{app:pareto} report additional experimental results supporting our claims, including evaluations of scalability, compression, data reconstruction attacks, and privacy--utility trade-offs under varying heterogeneity conditions (IID and non-IID) and both biased and unbiased gradient estimators.

\section{Assumptions} \label{app:assumptions}

For clarity and completeness of the Appendix, we restate the core assumptions used in the main theorems—Theorem~\ref{thm:utility_eris_main} and Theorem~\ref{thm:privacy}. 
These include smoothness and unbiased local estimator conditions commonly adopted in the FL literature. 

\begin{assumption}[Smoothness]\label{assumption:smoothness}
There exists some $L \ge 0$, such that for all local functions $f_{i,j}$ (indexed by $i \in [n]$ and $j \in [m]$), we have
\begin{equation}
    \|\nabla f_{i,j}(x_1) - \nabla f_{i,j}(x_2)\| \;\le\; L \,\|x_1 - x_2\|,
    \quad
    \forall\, x_1, x_2 \in \mathbb{R}^d, \label{eq:assump1_part1}
\end{equation}
or equivalently expressed with the following general bound:
\begin{equation}
    f_{i,j}(x_1) \le f_{i,j}(x_2) + \langle \nabla f_{i,j}(x_2), x_1 - x_2\rangle + \frac{L}{2}\|x_1-x_2\|^2.
    \label{eq:assump1_part2}
\end{equation}
\end{assumption}

\begin{assumption}[Unbiased local estimator] \label{assumption:unbiased_estimator}
The gradient estimator $\tilde{\mathbf{g}}_k^t$ is unbiased $\mathbb{E}_t[\tilde{\mathbf{g}}_k^t]=\nabla f_k(\mathbf{x^t)}$ for $k \in \mathcal{N}$, where $\mathbb{E}_t$ takes the expectation conditioned on all history before round $t$. Moreover, there exist constants $\Gamma_1$, $\Gamma_2$, $\Gamma_3$, $\Gamma_3$, and $\theta$ such that: 
\begin{equation}
    \mathbb{E}_t \bigl[\frac{1}{K} \sum_{k=1}^K ||\tilde{\mathbf{g}}_k^t -\nabla f_k(\mathbf{x^t)} ||^2\bigr] \leq \Gamma_1 \Delta^t + \Gamma_2
\tag{11a} \label{eq:assump4a_app} \end{equation}
\begin{equation}
    \mathbb{E}_t[\Delta^{t+1}] \leq (1-\theta)\Delta^t + \Gamma_3||\nabla f(\mathbf{x}^t)||^2+ \Gamma_4 \mathbb{E}_t[||\mathbf{x}^{t+1}-\mathbf{x}^t||^2]
\tag{11b} \label{eq:assump4b} \end{equation}
\begin{remark}
    The parameters $\Gamma_1$ and $\Gamma_2$ capture the variance of the gradient estimators, e.g., $\Gamma_1 = \Gamma_2 = 0$ if the client computes local full gradient $\tilde{\mathbf{g}}^t_i = \nabla f_i(\mathbf{x^t)}$, and $\Gamma_1 \neq 0$
    and $\Gamma_2 = 0$ if the client uses variance-reduced gradient estimators such as SVRG/SAGA. 
\end{remark}
\end{assumption}

\section{Convergence Equivalence of FSA} \label{sec:convergence-analysis}
This section shows that Federated Shard Aggregation (FSA)---Algorithm~\ref{alg:eris} instantiated without DSC, i.e., with the identity compressor $\mathcal{C}_k^t\!=\!\mathrm{Id}$ and reference vectors fixed to zero---produces exactly the same global iterate sequence as the standard single-server algorithm (e.g., \emph{FedAvg}). Consequently, every convergence guarantee proved for FedAvg carries over verbatim. The proof is algebraic and does not rely on additional smoothness or convexity assumptions beyond those stated in Appendix~\ref{app:assumptions}.

\paragraph{Notation.}  Recall that client $k$ holds $S_k$ data points and that $S:=\sum_{k=1}^K S_k$.  Let $\tilde{\mathbf{g}}_k^t$ denote the (possibly stochastic) gradient that client~$k$ transmits at communication round~$t$ and write $\tilde{\mathbf{g}}^t=\frac{1}{S}\sum_{k=1}^K S_k \; \tilde{\mathbf{g}}_k^t$ for the sample--weighted mean gradient.

\begin{theorem}[Convergence equivalence of FSA]\label{thm:convergence_equivalence} Run Algorithm~\ref{alg:eris} with $A\ge1$ aggregators, $\mathcal{C}_k^t=\mathrm{Id}$, and $\mathbf{s}_k^t=\mathbf 0$ for all $k,t$. Let ${\mathbf{x}}^t$ with ${t\ge0}$ be the resulting iterates and let ${\tilde{\mathbf{x}}}^t$ be the iterates obtained by FedAvg ($A=1$) using the same initialization, learning rates ${\lambda_t}$, and client gradients ${\tilde{\mathbf{g}}}_k^t$. Then for every round $t\ge0$
\refstepcounter{equation}
\begin{equation}
    \mathbf{x}^t=\tilde{\mathbf{x}}^t.
\end{equation}
Hence all convergence bounds that hold for FedAvg under Assumptions~\ref{assumption:smoothness} and \ref{assumption:unbiased_estimator} (with $\omega=0$) apply unchanged to \textsc{eris} with FSA only.
\end{theorem}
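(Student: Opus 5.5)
The proof goes by induction on $t$, using only the disjointness and completeness of the masks from Section~\ref{sec:fsa}. We treat the reassembled client models $\mathbf{x}_k^{t+1}$ as the global iterate $\mathbf{x}^{t+1}$; they are identical across clients because every client applies the same masks to the same broadcast shards. We also read each aggregator's shard as $\mathbf{x}_{(a)}^t=\mathbf{m}_{(a)}^t\odot\mathbf{x}^t$. Both algorithms use the same aggregation weights, either the uniform $1/K$ of Eq.~\eqref{eq:fsa_aggregation} or the sample weights $S_k/S$. We carry out the argument with a generic weight $w_k$ (so $\tilde{\mathbf{g}}^t=\sum_k w_k\tilde{\mathbf{g}}_k^t$), which makes the identity independent of this choice.

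The base case holds because $\mathbf{x}^0=\tilde{\mathbf{x}}^0$ by shared initialization. For the inductive step, assume $\mathbf{x}^t=\tilde{\mathbf{x}}^t$. Then both runs evaluate client gradients at the same point, and by hypothesis these gradients $\tilde{\mathbf{g}}_k^t$ are the same in both runs. With $\mathcal{C}_k^t=\mathrm{Id}$ and $\mathbf{s}_k^t=\mathbf{0}$, Algorithm~\ref{alg:eris} sets $\mathbf{v}_k^t=\tilde{\mathbf{g}}_k^t$. The shift-compensation lines are inactive: $\mathbf{s}_{(a)}^0=\mathbf{0}$ and no reference updates are used, so $\mathbf{s}_{(a)}^t$ never enters. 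Aggregator $a$ therefore computes
\begin{equation*}
\mathbf{v}_{(a)}^t=\sum_k w_k\,\tilde{\mathbf{g}}_k^t\odot\mathbf{m}_{(a)}^t=\mathbf{m}_{(a)}^t\odot\tilde{\mathbf{g}}^t,
\end{equation*}
because the Hadamard product is linear. Its shard update is $\mathbf{x}_{(a)}^{t+1}=\mathbf{m}_{(a)}^t\odot(\mathbf{x}^t-\lambda_t\tilde{\mathbf{g}}^t)$. Reassembling and using idempotence of binary masks ($\mathbf{m}\odot\mathbf{m}=\mathbf{m}$) together with completeness gives
\begin{equation*}
\mathbf{x}^{t+1}=\sum_a\mathbf{m}_{(a)}^t\odot\mathbf{m}_{(a)}^t\odot(\mathbf{x}^t-\lambda_t\tilde{\mathbf{g}}^t)=\mathbf{x}^t-\lambda_t\tilde{\mathbf{g}}^t=\tilde{\mathbf{x}}^{t+1}.
\end{equation*}
Disjointness is not needed for this identity; it only ensures that no coordinate is updated twice. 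Completeness is what recovers every coordinate.

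The one subtle point, which we expect to be the main obstacle, is the bookkeeping when masks change across rounds. Aggregator $a$ may be responsible for different coordinates at round $t$ than at round $t-1$, so it must hold $\mathbf{m}_{(a)}^t\odot\mathbf{x}^t$ at the start of round $t$. This is immediate if we interpret $\mathbf{x}_{(a)}^t$ as the restriction of the common reassembled model $\mathbf{x}^t$, which every client (and hence every client-side aggregator) possesses after reassembly. We will state this interpretation explicitly so that the induction closes.

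Finally, because the iterate sequences coincide pathwise for every realization of the stochastic gradients, their distributions coincide as well. Any bound on $\frac{1}{T}\sum_t\mathbb{E}\|\nabla f(\mathbf{x}^t)\|^2$ proved for FedAvg under Assumptions~\ref{assumption:smoothness} and~\ref{assumption:unbiased_estimator} therefore transfers verbatim. In particular, Theorem~\ref{thm:utility_eris_main} applies with $\omega=0$.
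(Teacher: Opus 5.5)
Your proof is correct and follows the same route as the paper's: linearity of the Hadamard product plus completeness of the masks shows that the sharded step reproduces $\mathbf{x}^{t+1}=\mathbf{x}^t-\lambda_t\tilde{\mathbf{g}}^t$, and induction on $t$ finishes the argument. You are also more careful than the paper's sketch, which simply sums the aggregated shard gradients, because you make explicit the reassembly step, the bookkeeping for time-varying masks, and the choice of weights ($1/K$ in Eq.~\eqref{eq:fsa_aggregation} versus $S_k/S$ in the appendix).

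One point to tighten is your closing aside that Theorem~\ref{thm:utility_eris_main} applies with $\omega=0$. That theorem prescribes $\gamma_t=1/\sqrt{2}\neq 0$, so the shifts are not zero there. Add the one-line observation that, with the identity compressor, the aggregator reference tracks $\frac{1}{K}\sum_k\mathbf{s}_k^t$ and the shift cancels: $\frac{1}{K}\sum_k\mathbf{s}_k^t+\frac{1}{K}\sum_k(\tilde{\mathbf{g}}_k^t-\mathbf{s}_k^t)=\frac{1}{K}\sum_k\tilde{\mathbf{g}}_k^t$. Hence the iterates coincide with those of your shift-free run.
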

\begin{proof}[Sketch]
Partition each client gradient into $A$ disjoint coordinate shards using the categorical masks $\{\mathbf m_{(a)}^t\}_{a=1}^A$ introduced in Section~\ref{sec:eris_pipeline}: $\tilde{\mathbf{g}}_{k,(a)}^t=\tilde{\mathbf{g}}_k^t\odot\mathbf{m}^t_{(a)}$. Because the masks are disjoint and sum to the all--ones vector, the original gradient decomposes exactly as $\tilde{\mathbf{g}}_k^t=\sum_{a=1}^A\tilde{\mathbf g}_{k,(a)}^t$. Aggregator $a$ forms the weighted average of its shard
\begin{equation}
    \bar{\mathbf g}_{(a)}^t \;:=\; \frac{1}{S} \sum_{k=1}^K S_k\,\tilde{\mathbf g}_{k,(a)}^t.
\end{equation}
Summing over all aggregators and swapping summation order yields
\begin{equation}
    \sum_{a=1}^A \bar{\mathbf g}_{(a)}^t
      = \frac{1}{S} \sum_{k=1}^K S_k \sum_{a=1}^A \tilde{\mathbf g}_{k,(a)}^t
      = \frac{1}{S} \sum_{k=1}^K S_k \; \tilde{\mathbf g}_k^t
      = \tilde{\mathbf g}^t.
\end{equation}

FSA therefore updates the global model via $\mathbf{x}^{t+1}=\mathbf{x}^{t} -\lambda_t\sum_{a=1}^A\bar{\mathbf{g}}_{(a)}^t = \mathbf{x}^{t} - \lambda_t \tilde{\mathbf g}^t$, which is exactly the FedAvg rule. By induction on~$t$ the iterates coincide.
\end{proof}

\begin{remark}
The identity above is purely algebraic, hence it remains valid when clients perform multiple local SGD steps, when the data are non IID, or when the global objective is nonconvex (e.g.,~\citep{mcmahan_FL, liConvergenceFedAvgNonIID2020, liConvergenceTheoryFederated2022}). The key insight is that splitting the gradient vector dimension-wise introduces no additional approximation error; the final aggregated gradient is mathematically identical to that obtained by a single server aggregating all client gradients in one place. This ensures that the convergence behavior of FSA matches that of traditional federated learning approaches, while its sole effect is to distribute network load.
\end{remark}

\section{Convergence and Utility for \textsc{eris} with DSC} \label{sec:utility_comm_eris}
In this section, we present the proof of Theorem \ref{thm:utility_eris_main} (Utility of \textsc{eris} with DSC), modifying the general proof strategy of \citep{soteriafl2022} to accommodate our decentralized setting, model partitioning, and the absence of differential privacy. 

\subsection{Proof of Theorem \ref{thm:utility_eris_main}}
\begin{proof}
    Let $\mathbb{E}_t$ denote the expectation conditioned on the full history up to round $t$. By invoking Theorem~\ref{thm:convergence_equivalence}, we simplify the analysis by omitting model partitioning and treating $\mathbf{v}^t$ as the aggregated update. Thus, the update rule becomes $\mathbf{x}^{t+1} = \mathbf{x}^t - \lambda_t \mathbf{v}^t$. We now apply this rule within the smoothness inequality~\eqref{eq:assump1_part2}:

    \begin{equation}
    \mathbb{E}_t[f(\mathbf{x}^{t+1})] \le \mathbb{E}_t \left[f(\mathbf{x}^{t}) - \lambda_t \langle \nabla f(\mathbf{x}^{t}), \mathbf{v}^t\rangle + \frac{\lambda^2L}{2}\|\mathbf{v}^t\|^2\right] 
    \label{eq:F0}
    \end{equation}

    First, to verify the unbiased nature of $\mathbf{v}^t$, we consider:
    \begin{align}
        \mathbb{E}_t [\mathbf{v}^t] &= \mathbb{E}_t\left[\mathbf{s}^t+\frac{1}{K} \sum_{k=1}^K \mathbf{v}_k^t\right] 
         \nonumber \\ 
        &=\mathbb{E}_t \left[\frac{1}{K} \sum_{k=1}^K \mathbf{s}_k^t+\frac{1}{K} \sum_{k=1}^K \mathcal{C}_k^t(\tilde{\mathbf{g}}_k^t-\mathbf{s}_k^t) \right]
        \nonumber \\ 
        &\overset{(\ref{eq:compressor_operator})}{=} \mathbb{E}_t \left[\frac{1}{K}\sum_{k=1}^K \tilde{\mathbf{g}}_k^t\right] = \frac{1}{K} \sum_{k=1}^K \mathbb{E}_t [\tilde{\mathbf{g}}_k^t] \overset{(a)}{=} \frac{1}{K}\sum_{k=1}^K \nabla f_k(\mathbf{x}^t) = \nabla f(\mathbf{x}^t)
        \label{eq:F4}
    \end{align}
    where (a) due to Assumption \ref{assumption:unbiased_estimator}, which states that each $\tilde{\mathbf{g}}_k^t$ is an unbiased estimator of $\nabla f_k(\mathbf{x^t)}$ (i.e., $\mathbb{E}_t[\tilde{\mathbf{g}}_k^t]=\nabla f_k(\mathbf{x^t)}$).
    
    Substituting Equation \eqref{eq:F4} into \eqref{eq:F0}, we obtain:
    \begin{equation}
        \mathbb{E}_t[f(\mathbf{x}^{t+1})] \le \mathbb{E}_t \left[f(\mathbf{x}^{t}) - \lambda_t ||\nabla f(\mathbf{x}^{t})||^2 + \frac{\lambda^2L}{2}\|\mathbf{v}^t\|^2\right] 
        \label{eq:F6}
    \end{equation}
    We further bound the term $\mathbb{E}_t[||\mathbf{v}^t||^2]$ in Lemma \ref{lemma:6}, whose proof is available in the Appendix \ref{proof:lemma6}.

    \begin{lemma}
    Consider that $\mathbf{v}^t$ is constructed according to Algorithm \ref{alg:eris}, it holds that

    \begin{equation}
        \mathbb{E}_t[\|\mathbf{v}^t\|^2] \leq \mathbb{E}_t \left[
        \frac{(1 + \omega)}{K^2} \sum_{k=1}^K \| \tilde{\mathbf{g}}_k^t - \nabla f_k(\mathbf{x}^t) \|^2
        \right] + \frac{\omega}{K^2} \sum_{k=1}^{K} \|\nabla f_k(\mathbf{x}^t) - \mathbf{s}_k^t\|^2 + \|\nabla f(\mathbf{x}^t)\|^2.
        \label{eq:F7}
    \end{equation} 
    \label{lemma:6}
    \end{lemma}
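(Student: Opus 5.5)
We need to bound $\mathbb{E}_t\|\mathbf{v}^t\|^2$ where
\[
\mathbf{v}^t=\mathbf{s}^t+\frac1K\sum_{k=1}^K\mathbf{v}_k^t,\qquad \mathbf{s}^t=\frac1K\sum_k\mathbf{s}_k^t .
\]
The identity for $\mathbf{s}^t$ holds because the aggregator references are updated with the averaged shards, so $\sum_a \mathbf{s}^t_{(a)}$ tracks the average of the client references when all start at zero. This is exactly the bookkeeping used in \eqref{eq:F4}.

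The plan is a bias--variance decomposition around the mean $\nabla f(\mathbf{x}^t)$ computed in \eqref{eq:F4}. Since $\mathbb{E}_t[\mathbf{v}^t]=\nabla f(\mathbf{x}^t)$, we have
\[
\mathbb{E}_t\|\mathbf{v}^t\|^2=\|\nabla f(\mathbf{x}^t)\|^2+\mathbb{E}_t\|\mathbf{v}^t-\nabla f(\mathbf{x}^t)\|^2 .
\]
It therefore suffices to bound the variance term by the two sums on the right-hand side of \eqref{eq:F7}.

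Write $\mathbf{e}_k:=\tilde{\mathbf{g}}_k^t-\mathbf{s}_k^t$, so that
\[
\mathbf{v}^t-\nabla f(\mathbf{x}^t)=\frac1K\sum_k\bigl[\mathcal{C}_k^t(\mathbf{e}_k)-\mathbf{e}_k\bigr]+\frac1K\sum_k\bigl[\tilde{\mathbf{g}}_k^t-\nabla f_k(\mathbf{x}^t)\bigr].
\]
Conditioning first on the gradient samples and averaging over the compression randomness, the first bracket has zero mean. The cross term therefore vanishes, and the two pieces split.

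For the compression piece, use independence of the compressors across clients. This kills the cross terms between different $k$ and gives at most $\frac{\omega}{K^2}\sum_k\mathbb{E}_t\|\mathbf{e}_k\|^2$ by Definition~\ref{def:compression_operator}. Next, decompose
\[
\mathbf{e}_k=(\tilde{\mathbf{g}}_k^t-\nabla f_k(\mathbf{x}^t))+(\nabla f_k(\mathbf{x}^t)-\mathbf{s}_k^t).
\]
The second summand is $\mathcal{F}_t$-measurable and the first has conditional mean zero, so
\[
\mathbb{E}_t\|\mathbf{e}_k\|^2=\mathbb{E}_t\|\tilde{\mathbf{g}}_k^t-\nabla f_k\|^2+\|\nabla f_k-\mathbf{s}_k^t\|^2 .
\]

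For the stochastic-gradient piece, assume the clients sample independently given the history. Then
\[
\mathbb{E}_t\Bigl\|\frac1K\sum_k(\tilde{\mathbf{g}}_k^t-\nabla f_k)\Bigr\|^2=\frac1{K^2}\sum_k\mathbb{E}_t\|\tilde{\mathbf{g}}_k^t-\nabla f_k\|^2 .
\]
Adding it to the $\omega$-weighted copy from the compression piece gives the coefficient $(1+\omega)/K^2$. Collecting terms yields exactly \eqref{eq:F7}.

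The main obstacle is justifying the independence assumptions needed to drop cross terms: independence of the $\mathcal{C}_k^t$ across clients, and of the $\tilde{\mathbf{g}}_k^t$ across clients conditional on the history. I would state these explicitly, as is standard in \citep{soteriafl2022}, since the paper's assumptions leave them implicit. A second subtle point is the bookkeeping that $\sum_a\mathbf{s}^t_{(a)}$ equals the average client reference. I would verify it by induction from $\mathbf{s}^0=\mathbf{0}$, using the disjointness and completeness of the masks.
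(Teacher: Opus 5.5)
Your proof is correct and is essentially the paper's argument. The paper also splits $\mathbf{v}^t$ into the zero-mean compression error plus $\frac{1}{K}\sum_k\tilde{\mathbf{g}}_k^t$, drops the cross term, bounds the compression error by $\frac{\omega}{K^2}\sum_k\|\tilde{\mathbf{g}}_k^t-\mathbf{s}_k^t\|^2$ and splits that using unbiasedness; it then applies to $\|\frac{1}{K}\sum_k\tilde{\mathbf{g}}_k^t\|^2$ the same bias--variance step you perform up front. Stating explicitly that compressors and stochastic gradients are independent across clients is an improvement, since the paper's proof drops those cross-client terms without comment.
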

    To further our analysis, we now derive upper bounds for the first two terms on the right-hand side of Equation \eqref{eq:F7}. The first term can be controlled using Equation \eqref{eq:assump4a_app} from Assumption \ref{assumption:unbiased_estimator}, yielding the bound $\Gamma_1 \Delta^t + \Gamma_2$. Next, we establish that the second term decreases over time, as formalized in the following lemma (proof available in Appendix \ref{proof:lemma7}).

    \begin{lemma}
    Let Assumption \ref{assumption:smoothness} hold, and let the shift $\mathbf{s}_k^{t+1}$ be updated according to Algorithm \ref{alg:eris}. Then, for $\gamma_t = \sqrt{\frac{1+2\omega}{2(1+\omega)^3}}$, we have:
    \begin{align}
        \mathbb{E}_t \left[
        \frac{1}{K} \sum_{k=1}^{K} \|\nabla f_k(\mathbf{x}^{t+1}) - \mathbf{s}_k^{t+1} \|^2
        \right] 
        &\leq \mathbb{E}_t \left[
        \left(1 - \frac{1}{2(1+\omega)}\right) \frac{1}{K} \sum_{k=1}^{K} \|\nabla f_k(\mathbf{x}^t) - \mathbf{s}_k^t\|^2 \right. \notag \\
        &\quad + \frac{1}{(1+\omega)K} \sum_{k=1}^{K} \|\tilde{\mathbf{g}}_k^t - \nabla f_k(\mathbf{x}^t) \|^2 \notag \\
        &\quad \left. + 2(1+\omega)L^2 \|\mathbf{x}^{t+1} - \mathbf{x}^t\|^2
        \right]. \label{eq:lemma7}
    \end{align}
    \label{lemma:7}
    \end{lemma}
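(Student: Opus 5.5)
We prove Lemma~\ref{lemma:7} by a per-client bias--variance decomposition of the shift error, using the unbiasedness and variance properties of $\mathcal{C}_k^t$, and then absorb the drift of the local gradient between rounds via smoothness. For each client $k$, we write
\[
\nabla f_k(\mathbf{x}^{t+1}) - \mathbf{s}_k^{t+1} = \bigl(\nabla f_k(\mathbf{x}^{t+1}) - \nabla f_k(\mathbf{x}^t)\bigr) + \bigl(\nabla f_k(\mathbf{x}^t) - \mathbf{s}_k^t - \gamma_t \mathbf{v}_k^t\bigr),
\]
and apply Young's inequality $\|a+b\|^2 \le (1+\eta)\|b\|^2 + (1+1/\eta)\|a\|^2$. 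By Assumption~\ref{assumption:smoothness}, the first piece is at most $L^2\|\mathbf{x}^{t+1}-\mathbf{x}^t\|^2$. We will choose $\eta$ so that $(1+1/\eta)$ is at most $2(1+\omega)$; for instance, $\eta = 1/(2\omega+1)$ gives $1+1/\eta = 2(1+\omega)$.

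The core step is bounding $\mathbb{E}_t\|\nabla f_k(\mathbf{x}^t) - \mathbf{s}_k^t - \gamma_t \mathbf{v}_k^t\|^2$. Set $\mathbf{e}_k := \nabla f_k(\mathbf{x}^t)-\mathbf{s}_k^t$ and $\mathbf{u}_k := \tilde{\mathbf{g}}_k^t - \mathbf{s}_k^t$. Conditioning first on $\tilde{\mathbf{g}}_k^t$, Definition~\ref{def:compression_operator} gives
\[
\mathbb{E}\|\mathbf{e}_k - \gamma_t\mathcal{C}_k^t(\mathbf{u}_k)\|^2 = \|\mathbf{e}_k - \gamma_t\mathbf{u}_k\|^2 + \gamma_t^2\,\mathbb{E}\|\mathcal{C}_k^t(\mathbf{u}_k)-\mathbf{u}_k\|^2 \le \|\mathbf{e}_k-\gamma_t\mathbf{u}_k\|^2 + \gamma_t^2\omega\|\mathbf{u}_k\|^2.
\]
Next we take $\mathbb{E}_t$ over $\tilde{\mathbf{g}}_k^t$. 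Unbiasedness (Assumption~\ref{assumption:unbiased_estimator}) gives $\mathbb{E}_t\mathbf{u}_k=\mathbf{e}_k$, with variance $\sigma_k^2:=\mathbb{E}_t\|\tilde{\mathbf{g}}_k^t-\nabla f_k(\mathbf{x}^t)\|^2$. Hence
\[
\mathbb{E}_t\|\mathbf{e}_k - \gamma_t\mathbf{u}_k\|^2 = (1-\gamma_t)^2\|\mathbf{e}_k\|^2+\gamma_t^2\sigma_k^2, \qquad \mathbb{E}_t\|\mathbf{u}_k\|^2 = \|\mathbf{e}_k\|^2+\sigma_k^2.
\]
Combining these yields
\[
\bigl[(1-\gamma_t)^2+\gamma_t^2\omega\bigr]\|\mathbf{e}_k\|^2 + \gamma_t^2(1+\omega)\sigma_k^2.
\]
The contraction factor $(1-\gamma)^2+\omega\gamma^2$ is minimized at $\gamma = 1/(1+\omega)$, where it equals $\omega/(1+\omega)$.

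The remaining work is bookkeeping, and we must check it with the prescribed $\gamma_t=\sqrt{(1+2\omega)/(2(1+\omega)^3)}$. The target coefficients are $1-\frac{1}{2(1+\omega)}$ on $\|\mathbf{e}_k\|^2$ and $\frac{1}{1+\omega}$ on $\sigma_k^2$, after multiplying by $(1+\eta)$. With $\gamma_t^2 = \frac{1+2\omega}{2(1+\omega)^3}$ and $1+\eta = \frac{2(1+\omega)}{1+2\omega}$, the variance coefficient is
\[
(1+\eta)\gamma_t^2(1+\omega) = \frac{2(1+\omega)}{1+2\omega}\cdot\frac{1+2\omega}{2(1+\omega)^3}\cdot(1+\omega) = \frac{1}{1+\omega},
\]
which matches exactly and suggests the intended choice of $\eta$. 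It then remains to verify that
\[
(1+\eta)\bigl[(1-\gamma_t)^2+\omega\gamma_t^2\bigr] \le 1-\frac{1}{2(1+\omega)}.
\]
We will expand $(1+\omega)\gamma_t^2 - 2\gamma_t + 1$, substitute, and reduce to a one-variable inequality in $\omega\ge0$. The main obstacle is this scalar inequality. If the Young split with $\eta=1/(2\omega+1)$ turns out too loose, we will re-tune $\eta$ while keeping $1+1/\eta\le 2(1+\omega)$, noting that the smoothness term's coefficient in \eqref{eq:lemma7} has slack. As a fallback, we will check the endpoints $\omega=0$ (where $\gamma_t=1/\sqrt2$) and $\omega\to\infty$, and monotonicity in between. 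Finally, averaging over $k$ and taking $\mathbb{E}_t$ gives \eqref{eq:lemma7}.
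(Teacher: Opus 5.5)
Your proposal is correct and follows essentially the same route as the paper. It uses the same Young split with $\beta_t=\eta=1/(1+2\omega)$ (so $1+1/\eta=2(1+\omega)$), and your two-stage bias--variance decomposition gives exactly the paper's contraction factor $1-2\gamma_t+(1+\omega)\gamma_t^2$ and variance coefficient $\gamma_t^2(1+\omega)$. The scalar inequality you flag as the main obstacle, which the paper only asserts, does hold: with $a=1+\omega$ it reduces to $\gamma_t\ge \frac{8a-3}{8a^2}$, and squaring $\gamma_t^2=\frac{2a-1}{2a^3}$ turns this into $32a(2a-1)\ge(8a-3)^2$, i.e.\ $16a\ge 9$, which is true for all $\omega\ge 0$.
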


    For clarity, we introduce the notation: $\mathcal{S}^t := \frac{1}{K} \sum_{k=1}^K ||\nabla f_k(\mathbf{x}^t ) - \mathbf{s}^t_k||^2$. For some $\alpha \geq 0, \beta \geq 0$, we now define a potential function to analyze the convergence behavior:
    \begin{equation}
        \Phi_t :=f(\mathbf{x}^t)-f^*+\alpha L \Delta^t + \frac{\beta}{L}\mathcal{S}^t, \label{eq:pot_function}
    \end{equation}
    Using Lemmas \ref{lemma:6} and \ref{lemma:7}, we demonstrate in Lemma \ref{lemma:8} that this potential function decreases in expectation at each iteration (proof provided in Appendix \ref{proof:lemma8}).

    \begin{lemma}
    Under Assumptions \ref{assumption:smoothness} and \ref{assumption:unbiased_estimator}, if the learning rate is chosen as
    \begin{equation}
        \lambda_t \triangleq \lambda \leq \min \left(
        \frac{1}{(1 + 2\alpha \Gamma_4 + 4\beta (1 + \omega) + 2\alpha \Gamma_3/\lambda^2)L},
        \frac{\sqrt{\beta K}}{\sqrt{1 + 2\alpha \Gamma_4 + 4\beta (1 + \omega)}(1 + \omega)L}
        \right),
    \end{equation}
    where $\alpha = \frac{3\beta \Gamma_1}{2(1+\omega)\theta L^2}$ for any $\beta > 0$, and the shift step size $\gamma_t$ is defined as in Lemma \ref{lemma:7}, it follows that for every round $t \geq 0$, the expected potential function satisfies the following bound:    
    \begin{equation}
        \mathbb{E}_t[\Phi_{t+1}] \leq \Phi_t - \frac{\lambda_t}{2} \|\nabla f(\mathbf{x}^t)\|^2 + \frac{3\beta \Gamma_2}{2(1 + \omega)L}.
        \label{eq:lemma8}
    \end{equation}

    \begin{remark}
        Since the last term is generally a small constant during time (see Assumption \ref{assumption:unbiased_estimator}) and $\frac{\lambda_t}{2} \|\nabla f(\mathbf{x}^t)\|^2$ is positive, Equation \eqref{eq:lemma8} indicates that the potential decrease over the time.
    \end{remark}
    \label{lemma:8}
    \end{lemma}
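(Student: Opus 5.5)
Let $\mathcal{S}^t$ and $\Phi_t$ be as in \eqref{eq:pot_function}. The plan is to expand $\mathbb{E}_t[\Phi_{t+1}]$ into its three non-constant pieces, bound each using the smoothness descent inequality, Lemmas~\ref{lemma:6}--\ref{lemma:7}, and Assumption~\ref{assumption:unbiased_estimator}, and then choose $\alpha,\beta,\lambda$ so that every term in $\Delta^t$, $\mathcal{S}^t$ and $\|\mathbf{v}^t\|^2$ cancels or is dominated. Throughout, write $\mathbf{x}^{t+1}-\mathbf{x}^t=-\lambda\mathbf{v}^t$. The $\|\mathbf{v}^t\|^2$ terms then carry the common factor $\lambda^2$.

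\textbf{Function value term.} Start from \eqref{eq:F6}, which gives $\mathbb{E}_t f(\mathbf{x}^{t+1})\le f(\mathbf{x}^t)-\lambda\|\nabla f(\mathbf{x}^t)\|^2+\frac{\lambda^2L}{2}\mathbb{E}_t\|\mathbf{v}^t\|^2$.

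\textbf{$\Delta$ term.} Apply \eqref{eq:assump4b}:
\[
\alpha L\,\mathbb{E}_t\Delta^{t+1}\le \alpha L(1-\theta)\Delta^t+\alpha L\Gamma_3\|\nabla f(\mathbf{x}^t)\|^2+\alpha L\Gamma_4\lambda^2\mathbb{E}_t\|\mathbf{v}^t\|^2 .
\]

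\textbf{$\mathcal{S}$ term.} Apply Lemma~\ref{lemma:7} together with \eqref{eq:assump4a_app}:
\[
\tfrac{\beta}{L}\mathbb{E}_t\mathcal{S}^{t+1}\le \tfrac{\beta}{L}\Bigl(1-\tfrac{1}{2(1+\omega)}\Bigr)\mathcal{S}^t+\tfrac{\beta}{(1+\omega)L}(\Gamma_1\Delta^t+\Gamma_2)+2\beta(1+\omega)L\lambda^2\mathbb{E}_t\|\mathbf{v}^t\|^2 .
\]

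Collecting, the coefficient of $\mathbb{E}_t\|\mathbf{v}^t\|^2$ is $\frac{\lambda^2L}{2}\,(1+2\alpha\Gamma_4+4\beta(1+\omega))=:\frac{\lambda^2 L}{2}M$. Bound $\mathbb{E}_t\|\mathbf{v}^t\|^2$ by Lemma~\ref{lemma:6}, using \eqref{eq:assump4a_app} on its first term, which gives $\frac{1+\omega}{K}(\Gamma_1\Delta^t+\Gamma_2)$, and writing its second term as $\frac{\omega}{K}\mathcal{S}^t$. Then match coefficients as follows.

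(i) For $\mathcal{S}^t$, we need $\frac{\lambda^2LM}{2}\frac{\omega}{K}\le\frac{\beta}{2(1+\omega)L}$. This is exactly what the second learning-rate constraint $\lambda^2\le \beta K/(M(1+\omega)^2L^2)$ gives, with room to spare since $\omega\le 1+\omega$.

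(ii) For $\Delta^t$, the contraction $-\alpha L\theta\Delta^t$ must absorb $\frac{\beta\Gamma_1}{(1+\omega)L}$ plus $\frac{\lambda^2LM(1+\omega)}{2K}\Gamma_1$. Under constraint (i) the latter is at most $\frac{\beta\Gamma_1}{2(1+\omega)L}$, so the total is $\frac{3\beta\Gamma_1}{2(1+\omega)L}$. This is precisely why $\alpha=\frac{3\beta\Gamma_1}{2(1+\omega)L^2\theta}$ is chosen.

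(iii) The $\Gamma_2$ terms add up, by the same estimate, to at most $\frac{3\beta\Gamma_2}{2(1+\omega)L}$, which is the additive constant in \eqref{eq:lemma8}.

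(iv) For $\|\nabla f(\mathbf{x}^t)\|^2$, the coefficient is $-\lambda+\frac{\lambda^2LM}{2}+\alpha L\Gamma_3$. We need this to be at most $-\lambda/2$, i.e.\ $\lambda L(M+2\alpha\Gamma_3/\lambda^2)\le 1$. That is the (implicitly defined) first learning-rate constraint.

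The main obstacle is the bookkeeping in (ii)--(iii). In particular, we must check that the $\Gamma_1\Delta^t$ contributions coming from Lemma~\ref{lemma:6} really are dominated via the $\sqrt{\beta K}$ constraint and not only by using $K\ge1$. The precise factor $1/K$ versus $1/K^2$ in Lemma~\ref{lemma:6} must be tracked carefully. If the constants come out tight there, I would fall back to splitting $\beta$ between the two $\Gamma_1$ sources. Finally, note that $\Phi_t$ already contains $f(\mathbf{x}^t)-f^*$, so subtracting $f^*$ on both sides is harmless. The lemma follows, and Theorem~\ref{thm:utility_eris_main} is then obtained by telescoping \eqref{eq:lemma8} over $t$ and using $\Phi_T\ge0$.
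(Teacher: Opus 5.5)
Your proposal is correct and follows the paper's proof step for step: you combine \eqref{eq:F6}, \eqref{eq:assump4b}, and Lemma~\ref{lemma:7} with \eqref{eq:assump4a_app}, bound $\mathbb{E}_t\|\mathbf{v}^t\|^2$ through Lemma~\ref{lemma:6} as $\frac{1+\omega}{K}(\Gamma_1\Delta^t+\Gamma_2)+\frac{\omega}{K}\mathcal{S}^t+\|\nabla f(\mathbf{x}^t)\|^2$, and match coefficients, using the $\sqrt{\beta K}$ constraint for the $\mathcal{S}^t$, $\Delta^t$ and $\Gamma_2$ terms and the implicit constraint for the gradient term. Your bookkeeping in (ii)--(iii) already closes exactly, since each $\lambda^2$-contribution is at most $\frac{\beta}{2(1+\omega)L}$ times $\Gamma_1$ or $\Gamma_2$, so no splitting of $\beta$ is needed; your coefficients $\frac{\lambda^2 L}{2}(1+2\alpha\Gamma_4+4\beta(1+\omega))$ and $(1-\theta)$ are the correct ones, where the paper's intermediate displays contain the typos $\beta(1+\omega)$ and $\alpha(1-\omega)$.
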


    With Lemma \ref{lemma:8} established, we now proceed to the proof of Theorem \ref{thm:utility_eris_main}, which characterizes the utility and the number of communication rounds required for \textsc{eris} to reach a given accuracy level.
    We begin by summing Equation \eqref{eq:lemma8} from rounds $t = 0$ to $T - 1$:
    \begin{align}
         &\sum_{t=0}^{T-1} \mathbb{E}[\Phi_{t+1}] 
         \leq \sum_{t=0}^{T-1} \mathbb{E} [\Phi_t] -\sum_{t=0}^{T-1} \left( \frac{\lambda_t}{2}\|\nabla f(\mathbf{x}^t)\|^2 
         + \frac{3\beta \Gamma_2}{2(1 + \omega)L} \right)  \nonumber \\
        &\mathbb{E}[\Phi_{T}] - \mathbb{E}[\Phi_{0}] \leq -\sum_{t=0}^{T-1}   \frac{\lambda_t}{2}\|\nabla f(\mathbf{x}^t)\|^2 + \frac{3\beta \Gamma_2 T}{2(1 + \omega)L} \nonumber
    \end{align}    
    Since by construction, we typically have $\mathbb{E}[\Phi_t] \geq 0$, by choosing the learning rate $\lambda_t$ as in Lemma \ref{lemma:8}, we finally obtain
    \begin{equation}
        \frac{1}{T}\sum_{t=0}^{T-1} \|\nabla f(\mathbf{x}^t)\|^2 \leq \frac{2\Phi_0}{\lambda T} + \frac{3\beta \Gamma_2}{(1 + \omega)L \lambda},
    \end{equation}
    which proves that per $T \rightarrow \infty$
    \begin{equation}
        \lim_{T \rightarrow \infty} \frac{1}{T} \sum_{t=0}^{T-1} \|\nabla f(\mathbf{x}^t)\|^2 \leq \frac{3\beta \Gamma_2}{(1 + \omega)L \lambda}.
    \end{equation}

    While to achieve a predefined utility level $\epsilon \geq \frac{1}{T}\sum_{t=0}^{T-1} ||\nabla f(\mathbf{x}^t)||^2$, the total 
    rounds $T$ must satisfy:
    \begin{equation}
    \setlength{\abovedisplayskip}{4pt} 
    \setlength{\belowdisplayskip}{3pt} 
        T 
        \;\;\ge\;\;
        \frac{2\,\Phi_0}{\lambda\,\Bigl(\epsilon - \tfrac{3\beta\,\Gamma_2}{(1 + \omega)\,L\,\lambda}\Bigr)}.
    \end{equation}
    If $\epsilon$ is strictly less than the residual $\tfrac{3\beta\,\Gamma_2}{(1+\omega)\,L\,\lambda}$, no finite $T$ can achieve the utility $\epsilon$ in an average sense, therefore conditions on the adopted estimator need to be changed.

    \begin{remark}
        \textsc{eris} with DSC utility is asymptotically governed by the variance in $\tilde{\mathbf{g}}$, which directly depends on the used estimator (e.g., $\Gamma_2=0$ with SVRG/SAGA) or on the dimension of the batch size (e.g., $\Gamma_2=0$ with local full gradients). Compared to SoteriaFL \citep{soteriafl2022}, the upper bound of \textsc{eris} with DSC utility does not have a component growing with $T$, limiting the convergence.   
    \end{remark}
    
\end{proof}

\begin{corollary}[Utility of \textsc{eris}--SGD with DSC]
\label{cor:eris_sgd_no_dp}
Consider the FL setting in~\eqref{eq:fedavg_opt} with $K$ clients, where each client $k \in \mathcal{K}$ holds a local dataset $D_k = \{d_{k,s}\}_{s=1}^{S_k}$, and let Assumptions~\ref{assumption:smoothness} and~\ref{assumption:unbiased_estimator} hold.
Assume that, at each round $t$, client $k$ uses a mini-batch SGD estimator
\[
    \tilde{\mathbf{g}}_k^t
    \;=\;
    \frac{1}{b_k}\sum_{s \in \mathcal{B}_k^t} \nabla f_{k,s}(\mathbf{x}^t),
\]
where $\mathcal{B}_k^t \subseteq \{1,\dots,S_k\}$ is a uniformly sampled mini-batch of size $b_k$, and that stochastic gradients are uniformly bounded as
$\|\nabla f_{k,s}(\mathbf{x})\| \le G$ for all $k,s,\mathbf{x}$.

For notational simplicity, suppose that all clients share the same dataset size and batch size, i.e., $S_k \equiv m$ and $b_k \equiv b$ for all $k$.
Then the constants in Assumption~\ref{assumption:unbiased_estimator} are
\[
    \Gamma_1 = \Gamma_3 = \Gamma_4 = 0, 
    \qquad
    \Gamma_2 = \frac{(m-b)G^2}{m b},
    \qquad
    \theta = 1.
\]

Let the compression operators $\mathcal{C}_k^t$ satisfy Definition~\ref{def:compression_operator} with parameter $\omega\ge 0$, and run \textsc{eris} with constant learning rate $\lambda_t \equiv \lambda$ and shift stepsize $\gamma_t \equiv \gamma$ as in Theorem~\ref{thm:utility_eris_main}, with
\begin{equation}
    \lambda \;\le\; \frac{1}{\bigl(1 + 4\beta(1+\omega)\bigr)L},
    \qquad
    \gamma \;=\; \sqrt{\frac{1 + 2\omega}{2(1+\omega)^3}},
    \label{eq:eris_sgd_stepsizes}
\end{equation}
for some fixed $\beta > 0$.
Then \textsc{eris} with DSC--SGD satisfies
\begin{equation}
    \frac{1}{T}\sum_{t=0}^{T-1} \mathbb{E}\bigl\|\nabla f(\mathbf{x}^t)\bigr\|^2
    \;\le\;
    \frac{2\Phi_0}{\lambda T}
    \;+\;
    \frac{3\beta (m-b) G^2}{(1+\omega)L\,\lambda\,m b},
    \label{eq:eris_sgd_bound_exact}
\end{equation}
where
\[
    \Phi_0
    :=
    f(\mathbf{x}^0)-f^*
    + \alpha L \Delta^0
    + \frac{\beta}{K L}\sum_{k=1}^{K}
    \bigl\|\nabla f_k(\mathbf{x}^0) - \mathbf{s}_k^0\bigr\|^2,
\]
with $\alpha = \frac{3\beta \Gamma_1}{2(1+\omega)L^2\theta}$, $\Delta^t := \frac{1}{K}\sum_{k=1}^K \|\mathbf{s}_k^t - \nabla f_k(\mathbf{x}^t)\|^2$, and the stepsize $\lambda$ is constrained by Theorem~\ref{thm:utility_eris_main} as
\[
    \lambda \;\le\;
    \lambda_{\max} := \min\{\lambda_1,\lambda_2\},\quad
    \lambda_1 = \frac{\sqrt{\beta K}}{\sqrt{1+4\beta(1+\omega)}(1+\omega)L},\quad
    \lambda_2 = \frac{1}{(1+4\beta(1+\omega))L}.
\]

For any fixed $K,\omega$ we can choose $\beta > 0$ sufficiently small so that $\lambda_1 \le \lambda_2$, and hence $\lambda_{\max} = \lambda_1$.
We then set $\lambda = \lambda_1$ and let $T \to \infty$, so that the term $\frac{2\Phi_0}{\lambda T}$ vanishes.
Substituting $\lambda_1$ into~\eqref{eq:eris_sgd_bound_exact} yields
\begin{align}
    \frac{1}{T}\sum_{t=0}^{T-1} \mathbb{E}\bigl\|\nabla f(\mathbf{x}^t)\bigr\|^2
    &\;\le\; 
    \frac{3\beta (m-b) G^2}{(1+\omega)L\,m b} \cdot
    \frac{\sqrt{1+4\beta(1+\omega)}(1+\omega)L}{\sqrt{\beta K}}, \\
    & \;\le\;
    \frac{3\sqrt{\beta} (m-b) G^2}{m b} \,
    \frac{\sqrt{1+4\beta(1+\omega)}}{\sqrt{K}}.
    \label{eq:bound_cor_1}
\end{align}
Since $\beta$ is a fixed constant and $\sqrt{1+4\beta(1+\omega)} = \Theta(\sqrt{1+\omega})$, we obtain the asymptotic bound
\begin{equation}
    \frac{1}{T}\sum_{t=0}^{T-1} \mathbb{E}\bigl\|\nabla f(\mathbf{x}^t)\bigr\|^2
    \;=\;
    \mathcal{O}\!\left(
        \frac{(m-b) G^2}{m b} \,
        \frac{\sqrt{1+\omega}}{\sqrt{K}}
    \right).
    \label{eq:bound_cor_3}
\end{equation}
Finally, for $b = \Theta(m)$, we can remove the explicit dependence on $b$ and write
\begin{equation}
    \frac{1}{T}\sum_{t=0}^{T-1} \mathbb{E}\bigl\|\nabla f(\mathbf{x}^t)\bigr\|^2
    \;=\;
    \mathcal{O}\!\left(
        \frac{G^2\sqrt{1+\omega}}{\sqrt{K}m}
    \right).
    \label{eq:eris_sgd_bound_asymptotic}
\end{equation}
Consequently, in this regime the stationarity error of \textsc{eris} with DSC--SGD is controlled solely by the variance of the local mini-batch gradients and grows with the compression variance $(1+\omega)$ while decreasing with the square root of the number of clients $K$ and the amount of local data $m$.
\end{corollary}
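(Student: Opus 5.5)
The plan is to specialize Theorem~\ref{thm:utility_eris_main} to the mini-batch SGD estimator. That means identifying the constants of Assumption~\ref{assumption:unbiased_estimator}, simplifying the step-size constraint, and then optimizing $\lambda$ in the resulting bound.

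\textbf{Step 1: identify the constants.} Since the estimator has no internal state, I would take $\Delta^t$ to be a constant, effectively zero in the recursion. With $\theta=1$ and $\Gamma_3=\Gamma_4=0$, inequality~\eqref{eq:assump4b} then holds trivially, and $\Gamma_1=0$ is admissible. For $\Gamma_2$, I would use the standard variance formula for sampling without replacement. For a uniform mini-batch of size $b$ drawn from $m$ points,
\[
\mathbb{E}_t\|\tilde{\mathbf{g}}_k^t-\nabla f_k(\mathbf{x}^t)\|^2=\frac{m-b}{b(m-1)}\,\sigma_k^2,
\]
where $\sigma_k^2$ is the empirical variance of the per-sample gradients. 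Bounding $\sigma_k^2\le \frac1m\sum_s\|\nabla f_{k,s}\|^2\le G^2$ gives a constant of order $\frac{(m-b)G^2}{mb}$. The factor $\frac{1}{m-1}$ versus $\frac1m$ needs care: either absorb it, or use the cruder bound obtained by centering. I expect this constant-matching to be the only delicate point, and I would state $\Gamma_2$ up to this harmless factor if an exact match fails. Averaging over $k$ gives~\eqref{eq:assump4a_app}.

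\textbf{Step 2: simplify the step-size conditions and obtain the main bound.} Since $\Gamma_1=0$, we get $\alpha=0$, so the $\alpha\Gamma_3/\lambda^2$ term vanishes. The two conditions of Theorem~\ref{thm:utility_eris_main} then become $\lambda_1$ and $\lambda_2$ as stated, and the $\alpha L\Delta^0$ term in $\Phi_0$ disappears. Plugging these into~\eqref{eq:thr_utility_main} gives~\eqref{eq:eris_sgd_bound_exact} directly.

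\textbf{Step 3: choose $\beta$ and $\lambda$.} I would compare the two step sizes through
\[
\frac{\lambda_1}{\lambda_2}=\sqrt{\beta K}\,\frac{\sqrt{1+4\beta(1+\omega)}}{1+\omega}.
\]
This ratio tends to $0$ as $\beta\to0$, so for small $\beta$ we have $\lambda_{\max}=\lambda_1$. Setting $\lambda=\lambda_1$ and letting $T\to\infty$ removes the $2\Phi_0/(\lambda T)$ term. Substituting $\lambda_1$ into the residual gives the first display. Cancelling $L$ and $(1+\omega)$ yields~\eqref{eq:bound_cor_1}, since $\beta/\sqrt\beta=\sqrt\beta$.

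\textbf{Step 4: asymptotics.} For fixed $\beta$ we have $\sqrt{1+4\beta(1+\omega)}=\Theta(\sqrt{1+\omega})$, using $1\le 1+\omega$. This gives~\eqref{eq:bound_cor_3}. Finally, when $b=\Theta(m)$ the factor $\frac{m-b}{mb}$ is $O(1/m)$, which gives~\eqref{eq:eris_sgd_bound_asymptotic}.
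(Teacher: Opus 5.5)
Your proposal follows the paper's route. The paper gives no separate proof: it asserts $\Gamma_1=\Gamma_3=\Gamma_4=0$, $\theta=1$ and $\Gamma_2=\frac{(m-b)G^2}{mb}$, sets $\alpha=0$, and substitutes $\lambda=\lambda_1$ into Theorem~\ref{thm:utility_eris_main}, which is exactly your Steps 2--4.

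\textbf{The point you left open.} The exact match does fail, so your fallback is necessary rather than optional. Take $m=2$, $b=1$ and linear losses with $\nabla f_{k,1}\equiv G\mathbf{e}$ and $\nabla f_{k,2}\equiv -G\mathbf{e}$, where $\|\mathbf{e}\|=1$. Then $\nabla f_k\equiv\mathbf{0}$ and the mini-batch estimator has variance exactly $G^2$, whereas $\frac{(m-b)G^2}{mb}=G^2/2$. Sampling with replacement gives the bound $G^2/b$, which is no better. So the stated $\Gamma_2$ does not satisfy \eqref{eq:assump4a_app} under $\|\nabla f_{k,s}\|\le G$ alone. Your $\Gamma_2=\frac{(m-b)G^2}{b(m-1)}$ is the right constant. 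For $m\ge 2$ it inflates the residual in \eqref{eq:eris_sgd_bound_exact} by a factor $\frac{m}{m-1}\le 2$. It leaves \eqref{eq:bound_cor_3} and \eqref{eq:eris_sgd_bound_asymptotic} intact, because $\frac{m-b}{b(m-1)}\le\frac{1}{b}$.

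\textbf{Two smaller points.}

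First, you need $\Delta^t\equiv 0$ exactly. A positive constant $c$ would require $c\le (1-\theta)c=0$ in \eqref{eq:assump4b}. This departs from the corollary's own definition of $\Delta^t$ as the shift error. That definition cannot satisfy \eqref{eq:assump4b} with $\theta=1$ and $\Gamma_3=\Gamma_4=0$ either, since it would force $\mathbf{s}_k^{t+1}=\nabla f_k(\mathbf{x}^{t+1})$. The discrepancy is harmless, because $\alpha=0$ removes the term $\alpha L\Delta^0$ anyway.

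Second, $\lambda_1\le\lambda_2$ forces $\beta\le(1+\omega)^2/K$, so ``fixed $\beta$'' in Step 4 needs a justification. The residual $3\sqrt{\beta}\,\Gamma_2\sqrt{1+4\beta(1+\omega)}/\sqrt{K}$ is increasing in $\beta$. Hence for every admissible $\beta\le\beta_0$ it is bounded by its value at a fixed $\beta_0$. That gives the $\mathcal{O}(\cdot)$ bound uniformly in $K$ and $\omega$.
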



\subsection{Proof of Lemma \ref{lemma:6}}
\begin{proof} \label{proof:lemma6}
    By the definition of  $\mathbf{v}^t$ , we derive the following expression:
    \begin{align}
        \mathbb{E}_t[\|\mathbf{v}^t\|^2] &= \mathbb{E}_t \left[ \left\| \frac{1}{K} \sum_{k=1}^{K} \mathbf{s}_k^t + \frac{1}{K} \sum_{k=1}^{K} \mathcal{C}_k^t (\tilde{\mathbf{g}}_k^t - \mathbf{s}_k^t) \right\|^2 \right] \nonumber \\
        &=\mathbb{E}_t \left[ \left\| \frac{1}{K} \sum_{k=1}^{K} \mathbf{s}_k^t + \frac{1}{K} \sum_{k=1}^{K} \mathcal{C}_k^t (\tilde{\mathbf{g}}_k^t - \mathbf{s}_k^t) + \frac{1}{K} \sum_{k=1}^K \tilde{\mathbf{g}}_k^t -\frac{1}{K} \sum_{k=1}^K \tilde{\mathbf{g}}_k^t \right\|^2 \right] \nonumber \\
        &= \mathbb{E}_t \left[ \left\| \frac{1}{K} \sum_{k=1}^{K} \mathcal{C}_k^t (\tilde{\mathbf{g}}_k^t - \mathbf{s}_k^t) - \frac{1}{K} \sum_{k=1}^{K} (\tilde{\mathbf{g}}_k^t - \mathbf{s}_k^t) + \frac{1}{K} \sum_{k=1}^{K} \tilde{\mathbf{g}}_k^t \right\|^2 \right] \nonumber \\
        &=\mathbb{E}_t \left[ \left\| \frac{1}{K} \sum_{k=1}^{K} \mathcal{C}_k^t (\tilde{\mathbf{g}}_k^t - \mathbf{s}_k^t) - \frac{1}{K} \sum_{k=1}^{K} (\tilde{\mathbf{g}}_k^t - \mathbf{s}_k^t) \right\|^2 \right] +  \mathbb{E}_t \left[ \left\| \frac{1}{K} \sum_{k=1}^{K} \tilde{\mathbf{g}}_k^t \right\|^2 \right] \nonumber \\ 
        & \quad + 2\left\langle \mathbb{E}_t \left[ \frac{1}{K} \sum_{k=1}^{K} \mathcal{C}_k^t (\tilde{\mathbf{g}}_k^t - \mathbf{s}_k^t) - \frac{1}{K} \sum_{k=1}^{K} (\tilde{\mathbf{g}}_k^t - \mathbf{s}_k^t) \right], \mathbb{E}_t \left[ \frac{1}{K} \sum_{k=1}^{K} \tilde{\mathbf{g}}_k^t \right] \right\rangle \nonumber \\
        &\overset{(\ref{eq:compressor_operator})}{\leq} \mathbb{E}_t \left[ \frac{\omega}{K^2} \sum_{k=1}^{K} \| \tilde{\mathbf{g}}_k^t - \mathbf{s}_k^t \|^2 \right] + \mathbb{E}_t \left[ \left\| \frac{1}{K} \sum_{k=1}^{K} \tilde{\mathbf{g}}_k^t \right\|^2 \right], \label{eq:lemma6_1}
    \end{align}
    where \eqref{eq:lemma6_1} follows because the cross term vanishes: the compression error has zero mean ($\mathbb{E}_t [\mathcal{C}_k^t(\cdot)]=\cdot$), making their inner product zero in expectation.
    
    Next, we establish upper bounds for each term in Equation \eqref{eq:lemma6_1}.
    \begin{itemize}
        \item For the first term, we expand and decompose it as follows:
        \begin{align}
            \mathbb{E}_t \left[ \frac{\omega}{K^2} \sum_{k=1}^{K} \| \tilde{\mathbf{g}}^t_k - \mathbf{s}^t_k \|^2 \right] 
            &= \mathbb{E}_t \left[ \frac{\omega}{K^2} \sum_{k=1}^{K} \| (\tilde{\mathbf{g}}^t_k - \nabla f_k(\mathbf{x}^t)) + (\nabla f_k(\mathbf{x}^t) - \mathbf{s}^t_k) \|^2 \right] \nonumber \\
            &= \mathbb{E}_t \left[ \frac{\omega}{K^2} \sum_{k=1}^{K} \left( \| \tilde{\mathbf{g}}^t_k - \nabla f_k(\mathbf{x}^t) \|^2 + \| \nabla f_k(\mathbf{x}^t) - \mathbf{s}^t_k \|^2 \right. \right. \nonumber \\
            &\quad \Bigg. \left. + 2 (\tilde{\mathbf{g}}^t_k - \nabla f_k(\mathbf{x}^t))^\top (\nabla f_k(\mathbf{x}^t) - \mathbf{s}^t_k) \right) \Bigg] \label{eq:zero_out_term}\\
            &=\mathbb{E}_t \left[ \frac{\omega}{K^2} \sum_{k=1}^{K} \| \tilde{\mathbf{g}}^t_k - \nabla f_k(\mathbf{x}^t) \|^2 \right] \nonumber \\ 
            & \quad + \frac{\omega}{K^2} \sum_{k=1}^{K} \| \nabla f_k(\mathbf{x}^t) - \mathbf{s}^t_k \|^2, \label{eq:lemma6_first_term}
        \end{align}
        where the last equality holds because the expectation of the cross-term vanishes due to the unbiased estimator assumption, i.e., $\mathbb{E}_t[\tilde{\mathbf{g}}^t_k] = \nabla f_k(\mathbf{x}^t)$, as specified in Assumption  \ref{assumption:unbiased_estimator}.

        \item Similarly, for the second term, we proceed as follows:
        \begin{align}
            \mathbb{E}^t \left[ \left\| \frac{1}{K} \sum_{k=1}^{K} \tilde{\mathbf{g}}^t_k \right\|^2 \right] 
            &= \mathbb{E}^t \left[ \frac{1}{K^2} \sum_{k=1}^{K} \left\| (\tilde{\mathbf{g}}^t_k - \nabla f_k(\mathbf{x}^t)) + \nabla f_k(\mathbf{x}^t) \right\|^2 \right] \nonumber \\
            &= \mathbb{E}^t \left[ \frac{1}{K^2} \sum_{k=1}^{K} \left( \| \tilde{\mathbf{g}}^t_k - \nabla f_k(\mathbf{x}^t) \|^2 + \| \nabla f_k(\mathbf{x}^t) \|^2 \right. \right. \nonumber \\
            &\quad \Bigg. \left. + 2 (\tilde{\mathbf{g}}^t_k - \nabla f_k(\mathbf{x}^t))^\top \nabla f_k(\mathbf{x}^t) \right) \Bigg] \nonumber \\
            &= \mathbb{E}^t \left[ \frac{1}{K^2} \sum_{k=1}^{K} \| \tilde{\mathbf{g}}^t_k - \nabla f_k(\mathbf{x}^t) \|^2 \right] + \|\nabla f(\mathbf{x}^t)\|^2, \label{eq:lemma6_second_term}
        \end{align}
    \end{itemize}
    The proof concludes by substituting \eqref{eq:lemma6_first_term} and \eqref{eq:lemma6_second_term} into \eqref{eq:lemma6_1}.
\end{proof}

\subsection{Proof of Lemma \ref{lemma:7}}
\begin{proof} \label{proof:lemma7}
    By the definition of the shift update $\mathbf{s}_k^{t+1} = \mathbf{s}_k^t + \gamma^t \mathcal{C}_k^t (\tilde{\mathbf{g}}_k^t-\mathbf{s}_k^t)$, we have:
    \begin{flalign}
        \mathbb{E}_t \left[ \frac{1}{K} \sum_{k=1}^{K} \|\nabla f_k(\mathbf{x}^{t+1}) - \mathbf{s}^{t+1}_k\|^2 \right] = \mathbb{E}_t \left[ \frac{1}{K} \sum_{k=1}^{K} \left\| \nabla f_k(\mathbf{x}^{t+1}) - \mathbf{s}^t_k - \gamma_t \mathcal{C}^t_k (\tilde{\mathbf{g}}^t_k - \mathbf{s}^t_k) \right\|^2 \right] \nonumber &&
    \end{flalign}
    \vspace{-10pt}
    \begin{align}
        &= \mathbb{E}_t \left[ \frac{1}{K} \sum_{k=1}^{K} \left\| \big( \nabla f_k(\mathbf{x}^{t+1}) - \nabla f_k(\mathbf{x}^t) \big) + \big( \nabla f_k(\mathbf{x}^t) - \mathbf{s}^t_k - \gamma_t \mathcal{C}^t_k (\tilde{\mathbf{g}}^t_k - \mathbf{s}^t_k) \big) \right\|^2 \right] \nonumber \\
        &\leq \mathbb{E}_t \left[ \frac{1}{K} \sum_{k=1}^{K} \left( (1 + \frac{1}{\beta_t}) \|\nabla f_k(\mathbf{x}^{t+1}) - \nabla f_k(\mathbf{x}^t)\|^2 \right. \right. \nonumber \\
        & \quad \left. \left. + (1 + \beta_t) \|\nabla f_k(\mathbf{x}^t) - \mathbf{s}^t_k - \gamma_t \mathcal{C}^t_k (\tilde{\mathbf{g}}^t_k - \mathbf{s}^t_k)\|^2 \right. \bigg) \right. \Bigg]  \label{eq:lemma7_1}\\
        &\overset{(\ref{eq:assump1_part1})}{\leq} \mathbb{E}_t \left[ (1 + \frac{1}{\beta_t}) L^2 \|\mathbf{x}^{t+1} - \mathbf{x}^t\|^2 + (1 + \beta_t) \frac{1}{K} \sum_{k=1}^{K} \|\nabla f_k(\mathbf{x}^t) - \mathbf{s}^t_k - \gamma_t \mathcal{C}^t_k (\tilde{\mathbf{g}}^t_k - \mathbf{s}^t_k)\|^2 \right], \label{eq:lemma7_main}
    \end{align}
    where the Equation \eqref{eq:lemma7_1} is obtained from Young’s inequality $\|\mathbf{a}+\mathbf{b}\|^2 \leq (1+\frac{1}{\beta})\|\mathbf{a}\|^2 + (1+\beta)\|\mathbf{b}\|^2$ with any \( \beta_t > 0 \).

    To further bound the second term in \eqref{eq:lemma7_main}, we expand the squared norm:

    \begin{flalign}
        \quad \quad \mathbb{E}_t & \left[ \frac{1}{K} \sum_{k=1}^{K} \left\| \nabla f_k(\mathbf{x}^t) - \mathbf{s}^t_k - \gamma_t \mathcal{C}^t_k (\tilde{\mathbf{g}}^t_k - \mathbf{s}^t_k) \right\|^2 \right] \nonumber  && \\
        & \quad \quad \quad= \mathbb{E}_t \left[ \frac{1}{K} \sum_{k=1}^{K} \left( \|\nabla f_k(\mathbf{x}^t) - \mathbf{s}^t_k\|^2 + \gamma_t^2 \|\mathcal{C}^t_k (\tilde{\mathbf{g}}^t_k - \mathbf{s}^t_k)\|^2 \nonumber \right. \right.  \\
        & \quad \quad \quad \quad \left. \left. - 2\gamma_t \langle \nabla f_k(\mathbf{x}^t) - \mathbf{s}^t_k, \mathcal{C}^t_k (\tilde{\mathbf{g}}^t_k - \mathbf{s}^t_k) \rangle \right) \right. \Bigg] \label{eq:lemma7_expansion} 
    \end{flalign}

    Since the expectation of the inner product term satisfies:
    \begin{equation}
        \mathbb{E}_t[\langle\nabla f_k(\mathbf{x}^t)-\mathbf{s}^t_k, \mathcal{C}_k^t(\tilde{\mathbf{g}}^t_k-\mathbf{s}^t_k)\rangle] = \mathbb{E}_t[||\nabla f_k(\mathbf{x}^t)-\mathbf{s}^t_k||^2], \nonumber
    \end{equation}

    Equation \eqref{eq:lemma7_expansion} simplifies to:
    
    \begin{flalign}
        \quad \quad \mathbb{E}_t & \left[ \frac{1}{K} \sum_{k=1}^{K} \left\| \nabla f_k(\mathbf{x}^t) - \mathbf{s}^t_k - \gamma_t \mathcal{C}^t_k (\tilde{\mathbf{g}}^t_k - \mathbf{s}^t_k) \right\|^2 \right] \nonumber && \\ & \quad \quad \quad  = \mathbb{E}_t \left[ \frac{1}{K} \sum_{k=1}^{K} \left( (1 - 2\gamma_t) \|\nabla f_k(\mathbf{x}^t) - \mathbf{s}^t_k\|^2 + \gamma_t^2 \|\mathcal{C}^t_k (\tilde{\mathbf{g}}^t_k - \mathbf{s}^t_k)\|^2 \right) \right] \label{eq:lemma7_simplification}
    \end{flalign}

    Then, applying Definition \ref{def:compression_operator} to the term $(\tilde{\mathbf{g}}^t_k-\mathbf{s}^t_k)$, we derive the following inequality:

    \begin{align}
        \mathbb{E}_t\ \Bigl[\|\mathcal{C}_k^t(\tilde{\mathbf{g}}_k^t-\mathbf{s}_k^t)\|^2\Bigr]
        &=\mathbb{E}_t\ \Bigl[\bigl\|\bigl(\tilde{\mathbf{g}}_k^t-\mathbf{s}_k^t\bigr) +\bigl(\mathcal{C}_k^t (\tilde{\mathbf{g}}_k^t-\mathbf{s}_k^t) - (\tilde{\mathbf{g}}_k^t-\mathbf{s}_k^t) \bigr)\bigr\|^2\Bigr] \nonumber \\
        &= \mathbb{E}_t\ \Bigl[\|\tilde{\mathbf{g}}_k^t-\mathbf{s}_k^t\|^2\Bigr] + 2 \mathbb{E}_t\ \Bigl[\bigl\langle \tilde{\mathbf{g}}_k^t-\mathbf{s}_k^t, \mathcal{C}_k^t(\tilde{\mathbf{g}}_k^t-\mathbf{s}_k^t) - (\tilde{\mathbf{g}}_k^t- \mathbf{s}_k^t) \bigr\rangle\Bigr]\nonumber \\
        &\quad + \mathbb{E}_t\ \Bigl[\bigl\|\mathcal{C}_k^t (\tilde{\mathbf{g}}_k^t-\mathbf{s}_k^t) - (\tilde{\mathbf{g}}_k^t- \mathbf{s}_k^t)\bigr\|^2\Bigr]\nonumber \\
        &= \mathbb{E}_t\ \Bigl[\|\tilde{\mathbf{g}}_k^t-\mathbf{s}_k^t\|^2\Bigr] + \mathbb{E}_t\ \Bigl[\bigl\|\mathcal{C}_k^t(\tilde{\mathbf{g}}_k^t-\mathbf{s}_k^t) - (\tilde{\mathbf{g}}_k^t-\mathbf{s}_k^t)\bigr\|^2\Bigr] \nonumber \\
        &\leq \bigl(1 + \omega\bigr) \mathbb{E}_t\ \Bigl[\|\tilde{\mathbf{g}}_k^t-\mathbf{s}_k^t\|^2\Bigr]. \label{eq:mot2}
    \end{align}
    
    Substituting Equation \eqref{eq:mot2} into \eqref{eq:lemma7_simplification}, we simplify the second term as follows:
    \begin{flalign}
        \quad \quad \mathbb{E}_t & \left[ \frac{1}{K} \sum_{k=1}^{K} \left\| \nabla f_k(\mathbf{x}^t) - \mathbf{s}^t_k - \gamma_t \mathcal{C}^t_k (\tilde{\mathbf{g}}^t_k - \mathbf{s}^t_k) \right\|^2 \right] \nonumber && \\
        &  \quad \quad \quad \leq \mathbb{E}_t \left[ \frac{1}{K} \sum_{k=1}^{K} \left( (1 - 2\gamma_t) \|\nabla f_k(\mathbf{x}^t) - \mathbf{s}^t_k\|^2 + \gamma_t^2 (1 + \omega) \|\tilde{\mathbf{g}}^t_k - \mathbf{s}^t_k\|^2 \right) \right] \nonumber \\
        & \quad \quad \quad  \overset{(\ref{eq:lemma6_first_term})}{=} \mathbb{E}_t \left[ \frac{1}{K} \sum_{k=1}^{K} \left( (1 - 2\gamma_t + \gamma_t^2(1+\omega)) \|\nabla f_k(\mathbf{x}^t) - \mathbf{s}^t_k\|^2 \right. \right. \nonumber \\
        & \left. \quad \quad \quad \quad + \gamma_t^2 (1+\omega) \|\tilde{\mathbf{g}}^t_k - \nabla f_k(\mathbf{x}^t)\|^2 \right) \Biggr]. \label{eq:lemma7_second_term}
    \end{flalign}
    
    By plugging \eqref{eq:lemma7_second_term} into \eqref{eq:lemma7_main}, we obtain:

    \begin{flalign}
        \mathbb{E}_t \left[ \frac{1}{K} \sum_{k=1}^{K} \|\nabla f_k(\mathbf{x}^{t+1}) - \mathbf{s}^{t+1}_k\|^2 \right] 
        & \leq \mathbb{E}_t \left[ (1 + \frac{1}{\beta_t}) L^2 \|\mathbf{x}^{t+1} - \mathbf{x}^t\|^2  \right. \nonumber && \\
        & \quad + (1 + \beta_t) \frac{1}{K} \sum_{k=1}^{K}  (1 - 2\gamma_t + \gamma_t^2(1+\omega)) \|\nabla f_k(\mathbf{x}^t) - \mathbf{s}^t_k\|^2  \nonumber\\
        & \quad + \left. (1 + \beta_t) \frac{1}{K} \sum_{k=1}^{K}\gamma_t^2 (1+\omega) \|\tilde{\mathbf{g}}^t_k - \nabla f_k(\mathbf{x}^t)\|^2 \right]
        \label{eq:44}
    \end{flalign}
    
    Finally, setting $\beta_t = \frac{1}{1+2\omega}$, and $\gamma_t=\sqrt{\frac{1+2 \omega}{2(1+\omega)^3}}$, we approximate the second term in Equation \eqref{eq:44} with the following upper bound:
    \begin{align}
        (1 + \beta_t)(1 - 2\gamma_t + \gamma_t^2(1+\omega)) &=\frac{2(1+\omega)}{(1+2\omega} \left( 1-2\sqrt{\frac{1+2 \omega}{2(1+\omega)^3}} + \frac{(1+2 \omega)}{2(1+\omega)^2}  \right) \nonumber\\
        &\leq 1 - \frac{1}{2(1+\omega)} \quad \quad \forall \omega \geq 0. \nonumber
    \end{align}
    Substituting this bound into Equation \eqref{eq:44}, we obtain Equation \eqref{eq:lemma7}, thereby completing the proof of Lemma \ref{lemma:7}.
    \end{proof}

\subsection{Proof of Lemma \ref{lemma:8}}
\begin{proof} \label{proof:lemma8}
    Given the definition $\mathcal{S}^t := \frac{1}{K} \sum_{k=1}^K ||\nabla f_k(\mathbf{x}^t ) - \mathbf{s}^t_k||^2$, we can derive a recursive bound for $\mathcal{S}^{t+1}$ using Lemma \ref{lemma:7}:
    \begin{align}
        \mathbb{E}_t \left[
        \mathcal{S}^{t+1}
        \right] 
        &\leq \mathbb{E}_t \left[
        \left(1 - \frac{1}{2(1+\omega)}\right) \mathcal{S}^t 
         + \frac{1}{(1+\omega)K} \sum_{k=1}^{K} \|\tilde{\mathbf{g}}_k^t - \nabla f_k(\mathbf{x}^t) \|^2 + 2(1+\omega)L^2 \|\mathbf{x}^{t+1} - \mathbf{x}^t\|^2
        \right] \nonumber \\
        &\overset{(\ref{eq:assump4a_app})}{\leq} \mathbb{E}_t \left[
        \left(1 - \frac{1}{2(1+\omega)}\right) \mathcal{S}^t 
         + \frac{\Gamma_1 \Delta^t + \Gamma_2}{(1+\omega)} + 2(1+\omega)L^2 \|\mathbf{x}^{t+1} - \mathbf{x}^t\|^2
        \right] \label{eq:s_definition}
    \end{align}

    We now use Equation \eqref{eq:s_definition} along with \eqref{eq:F6} to bound the potential function $\Phi_{t+1}$, as defined in Equation \eqref{eq:pot_function}: 
    \begin{align}
        \mathbb{E}_t[\Phi_{t+1}] &:=\mathbb{E}_t \left[f(\mathbf{x}^{t+1}) - f^* + \alpha L \Delta^{t+1} + \frac{\beta}{L}\mathcal{S}^{t+1} \right] \nonumber \\
        &\leq \mathbb{E}_t \Bigg[ f(\mathbf{x}^{t}) - \lambda_t ||\nabla f(\mathbf{x}^{t})||^2 + \frac{\lambda^2L}{2}\|\mathbf{v}^t\|^2 - f^* + \alpha L \Delta^{t+1} \Bigg. \nonumber  \\ 
        &\quad \Bigg. +\frac{\beta}{L} \left( \left(1 - \frac{1}{2(1+\omega)}\right) \mathcal{S}^t + \frac{\Gamma_1 \Delta^t + \Gamma_2}{(1+\omega)} + 2(1+\omega)L^2 \|\mathbf{x}^{t+1} - \mathbf{x}^t\|^2 \right) \Bigg] \nonumber \\
        &\overset{(\ref{eq:assump4b})}{\leq} \mathbb{E}_t \Bigg[ f(\mathbf{x}^{t}) - f^* - \lambda_t ||\nabla f(\mathbf{x}^{t})||^2 + \frac{\lambda^2L}{2}\|\mathbf{v}^t\|^2 \Bigg. \nonumber \\ 
        &\quad + \alpha L \bigg( (1-\theta)\Delta^t + \Gamma_3||\nabla f(\mathbf{x}^t)||^2 + \Gamma_4 ||\mathbf{x}^{t+1}-\mathbf{x}^t||^2 \bigg) \nonumber \\ 
        &\quad \Bigg. +\frac{\beta}{L} \left( \left(1 - \frac{1}{2(1+\omega)}\right) \mathcal{S}^t + \frac{\Gamma_1 \Delta^t + \Gamma_2}{(1+\omega)} + 2(1+\omega)L^2 \|\mathbf{x}^{t+1} - \mathbf{x}^t\|^2 \right) \Bigg] \nonumber \\
        &= \mathbb{E}_t \Bigg[ f(\mathbf{x}^{t}) - f^* - \lambda_t ||\nabla f(\mathbf{x}^{t})||^2 + \left(\frac{1}{2} + \alpha \Gamma_4 + \beta (1+\omega) \right) L \lambda_t^2 \|\mathbf{v}^t\|^2 \Bigg. \nonumber \\ 
        &\quad + \alpha L \bigg( (1-\theta)\Delta^t + \Gamma_3||\nabla f(\mathbf{x}^t)||^2 \bigg) \Bigg. \nonumber \\
        &\quad +\frac{\beta}{L} \left( \left(1 - \frac{1}{2(1+\omega)}\right) \mathcal{S}^t + \frac{\Gamma_1 \Delta^t + \Gamma_2}{(1+\omega)} \right) \Bigg] \label{eq:34}
    \end{align}
    where the last equality follows the adopted update rule $\mathbf{x}^{t+1} = \mathbf{x}^{t} - \lambda_t\mathbf{v}^{t}$. Now adopting $\mathcal{S}^t$ into the the definition of $\mathbb{E}_t[||\mathbf{v}^t||^2]$ provided in Lemma \ref{lemma:6}, we obtain:
    \begin{align}
        \mathbb{E}_t[\|\mathbf{v}^t\|^2] &\leq \mathbb{E}_t \left[
        \frac{(1 + \omega)}{K^2} \sum_{k=1}^K \| \tilde{\mathbf{g}}_k^t - \nabla f_k(\mathbf{x}^t) \|^2
        + \frac{\omega}{K} \mathcal{S}^t + \|\nabla f(\mathbf{x}^t)\|^2 \right] \nonumber\\
        &\overset{(\ref{eq:assump4a_app})}{\leq} \mathbb{E}_t \left[
        \frac{(1 + \omega)}{K} (\Gamma_1 \Delta^t+\Gamma_2)
        + \frac{\omega}{K} \mathcal{S}^t + \|\nabla f(\mathbf{x}^t)\|^2 \right] \label{eq:new_lemma6}
    \end{align}

    Substituting Equation \eqref{eq:new_lemma6} into \eqref{eq:34}, we obtain as follows:
    \begin{align}
        \mathbb{E}_t[\Phi_{t+1}] &\leq f(\mathbf{x}^{t+1}) - f^* \nonumber \\
        & \quad + \left[\left(\frac{1}{2} + \alpha \Gamma_4+2 \beta(1+\omega)\right) \frac{(1+\omega)\Gamma_1\lambda_t^2}{K} +\alpha(1-\omega)+\frac{\beta \Gamma_1}{(1+\omega)L^2} \right] L \Delta^t \nonumber \\
        & \quad + \left[\left(\frac{1}{2} + \alpha \Gamma_4 + 2 \beta (1+\omega) \right)\frac{\omega L^2 \lambda_t^2}{K} + \beta \big(1-\frac{1}{2(1+\omega)} \big) \right] \frac{\mathcal{S}^t}{L} \nonumber \\
        & \quad - \left[\lambda_t - \left(\frac{1}{2} + \alpha \Gamma_4 + 2 \beta (1+\omega) \right) L \lambda_t^2 - \alpha L \Gamma_3 \right] ||\nabla f(x^t)||^2 \nonumber \\
        & \quad + \left[ \left(\frac{1}{2} + \alpha \Gamma_4 + 2 \beta (1+\omega) \right) \frac{(1+\omega)L \lambda_t^2}{K} + \frac{\beta}{(1+\omega)L} \right] \Gamma_2 \label{eq:54}
    \end{align}

    To ensure that the right-hand side of Equation \eqref{eq:54} remains consistent with the potential function $\Phi_t := f(\mathbf{x}^t) -f^* + \alpha L\Delta^t + \frac{\beta}{L} \mathcal{S}_t$, we select the parameters $\alpha$, $\beta$, and $\lambda_t$ to satisfy the following constraints: 
    \begin{align}
         \left(\frac{1}{2} + \alpha \Gamma_4+2 \beta(1+\omega)\right) \frac{(1+\omega)\Gamma_1\lambda_t^2}{K} +\alpha(1-\omega)+\frac{\beta \Gamma_1}{(1+\omega)L^2} \leq \alpha \label{eq:cond1}
     \end{align}
    
    \begin{align}
        \left(\frac{1}{2} + \alpha \Gamma_4 + 2 \beta (1+\omega) \right)\frac{\omega L^2 \lambda_t^2}{K} + \beta \big(1-\frac{1}{2(1+\omega)} \big) \leq \beta \label{eq:cond2}
    \end{align}

    Although these are not the strictest possible bounds 
    for a fair comparison with the utility results of SoteriaFL, we adopt the same choices for $\alpha$, $\beta$, and $\lambda_t$, ensuring they satisfy conditions \eqref{eq:cond1} and \eqref{eq:cond2}:
    \begin{equation}
        \alpha \geq \frac{3\beta \Gamma_1}{2(1+\omega)L^2\theta}  \quad \quad \quad \forall \beta>0 \label{eq:alpha}
    \end{equation}
    \begin{equation}
        \lambda_t \equiv\lambda \leq \frac{\sqrt{\beta K}}{\sqrt{1+2\alpha \Gamma_4 + 4 \beta (1+\omega )}(1+\omega)L} \label{eq:lambda1}
    \end{equation}

    Here, Equation \eqref{eq:alpha} follows from the constraint in \eqref{eq:cond1}, while \eqref{eq:lambda1} ensures compatibility with the potential function definition in \eqref{eq:pot_function}. Additionally, we impose a further bound on $\lambda_t$ to guarantee that the negative gradient squared term remains sufficiently large (i.e., \(\ge \tfrac{\lambda_t}{2}\,\|\nabla f(\mathbf{x}^t)\|^2\)), obtaining:
    \begin{equation}
        \lambda_t \equiv\lambda \leq \frac{1}{(1+2\alpha \Gamma_4 +4 \beta(1+\omega) + 2 \alpha \Gamma_3 /\lambda
        ^2)L} \label{eq:lambda2}
    \end{equation}


    Finally, substituting the conditions \eqref{eq:alpha}–\eqref{eq:lambda2} into Equation \eqref{eq:54}, we obtain:  
    \begin{equation}
        \mathbb{E}_t[\Phi_{t+1}] \leq \Phi_t - \frac{\lambda_t}{2} ||\nabla f(x^t)||^2 + \frac{3 \beta}{2(1+\omega)L}\Gamma_2
    \end{equation}
    The last term is obtained directly by applying the bound from Equation \eqref{eq:lambda2}, completing the proof.

\end{proof}

\subsection{Utility Comparison} \label{app:utility_comparison_th}
\begin{table}[t]
\caption{Asymptotic utility / accuracy bounds (average squared gradient norm after $T$ rounds) for different (local) differentially-private FL algorithms for the nonconvex problem in \eqref{eq:fedavg_opt}, compared to the non-DP utility bound of \textsc{eris}\textnormal{-}SGD with DSC. Here $K$ is the number of clients, $m$ the number of samples per client, $n$ the model dimension, $\omega$ the compressor variance parameter, and $(\varepsilon,\delta)$ the privacy parameters. All bounds hide absolute constants and, where standard, additional logarithmic factors. For SoteriaFL, $\tau :=(1+\omega)^{3/2} / \sqrt{K}$. Note that smaller values of the bound correspond to better utility / accuracy.}
\centering
\small
\begin{tabular}{lll}
\toprule
\textbf{Algorithm} & \textbf{Privacy} &
\textbf{Utility / Accuracy} \\[2pt]
\midrule
Distributed DP\textnormal{-}SRM~\citep{wangEfficientPriv2023} &
$(\varepsilon,\delta)$-DP &
$\tilde{\mathcal{O}}\!\left(
    \dfrac{\sqrt{n\log(1/\delta)}}{K m \varepsilon}
\right)$ \\[7pt]
SDM\textnormal{-}DSGD~\citep{zhangPrivateComm2020a} &
$(\varepsilon,\delta)$-LDP &
$\tilde{\mathcal{O}}\!\left(
    \dfrac{\sqrt{n\log(1/\delta)}}{\sqrt{K} m \varepsilon}
\right)$ \\[7pt]
Q\textnormal{-}DPSGD\textnormal{-}1~\citep{dingDifferentiallyPriv2021a} &
$(\varepsilon,\delta)$-LDP &
$\tilde{\mathcal{O}}\!\left(
    \dfrac{\bigl(\frac{\tilde{\nu}^2}{K}+\frac{1}{m}\bigr)^{\!\!2/3}
           \bigl(n\log(1/\delta)\bigr)^{1/3}}
          {m^{2/3}\varepsilon^{2/3}}
\right)$ \\[7pt]
CDP\textnormal{-}SGD~\citep{soteriafl2022} &
$(\varepsilon,\delta)$-LDP &
$\tilde{\mathcal{O}}\!\left(
    \dfrac{\sqrt{(1+\omega)\,n\log(1/\delta)}}{\sqrt{K} m \varepsilon}
\right)$ \\[7pt]
SoteriaFL\textnormal{-}SGD~\citep{soteriafl2022} &
$(\varepsilon,\delta)$-LDP &
$\tilde{\mathcal{O}}\!\left(
    \dfrac{\sqrt{(1+\omega)\,n\log(1/\delta)}}{\sqrt{K} m \varepsilon}\,
    (1+\sqrt{\tau})
\right)$ \\[6pt]
\textsc{eris}\textnormal{-}SGD (+\textsc{dsc})  &
--- &
$\tilde{\mathcal{O}}\!\left(
    \dfrac{\sqrt{1+\omega}}{\sqrt{K}m}
\right)$ \\[2pt]
\bottomrule
\end{tabular}
\label{tab:utility_comparison}
\end{table}

Table~\ref{tab:utility_comparison} summarizes the asymptotic utility/accuracy guarantees of existing differentially-private FL algorithms, most of them with communication compression, and compares them to our non-DP utility bound for \textsc{eris}\textnormal{-}SGD with DSC. Distributed DP\textnormal{-}SRM~\citep{wangEfficientPriv2023} provides a global $(\varepsilon,\delta)$\nobreakdash-DP baseline without compression: its utility improves linearly in the number of clients $K$ and in the number of samples per client $m$, but it does not consider LDP and therefore is not directly comparable to the LDP-based protocols in the rest of the table.

SDM\textnormal{-}DSGD~\citep{zhangPrivateComm2020a} and Q\textnormal{-}DPSGD\textnormal{-}1~\citep{dingDifferentiallyPriv2021a} are early attempts to combine local DP with compressed communication. However, SDM\textnormal{-}DSGD assumes random-$k$ sparsification and requires $1+\omega \ll \log T$ (i.e., communicating at least $k \gtrsim n / \log T$ coordinates per round), and its bound hides logarithmic factors that grow faster than $(1+\omega)$. Q\textnormal{-}DPSGD\textnormal{-}1 relies on a different compression assumption ($\mathbb{E}\bigl[\|\mathcal{C}(\mathbf{x})-\mathbf{x}\|^2 \bigr] \leq \tilde{\nu}^2$, with parameter $\tilde{\nu}^2$ playing a similar role to our $1+\omega$) and incurs a strictly worse utility by a factor $T^{1/6}$ compared to later methods, as already observed in~\citep{soteriafl2022}.

CDP\textnormal{-}SGD~\citep{soteriafl2022} can be seen as a direct compressed analogue of DP\textnormal{-}SGD: it achieves $(\varepsilon,\delta)$\nobreakdash-LDP and a utility that degrades with $\sqrt{(1+\omega)n}/(\sqrt{K}\, m \varepsilon)$, but still requires $\mathcal{O}(m^2)$ communication rounds when the local dataset size $m$ is large. SoteriaFL\textnormal{-}SGD/GD improves upon CDP\textnormal{-}SGD via shifted compression: it preserves the same dependence on $(1+\omega),K,m,n$ up to a mild factor $(1+\sqrt{\tau})$, where $\tau=(1+\omega)^{3/2}/\sqrt{K}$ becomes negligible as $K \gg (1+\omega)^3$, while reducing the total communication to $\mathcal{O}(m)$ rounds.

In contrast, \textsc{eris}\textnormal{-}SGD with DSC does not inject any differentially-private noise to ensure formal $(\varepsilon,\delta)$\nobreakdash-DP (although standard LDP mechanisms can be applied on top of it as shown in Figure~\ref{fig:pareto_16}), and thus its bound cannot be directly compared in terms of privacy guarantees. Nevertheless, once privacy noise is removed, our analysis shows that \textsc{eris} achieves a dimension-free non-private utility bound that scales as $\tilde{\mathcal{O}} \!\bigl(\sqrt{1+\omega} / (m\sqrt{K})\bigr)$, yielding faster convergence under the same optimization assumptions. Moreover, \textsc{eris} exhibits the same favorable dependence on the number of clients $K$ and on the compression variance $(1+\omega)$ as SoteriaFL-style methods, while operating in a fully serverless, sharded architecture. Empirically (see Figure~\ref{fig:pareto_16}), when we add the \emph{same} LDP mechanism to both methods, \textsc{eris} and SoteriaFL achieve comparable utility for a given $(\varepsilon,\delta)$, but \textsc{eris} requires less additional noise thanks to the inherent privacy amplification provided by its decentralised aggregation scheme. Empirically, when we add the same LDP mechanism to both methods, \textsc{eris} and SoteriaFL converge to essentially the same $(\varepsilon,\delta)$\nobreakdash-DP utility bound (i.e., the same dependence on $(1+\omega)$, $K$, $m$, and $n$). However, \textsc{eris} requires less injected noise to reach this regime, thanks to the privacy amplification inherent in its decentralized, sharded aggregation architecture.

\section{Privacy Guarantees for \textsc{eris} with DSC} 
\label{app:privacy_proof}
In this section, we present the detailed proof of Theorem~\ref{thm:privacy}, which establishes an upper bound on the information leakage incurred by \textsc{eris} with DSC under the honest-but-curious threat model. The analysis follows an information-theoretic approach by bounding the mutual information between a client's local dataset $D_k$ and the adversary's partial view of the transmitted model updates \( \mathbf{v}_{k,(a)}^t=(\tilde{\mathbf{g}}_k^t - \mathbf{s}_k^t) \odot \mathbf{m}_{\mathcal{C}_k^t} \odot \mathbf{m}_{(a)}^t \) over $T$ communication rounds. We then extend the result to colluding adversaries, who may share observations to amplify their attack.

\subsection{Proof of Theorem \ref{thm:privacy}}
\begin{proof}
For lighter notation, we first define a single combined mask \(\mathbf{m}_k^t := \mathbf{m}_{\mathcal{C}_k^t} \odot \mathbf{m}_{(a)}^t\) to streamline notation and directly leverage its properties.  
Next, rather than working with \(\tilde{\mathbf{g}}_k^t - \mathbf{s}_k^t\), we substitute the parameter vector \(\mathbf{x}_k^{t+1}\).  
Because \(\mathbf{x}_k^{t+1}\) is fully determined by \(\mathbf{x}_k^t\) and \(\tilde{\mathbf{g}}_k^t - \mathbf{s}_k^t\), i.e., \(\mathbf{x}_k^{t+1} = \mathbf{x}_k^t - \lambda\,(\tilde{\mathbf{g}}_k^t - \mathbf{s}_k^t)\), it carries the same information in an information-theoretic sense. This allows us to simplify the derivations without affecting the validity of the privacy analysis.
We denote, in the end, by \(\mathcal{H}_t\) the full public transcript up to round \(t\): it contains every masked update and model weight at each round up to \(t\). More precisely, 
\[
\mathcal H_t \;:=\; 
\sigma\Bigl(
\; \big\{\,\mathbf{x}_k^{\ell}\!\odot\!\mathbf{m}_k^{\ell}
      :\;\ \ell=0,\dots,t\big\}\, \cup \big\{\,\mathbf{x}_k^{\ell}:\;\ \ell=0,\dots,t\big\}\Bigr).
\]

\newpage

\begin{align*}
	I\bigl(\mathbf{D}_k ; \{\mathbf{x}_k^{t+1}\!\odot\!\mathbf{m}_k^{t+1}\}_{t=0}^{T-1}\bigr)
	&\overset{(a)}{=}
	\sum_{t=0}^{T-1}
	I\bigl(\mathbf{D}_k ; \mathbf{x}_k^{t+1}\!\odot\!\mathbf{m}_k^{t+1}
	\,\bigm|\, \mathcal H_t\bigr) \\[6pt]
	&\overset{(b)}{\le}
    \sum_{t=0}^{T-1}
      I\bigl(\mathbf{D}_k ; \mathbf{x}_k^{t+1}\!\odot\!\mathbf{m}_k^{t+1}\,\bigm|\, \mathcal H_t,\mathbf{m}_k^{t+1}\bigr) \\[-2pt]
         &{=}
    \sum_{t=0}^{T-1}
      \mathbb{E}\bigl[I\bigl(\mathbf{D}_k ; \mathbf{x}_k^{t+1}\!\odot\!\mathbf{m}_k^{t+1}\,\bigm|\, \mathcal H_t,\mathbf{m}_k^{t+1}=\mathbf{m}\bigr)\bigr] \\[-2pt]
\end{align*}
Step (a) follows from the chain rule for mutual information and defintion of \(\mathcal{H}_t\), while step (b) follows from the identities:
\begin{align*}
	I(U;V\mid H)
	&= I(U;V,M\mid H) - I(U;M\mid H,V)
	\\[4pt]
	&= \bigl[\,I(U;M\mid H) + I(U;V\mid H,M)\,\bigr] - I(U;M\mid H,V)\\[4pt]
	&= I(U;V\mid H,M) - I(U;M\mid H,V)\\[6pt]
	&\le I(U;V\mid H,M)\\[6pt]
	&= I\bigl(\mathbf D_k;\,\mathbf x_k^{t+1}\!\odot\!\mathbf m_k^{t+1}
	\mid \mathcal H_t,\ \mathbf m_k^{t+1}\bigr).
\end{align*}
where \(U=\mathbf D_k,\quad V=\mathbf x_k^{t+1}\odot\mathbf m_k^{t+1},\quad M=\mathbf m_k^{t+1},\quad H=\mathcal H_t.\)
Here, we used the independence of the mask (\(I(U;M\mid H)=0\)), and the inequality follows from the nonnegativity of mutual information.  

Finally, fix any mask realization \(\mathbf{m}\), and let
\[
S(\mathbf{m}) = \{\,i : m_i = 1\}
\]
denote the set of revealed coordinates.  Then
\begin{align*}
I\!\bigl(\mathbf{D}_k ; \mathbf{x}_k^{t+1} \odot \mathbf{m}_k^{t+1}\,\bigm|\, \mathcal H_t,\mathbf{m}_k^{t+1}=\mathbf m\bigr)
&=
I\bigl(\mathbf{D}_k;\{\mathbf{x}_{k,i}^{t+1}\}_{i\in S(\mathbf{m})}\bigm|\,\mathcal H_t\bigr)\\
&\le
\sum_{i\in S(\mathbf{m})} I\bigl(\mathbf{D}_k;\mathbf{x}_{k,i}^{t+1}\mid \mathcal{H}_t\bigr)
\;\le\;
|S(\mathbf{m})|\,C_{\max},
\end{align*}

where
\[
C_{\max} := \max_{i,t,\mathcal H_t} I\bigl(\mathbf{D}_k;\mathbf{x}_{k,i}^{t+1}\mid \mathcal{H}_t\bigr).
\]
Since each of the \(n/A\) coordinates is retained with probability \(p\), we have \(\mathbb{E}[|S(\mathbf{m})|] = np/A\).  Taking expectations gives
\[
\mathbb{E}\bigl[I(\mathbf{D}_k;\,\mathbf{x}_k^{t+1}\odot \mathbf{m}_k^{t+1}\mid \mathbf{m}_k^{t+1},\mathcal{H}_t)\bigr]
\le
\frac{n}{A}\,p\,C_{\max},
\]
and summing over \(t=0,\dots,T-1\) yields
\[
I\bigl(\mathbf{D}_k;\{\mathbf{x}_k^{t+1}\odot \mathbf{m}_k^{t+1}\}_{t=0}^{T-1}\bigr)
\le
T\,\frac{n}{A}\,p\,C_{\max}.
\]
\end{proof}
\begin{remark} \label{remark:privacy_remark}
Assuming the individual model weights are distributed conditionally on \( D_k \) and \(\mathcal{H}_t \) as \(
\mathbf{x}_{k,i}^{t+1} \mid D_k,\mathcal{H}_t \sim \mathcal{N}(\mu(D_k), \sigma_{\text{cond}}^2),\)
while \(
\mathbf{x}_{k,i}^{t+1} \mid \mathcal{H}_t \sim \mathcal{N}(\mu, \sigma^2).\)
This allows us to use properties of differential entropy for Gaussian distributions. Thus, we have:
\[
\setlength{\abovedisplayskip}{3pt} 
\setlength{\belowdisplayskip}{3pt}
I\bigl(\mathbf{D}_k;\mathbf{x}_{k,i}^{t+1}\mid \mathcal{H}_t\bigr) = H(\mathbf{x}_{k,i}^{t+1}\mid \mathcal{H}_t) - H(\mathbf{x}_{k,i}^{t+1} \mid D_k,\mathcal{H}_t)
\leq \frac{1}{2} \log\left( \frac{\sigma^2}{\sigma_{\text{cond}}^2} \right)
= \frac{1}{2} \log\left( 1 + \text{SNR} \right),
\]
where the signal-to-noise ratio (SNR) is defined as \(\text{SNR} = \frac{\sigma^2 - \sigma_{\text{cond}}^2}{\sigma_{\text{cond}}^2}.\) Thus, from the above, it follows that \(C_{\max} \leq \frac{1}{2} \log(1 + \text{SNR}).\)
\end{remark}
\subsection{Privacy Under Colluding Aggregators} \label{app:privacy_collusion}

We now extend our analysis to a coalition of aggregators that share their shards before attempting the attack.  
Let $\mathcal{C}\subseteq\{1,\dots,A\}$ denote the set of colluding aggregators with cardinality $A_c := |\mathcal{C}|$.

\begin{corollary}[Colluding–aggregator privacy bound]
\label{cor:colluding_privacy}
Assume the setting of Theorem~\ref{thm:privacy}.
For every communication round $t\in\{1,\dots,T\}$ let the union mask
\[
  \mathbf{m}_{\mathrm{col}}^{\,t}  
  := \bigvee_{a\in\mathcal{C}}\mathbf{m}_{(a)}^{t}
  \quad\bigl(\;\vee\text{ denotes the element‑wise logical \textsc{or}}\bigr)
\]
select the coordinates revealed to the colluding coalition.
Define the coalition’s view of client $k$ at round $t$ as
\[
  \mathbf{v}_{k,\mathrm{col}}^{\,t}
  := (\tilde{\mathbf{g}}_k^t - \mathbf{s}_k^t)\;\odot\;
     \mathbf{m}_{\mathcal{C}_k^t}\;\odot\;
     \mathbf{m}_{\mathrm{col}}^{\,t}.
\]
Assuming that \(\max_{i,t,\mathcal{H}_t} I\left( D_k ; \mathbf{x}_{k,i}^{t+1} \mid \mathcal{H}_t\right) < \infty\) then, under the \emph{honest‑but‑curious} threat model, the mutual information between the client’s private dataset $D_k$ and the coalition’s transcript over $T$ rounds satisfies
\[
  I\!\left(D_k;\{\mathbf{v}_{k,\mathrm{col}}^{\,t}\}_{t=1}^T\right)
  \;\le\;
  n\,T\;\frac{pA_c}{A}\;C_{\max},
\]
where $C_{\max}$ is exactly the per‑coordinate mutual information bound given in Theorem~\ref{thm:privacy}.
\end{corollary}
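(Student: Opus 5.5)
The plan is to rerun the argument from the proof of Theorem~\ref{thm:privacy}, replacing the single-aggregator mask $\mathbf{m}_{(a)}^t$ by the union mask $\mathbf{m}_{\mathrm{col}}^t$. The only new ingredient is a count of how many coordinates the union mask exposes.

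\textbf{Step 1 (support of the union mask).} By the disjointness property of the FSA masks, the element-wise OR over $\mathcal{C}$ coincides with the sum,
\[
\mathbf{m}_{\mathrm{col}}^t=\sum_{a\in\mathcal{C}}\mathbf{m}_{(a)}^t .
\]
Hence it is again a binary mask. Its support is the disjoint union of the $A_c$ shard supports, each of size $n/A$ (balanced shards, as in Theorem~\ref{thm:privacy}), so $|\mathrm{supp}(\mathbf{m}_{\mathrm{col}}^t)|=nA_c/A$.

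We then define the combined mask
\[
\mathbf{m}_{k,\mathrm{col}}^t:=\mathbf{m}_{\mathcal{C}_k^t}\odot\mathbf{m}_{\mathrm{col}}^t .
\]
Each coordinate in the support of $\mathbf{m}_{\mathrm{col}}^t$ is retained by the compression mask independently with probability $p$. Therefore the revealed set $S(\mathbf{m})=\{i: m_i=1\}$ satisfies $\mathbb{E}[|S(\mathbf{m})|]=npA_c/A$.

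\textbf{Step 2 (chain rule and conditioning on the mask).} As in the theorem, we replace the shifted update by the parameter vector $\mathbf{x}_k^{t+1}$, which carries the same information. We let $\mathcal{H}_t$ be the coalition's transcript up to round $t$. The chain rule gives
\[
I\bigl(D_k;\{\mathbf{v}_{k,\mathrm{col}}^t\}\bigr)=\sum_t I\bigl(D_k;\mathbf{x}_k^{t+1}\odot\mathbf{m}_{k,\mathrm{col}}^{t+1}\mid\mathcal{H}_t\bigr).
\]
We then add the mask to the conditioning using the same identity $I(U;V\mid H)\le I(U;V\mid H,M)$. This relies on $I(U;M\mid H)=0$, which holds because the shard masks and compression masks are drawn independently of $D_k$. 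Collusion does not change this: the union of data-independent masks is still data-independent.

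\textbf{Step 3 (per-coordinate bound).} For a fixed mask realization, subadditivity over the revealed coordinates gives
\[
I\bigl(D_k;\{x_{k,i}^{t+1}\}_{i\in S(\mathbf{m})}\mid\mathcal{H}_t\bigr)\le |S(\mathbf{m})|\,C_{\max}.
\]
This step is the same one used in Theorem~\ref{thm:privacy} and is assumed in the same way. Taking the expectation over masks and applying Step 1 gives a per-round bound of $npA_c C_{\max}/A$. Summing over $T$ rounds yields the stated bound.

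\textbf{Main obstacle.} The hardest point is Step 3's use of $C_{\max}$. In the theorem, $C_{\max}$ is a maximum over histories $\mathcal{H}_t$ for a single observer. Here the coalition's transcript is richer: it includes all shards from $\mathcal{C}$. So we need the maximum in $C_{\max}$ to be taken over all transcripts of revealed masked updates. I would argue that this is already how $C_{\max}$ is defined, since it is a maximum over $i$, $t$ and $\mathcal{H}_t$ covering any such history. Under that reading the constant is unchanged and no extra term appears.
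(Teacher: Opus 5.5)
Your proposal follows the paper's proof essentially verbatim. The paper likewise replaces $\mathbf{m}_{(a)}^t$ by the union mask, uses pairwise disjointness to get $A_c n/A$ exposed coordinates with retention probability $p$ unchanged, and reruns the chain-rule, mask-conditioning and per-coordinate steps of Theorem~\ref{thm:privacy}, so the bound picks up the factor $A_c$. Your remarks on OR versus sum, on the data-independence of the union mask, and on reading $C_{\max}$ as a maximum over all transcripts are details the paper leaves implicit.

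The only caveat, shared with the paper, is the step you call subadditivity, $I(D_k;\{\mathbf{x}_{k,i}^{t+1}\}_{i\in S}\mid\mathcal{H}_t)\le\sum_{i\in S}I(D_k;\mathbf{x}_{k,i}^{t+1}\mid\mathcal{H}_t)$. This is not a general property of mutual information: it can fail when coordinates are only jointly informative about $D_k$. It holds, for example, when the revealed coordinates are conditionally independent given $(D_k,\mathcal{H}_t)$, so your argument is exactly as complete as the paper's.
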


\begin{proof}
The extension to colluding parties follows exactly the same steps as in Appendix~\ref{app:privacy_proof}, with a single modification: replace the per‐shard mask $\mathbf{m}_{(a)}^t$ by the union mask
\[
  \mathbf{m}_{\mathrm{col}}^{\,t}
  \;=\;\bigvee_{a\in\mathcal{C}}\mathbf{m}_{(a)}^{t},
\]
where $\mathcal{C}$ is the set of colluding shards of size $A_c$.  Since the original shards are pairwise disjoint, $\mathbf{m}_{\mathrm{col}}^{\,t}$ exposes exactly
\[
  \bigl|\mathbf{m}_{\mathrm{col}}^{\,t}\bigr|
  = A_c\,\frac{n}{A}
\]
coordinates per round, and remains statistically independent of the corresponding values of $\mathbf{x}_k^{t+1}$.

 Under collusion, the set \(S\) of revealed coordinates simply enlarges is mean to $A_c\,n/A$ coordinates, while the retention probability $p$ remain unchanged.  Hence the entire inner sum is multiplied by $A_c$.

Substituting this modification into the rest of the derivation yields
\[
  I\bigl(D_k;\{\mathbf{v}_{k,\mathrm{col}}^{\,t}\}_{t=0}^{T-1}\bigr)
  \;\le\;
  T\,n\,\frac{p\,A_c}{A}\,C_{\max},
\]
as claimed.  In particular:
\begin{itemize}
  \item If $A_c=1$, this reduces to Theorem~\ref{thm:privacy}.
  \item If $A_c=A$, the sharding protection vanishes and the bound becomes
  \(
    I \le n\,T\,p\,C_{\max},
  \)
  governed solely by the compression mechanism.
\end{itemize}

\end{proof}

\begin{remark}
Corollary~\ref{cor:colluding_privacy} shows that the privacy loss grows \emph{linearly} with the coalition size~$A_c$.  
Consequently, anticipating up to $A_c^{\max}$ colluding aggregators, one can retain the original leakage level of Theorem~\ref{thm:privacy} by increasing the shard count to $A \,\mapsto\, A\cdot A_c^{\max}$ or, equivalently, by decreasing the retention probability to $p \,\mapsto\, p/A_c^{\max}$, thereby preserving the product $\tfrac{pA_c}{A}$.
\end{remark}

\begin{figure}
  \captionsetup{skip=3pt} 
  \centering
  \includegraphics[width=0.4\textwidth]{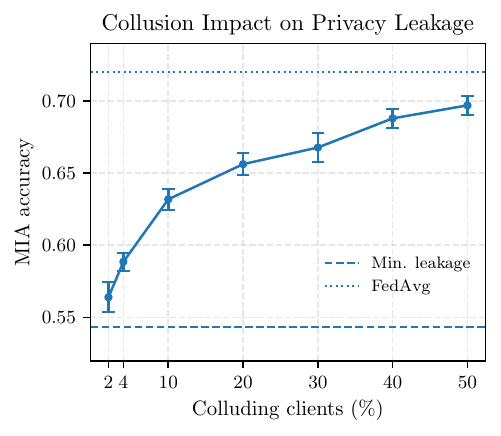} 
  \caption{Impact of honest-but-curious client collusion in \textsc{eris}.}
  \label{fig:collusion}
\end{figure}
\paragraph{Empirical Validation}
To assess the robustness of \textsc{eris} against coordinated leakage attempts, we evaluate how MIA accuracy evolves as multiple honest-but-curious clients collude by sharing their received shards. Figure~\ref{fig:collusion} reports the resulting leakage curve. As the collusion group grows, MIA accuracy increases smoothly but remains consistently below the FedAvg baseline and close to the minimum achievable leakage, even when 50\% of clients collude. These results confirm that the shard-based decomposition in \textsc{eris} meaningfully amplifies privacy, limiting the adversary’s advantage even under strong collusion scenarios.

\section{Experimental Setup} \label{app:experimental_setup}
This section provides additional details on the experimental configuration used throughout the paper, including model architectures, training protocols, and hardware resources. We also describe the software libraries, dataset licenses, and implementation details to ensure full reproducibility. 

\subsection{Models and Hyperparameter Settings} \label{app:model_and_hyper}
We use 5-fold cross-validation across all experiments, varying the random seed for both data generation and model initialization to ensure reproducibility. Each dataset is paired with an appropriate architecture: GPT-Neo \citep{gptneo} (\texttt{EleutherAI/gpt-neo-1.3B}, 1.3B parameters) from HuggingFace for CNN/DailyMail, DistilBERT \citep{Sanh2019DistilBERTAD} (\texttt{distilbert-base-uncased}, 67M parameters) for IMDB, ResNet-9 \citep{resnet2016} (1.65M parameters) for CIFAR-10, and LeNet-5~\citep{lenet} (62K parameters) for MNIST. 
For both IID and non-IID settings, we use one local update per client per round (i.e., unbiased gradient estimator), except for GPT-Neo, where memory constraints require two local epochs with a batch size of 8. In the biased setting (multiple local updates per round), we use a batch size of 16 for IMDB and 64 for CIFAR-10 and MNIST under IID conditions.  In all settings, each client reserves 30\% of its local data for evaluation. To ensure fair comparison of communication costs—which directly depend on the number of rounds—we cap the total rounds for all baselines at the point where FedAvg converges, determined by the minimum validation loss (generally the first to converge). This results in 2-4 rounds for CNN/DailyMail, 14–22 for IMDB, 80–140 for CIFAR-10, and 120–250 for MNIST in the unbiased setting. In the biased setting (two local epochs per round), the ranges are 4–16 for IMDB, 60–140 for CIFAR-10, and 80–200 for MNIST. 
We use a learning rate of $5\text{e}{-5}$ for CNN/DailyMail and IMDB, and 0.01 for CIFAR-10 and MNIST. For optimization, we adopt Adam~\citep{kingmaAdam2017} (with $\texttt{weight\_decay}=0.0$, $\beta_1 = 0.9$, $\beta_2 = 0.999$, and $\epsilon = 1\text{e}{-8}$) on CNN/DailyMail and IMDB, and SGD~\citep{robbinsSDG1951} with momentum 0.9 for CIFAR-10 and MNIST. For experiments involving differential privacy, we use the Opacus library~\citep{opacus}.

\subsection{Implementation Details of Privacy Attacks} \label{app:privacy_attack_details}
We evaluate privacy leakage under the standard \emph{honest-but-curious} threat model, where an adversary (e.g., a compromised aggregator or server) can observe all transmitted model updates derived from each client's private dataset. We implement two widely studied categories of attacks: \textit{Membership Inference Attacks (MIA)} \citep{memb_inf_attack, zariMIA2021, liPassiveMIA2022, zhangMIA2023, heEMIA2024} and \textit{Data Reconstruction Attacks (DRA)} \citep{data_rec_gan, data_rec_gan2, grnn, FCleak_label, gradinv, dimitrovDataLeakage2022, zhangGradInv2023}.

\textit{Membership Inference Attacks.}
We adopt a distributed variant of the privacy auditing framework of \citet{steinkePrivacyAuditOne2023}. For each client, 50\% of the local training samples are designated as \emph{canary} samples, equally split between those included and excluded from training. After training, canaries are ranked by model confidence or gradient alignment; the top third are labeled as “in,” the bottom third as “out,” while the middle third are discarded to mitigate uncertainty bias. Evaluation is repeated on the same canary sets across all methods and folds of the cross-validation. To capture privacy leakage throughout training, MIA accuracy is computed at each round and for each client; the reported score corresponds to the maximum, over all $T$ rounds, of the average accuracy across $K$ clients. This ensures comparability across methods with different convergence speeds.

\textit{Data Reconstruction Attacks.}
For DRA, we adopt the strongest white-box threat model, where the adversary is assumed to access the gradient of a single training sample. We implement four representative gradient inversion methods: DLG \citep{DLG}, iDLG \citep{FCleak_label}, ROG \citep{yue_rog_attack2023}, and GGL \cite{liGGL2022a}. ROG is specifically tailored to reconstruct images from obfuscated gradients, while GGL further strengthens gradient inversion by using a generative prior and an adaptive loss that accounts for the gradient transformation induced by privacy defenses. All methods are evaluated on the same subset of 200 randomly sampled data points within each cross-validation fold to ensure fairness. 
For GGL, we use a fixed optimization budget of 300 steps per reconstruction trial. Reconstruction quality is assessed with LPIPS, 
capturing perceptual similarity. 
Further algorithmic descriptions on each attack are provided in Appendix~\ref{app:dra}.

\subsection{Licenses and Hardware} \label{app:code_licenses_hardware}
All experiments were implemented in Python 3.13 using open-source libraries: PyTorch 2.6 \citep{Paszke2019PyTorchAI} (BSD license), Flower 1.12 \citep{beutel2020flower} (Apache 2.0), Matplotlib 3.10 \citep{Hunter:2007} (BSD), Opacus 1.5 \citep{opacus} (Apache 2.0) and Pandas 2.2 \citep{pandas} (BSD). We used publicly available datasets: MNIST (GNU license), CIFAR-10,  IMDB (subject to IMDb’s Terms of Use), and CNN/DailyMail (Apache license 2.0). The complete codebase and instructions for reproducing all experiments are available on GitHub\footnote{\href{https://github.com/pako-23/eris}{https://github.com/pako-23/eris}} under the MIT license. Publicly available implementations were used to reimplement all baselines, except for FedAvg and Min. Leakage, which we implemented directly using Flower Library \citep{beutel2020flower}. We follow the recommended hyperparameters for baselines, setting the compression ratio of SoteriaFL to 5\% and the graph degree of Shatter to 4.

Experiments were run on a workstation with four NVIDIA RTX A6000 GPUs (48 GB each), dual AMD EPYC 7513 32-core CPUs, and 512 GB RAM.

\section{Additional Experiments and Analysis} \label{app:add_exps}
This section presents complementary experiments and empirical validations that reinforce the theoretical claims made in the main paper. We analyze the distributional properties of model weights to support the Gaussian condition in our privacy analysis, evaluate the scalability of \textsc{eris} (with and without DSC) through distribution time comparisons, and further assess its robustness against data reconstruction attacks. Additionally, we provide detailed utility–privacy trade-off results under both IID and non-IID settings, and with unbiased and biased gradient estimators across multiple datasets and training configurations.

\subsection{Empirical Validation of the Gaussian Assumption for Model Weights} \label{app:weight_distribution}
Remark \ref{remark:privacy_remark} gives a closed-form bound for $C_{\max}$ when each conditional weight $\mathbf{x}_{k,i}^{t+1}\!\mid\!D_k,\mathcal{H}_t$ and $\mathbf{x}_{k,i}^{t+1}\!\mid\!\mathcal{H}_t $ are (approximately) Gaussian. Verifying Gaussianity of the first case is the stricter—and therefore more informative—requirement. We thus track, for every client, the weights it uploads each round and examine these conditional distributions empirically. Figure \ref{fig:weight_distribution} plots these conditional weight histograms for three representative models—DistilBERT on IMDB, ResNet-9 on CIFAR-10, and LeNet-5 on MNIST. Each 3-D panel shows weight value (x-axis), training round (depth), and frequency (z-axis). Across all datasets, the distributions consistently approximate a zero-mean Gaussian shape ($\sim \mathcal{N}(0,\sigma_{\text{cond}})$. Although the standard deviation slightly varies during training, it remains well below 0.2 throughout. This evidence supports the sub-Gaussian premise in Remark \ref{remark:privacy_remark} and validates the constant $C_{\max}$ used in Theorem \ref{thm:privacy}.
\begin{figure*}[h!]
  \centering
  \includegraphics[width=1.\textwidth]{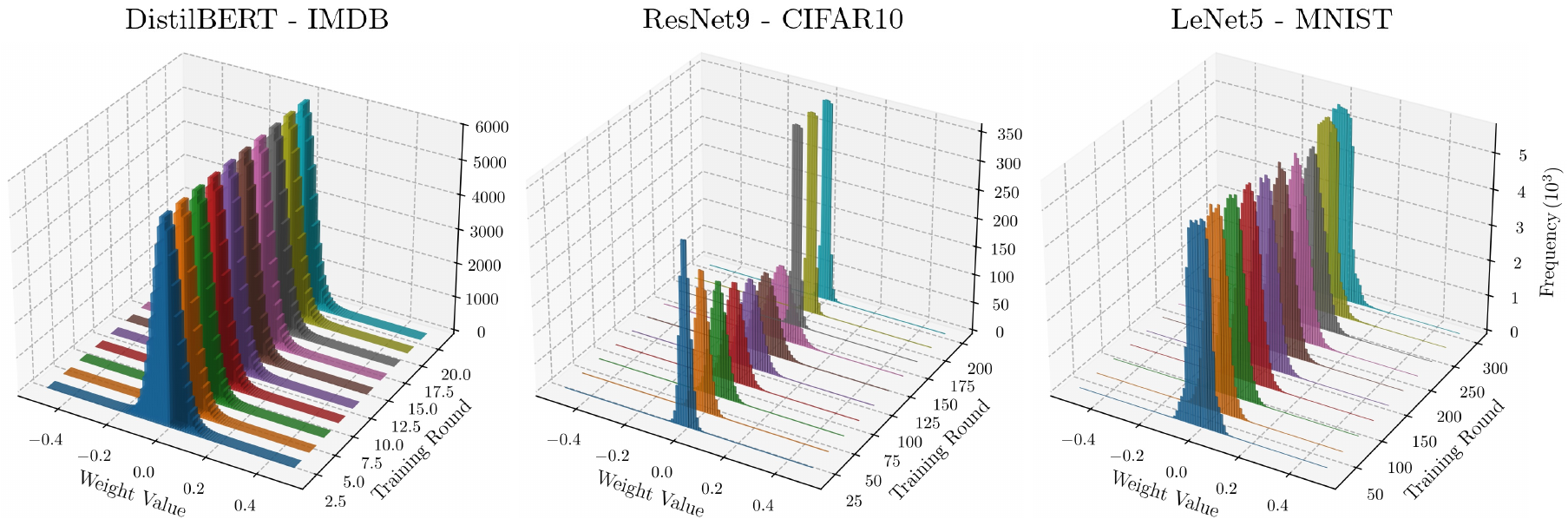}
    \caption{Conditional weight distributions ($\mathbf{x}_{k,i}^{t+1}\!\mid\!D_k,\mathcal{H}_t$) over training rounds for DistilBERT, ResNet-9 and LeNet-5. Each 3D plot shows the distribution of weight values (horizontal axis) over time  (depth axis), with frequency represented on the vertical axis. In all cases, the weight distributions remain $\!\sim\! \mathcal{N}(0,\sigma_{\text{cond}})$ with a $\sigma_{\text{cond}}\!<\!0.2$, validating the sub-Gaussian premise used in Remark \ref{remark:privacy_remark}.
}
  \label{fig:weight_distribution}
\end{figure*}

\subsection{Scalability and Efficiency of \textsc{eris}} \label{app:scalability}
To evaluate the scalability and communication efficiency of \textsc{eris}, we provide both a theoretical analysis of model distribution time and empirical comparisons with existing FL frameworks.

\subsubsection{Theoretical Analysis of Model Distribution Time}
We begin by quantifying the minimum time required to distribute models in a single training round under various FL setups. Here, the distribution time refers to the time needed for: (i) clients to transmit their local models to the aggregation parties (either a central server or a set of aggregators), and (ii) all clients to receive the updated global model. For clarity, we assume full client participation in each round; however, the same analysis readily extends to partial participation scenarios by adjusting the number of active clients accordingly. 

\paragraph{Single-server Federated Learning.} In traditional centralized FL, we consider a single server and $K$ clients. Let $u_s$ and $d_s$ denote the server’s upload and download rates, and let $u_k$, $d_k$ be the $k$-th client’s upload and download rates, respectively. Assume the model has $n$ parameters and each is represented as a 32-bit float, yielding a total model size of $b \approx 32 \cdot n$ bits. The distribution time in a single training round is governed by the following observations:

\begin{itemize}[noitemsep, topsep=0pt, parsep=2pt, partopsep=0pt, left=2em]
\item The server must collect $K$ local models, each of size $b$ bits, resulting in a total inbound traffic of $K \cdot b$ bits, received at a download rate $d_s$.
\item Each client $k$ uploads its local model at an individual rate $u_k$. The server cannot complete the upload phase until the slowest client—i.e., the one with the lowest $u_k$—has finished its transmission.
\item Once all local models are received, the server performs aggregation and then broadcasts the aggregated global model back to all $K$ clients. This requires transmitting another $K \cdot b$ bits at the server's upload rate $u_s$.
\item Model distribution concludes when every client has received the global model. This process is bounded by the client with the lowest download rate $d_k$, as it determines the last completed transfer.
\end{itemize}

Putting all these observations together, we derive the minimum distribution time in a single training round for a centralized FL setup without compression such as FedAvg, denoted by $D_{FedAvg}$.

\begin{equation}
D_{FedAvg} \geq \max\left\{\frac{K \cdot b}{d_s}, \frac{b}{\min\{u_1, \ldots, u_K\}}\right\} + \max\left\{\frac{K \cdot b}{u_s}, \frac{b}{\min\{d_1, \ldots, d_K\}}\right\}
\end{equation}

Here, the first term captures the server’s time to receive all local model uploads and the slowest client’s upload time, while the second term captures the server’s model broadcast time and the slowest client’s download time. 

To reduce distribution time, several FL methods focus on minimizing the volume of transmitted data per round, i.e., decreasing the effective model size $b$. For example, \textit{PriPrune}~\citep{priprune2024} applies structured pruning to eliminate a fraction $p$ of the model parameters before transmission, reducing the transmitted size to $b^{\prime} \leq 32 \cdot (1-p) \cdot n$ bits. Similarly, \textit{SoteriaFL}~\citep{soteriafl2022} compresses gradients using a shifting operator controlled by a compression factor $\omega$, leading to $b^{\prime} \leq 32 \cdot \frac{1}{\omega+1} \cdot n$ bits. In principle, \textsc{eris} could also compress both upload and download communication, e.g., by coordinating compression masks across clients so that the aggregated update remains compressed. However, for a fair comparison with baselines such as SoteriaFL and PriPrune, we adopt a worst-case communication setting in which clients use independent compression masks. As a result, upload communication is compressed, but the aggregated model shards returned by the aggregators are treated as uncompressed in the download phase.

\paragraph{Federated Shard Aggregation.}
We now extend the analysis to \textsc{eris}, where FSA distributes aggregation across $A \leq K$ aggregator nodes. Unlike centralized schemes, each client update is split into $A$ disjoint shards, and each aggregator handles only its assigned shard. Let $b'$ denote the transmitted update size: for FSA without DSC, $b'=b$, while with DSC, $b' \leq 32 \cdot \frac{1}{\omega+1} \cdot n$. To estimate the minimum distribution time in a single training round under \textsc{eris}, we consider the following:
\begin{itemize}[noitemsep, topsep=0pt, parsep=2pt, partopsep=0pt, left=2em]
\item Each aggregator must collect $K-1$ update shards from the clients, excluding its own, amounting to $(K-1)\cdot \frac{b'}{A}$ bits received per aggregator at download rate $d_k$. The aggregation process cannot proceed before the slowest aggregator receives all required shards.
\item Each client $k$ uploads one shard to each aggregator, sending a total of $b{\prime}$ bits. If the client is not serving as an aggregator (worst case), it must upload the entire set of $A$ shards at an upload rate $u_k$. The aggregation step is gated by the client with the lowest upload rate.
\item Once aggregation is complete, each aggregator redistributes its updated model shard to all $K$ clients. This amounts to sending $(K-1)\cdot \frac{b}{A}$ bits at upload rate $u_k$, constrained by the aggregator with the slowest upload speed.
\item Full model reconstruction occurs only after each client receives one shard from every aggregator. In the worst case, a non-aggregator client must download the complete updated model, i.e., $b$ bits, at rate $d_k$. The distribution concludes when the slowest client completes this transfer.
\end{itemize}

Putting all these observations together, we derive the minimum
distribution time in a single training round for \textsc{eris},
denoted by $D_{\textsc{eris}}$:

\begin{align}
D_{\textsc{eris}} \geq & \max\left\{\frac{(K-1)b'}{A \cdot \min\{d_1, \ldots, d_A\}}, \frac{b'}{\min\{u_1, \ldots, u_K\}}\right\} \nonumber \\
&+ \max\left\{\frac{(K - 1)b}{A \cdot \min\{u_1, \ldots, u_A\}}, \frac{b}{\min\{d_1, \ldots, d_K\}}\right\}
\end{align}

\paragraph{Decentralized baselines.}
While FSA distributes aggregation across multiple nodes, it differs from prior decentralized approaches because every round still recovers the full collaborative update after reassembly. We compare against two representative decentralized baselines. 

\textit{Ako}~\citep{ako2016} distributes gradient computations by splitting each model into $v$ 
disjoint partitions and randomly assigning them to worker nodes. However, this approach differs fundamentally from \textsc{eris}: in Ako, not all clients receive the full model in each round, which may hinder convergence to the standard FedAvg solution. Furthermore, in each round, a client uploads and receives $K$ partitions, equivalent to the full model size, resulting in substantial bandwidth usage. We can estimate the minimum distribution time for Ako, denoted by $D_{Ako}$, using a similar worst-case analysis:
\begin{equation}
 D_{Ako} \geq \max\left\{\frac{b}{\min\{d_1, \ldots, d_K\}}, \frac{b}{\min\{u_1, \ldots, u_K\}}\right\}
\end{equation}

\emph{Shatter}~\citep{biswasShatter2025} is also a privacy preserving distributed learning framework. In Shatter, each round consists of three steps. In the first step, each node updates its local model and divides the result in $l$ chunks.  Each client (real node) runs $l$ virtual nodes. The virtual nodes form an overlay network over which model parameter updates are multicast with a gossiping protocol. Once received $r$ updates for each of the $l$ virtual nodes running on a real node, the last step consists of the virtual nodes forwarding the received updates to the real node to perform the aggregation. Notice that in this setup, not all clients will receive all model updates, so the communication overhead is reduced at the cost of slower global model convergence. The model distribution occurs in the second step through a multicast among the virtual nodes. The model distribution cannot finish before each real node (via its virtual nodes) has finished uploading its model chunks, and has finished downloading model updates from $r$ other clients. Then, we need to account for the time to upload all the model updates with the total upload capacity being the sum of all individual node upload rates.
Therefore, we can estimate the minimum distribution time for Shatter, denoted by $D_{\text{Shatter}}$:

\begin{equation}
 D_{\text{Shatter}} \geq \max\left\{\frac{b}{\min\{u_1, \ldots, u_K\}}, \frac{r\cdot b}{\min\{d_1, \ldots, d_K\}}, \frac{r\cdot b}{\sum_{i=1}^K u_i}\right\}
\end{equation}

\subsubsection{Numerical Results}
\paragraph{Effect of Number of Clients and Model Size on Distribution Time.}
Figure~\ref{fig:distribution-time} compares the minimum distribution time per training round for \textsc{eris} (with and without DSC) and other FL frameworks under varying numbers of aggregators and model sizes. We assume homogeneous network conditions across all nodes, with upload and download rates fixed at 100 Mbps. For the baselines, we apply a pruning rate of 0.3 for PriPrune, a compression ratio of $1/(\omega+1) = 0.05$ for SoteriaFL and \textsc{eris} with DSC, and the overlay topology (i.e., graph degree) for Shatter forms a 4-regular graph. Note that $1/(\omega+1) = 0.05$ (with $\omega=18$) corresponds to the least aggressive compression used in our experiments, chosen for MNIST to preserve model utility. In other settings, such as IMDB, we adopt much stronger compression (e.g., $1/(\omega+1) = 0.00012$), further lowering communication overhead without harming performance. Thus, the results in Figure~\ref{fig:distribution-time} represent a conservative estimate of \textsc{eris}'s efficiency. 

On the left, Figure~\ref{fig:distribution-time} shows how distribution time scales with the number of participating clients: while all methods experience linear growth (except Ako and Shatter, which always exchange with a fixed number of neighbors), \textsc{eris} benefits from increased decentralization—achieving lower distribution times as the number of aggregators $A$ increases. In the worst-case setting ($A\!=\!2$), \textsc{eris} still achieves a 4$\times$ speedup over FedAvg and a 2$\times$ improvement over SoteriaFL. When $A\!=\!50$, these gains rise dramatically to 105$\times$ and 55$\times$, respectively, underscoring the scalability advantages of decentralized aggregation. Maximum efficiency is achieved when the number of aggregators matches the number of clients, maximizing parallelism and evenly distributing the communication load. Notably, Ako and Shatter remain constant with respect to the number of clients, as their communication pattern does not involve distributing the full model to all participants—at the expense of model consistency and convergence guarantees. On the right, Figure~\ref{fig:distribution-time} examines the impact of increasing model size with 50 clients. The results highlight the communication efficiency of decentralized approaches, especially \textsc{eris}, which outperforms traditional centralized frameworks as model size grows. This confirms the practical benefits of combining decentralized aggregation with compression.
\begin{figure*}[ht!]
    \captionsetup{skip=3pt}
    \centering
    \includegraphics[scale=0.55]{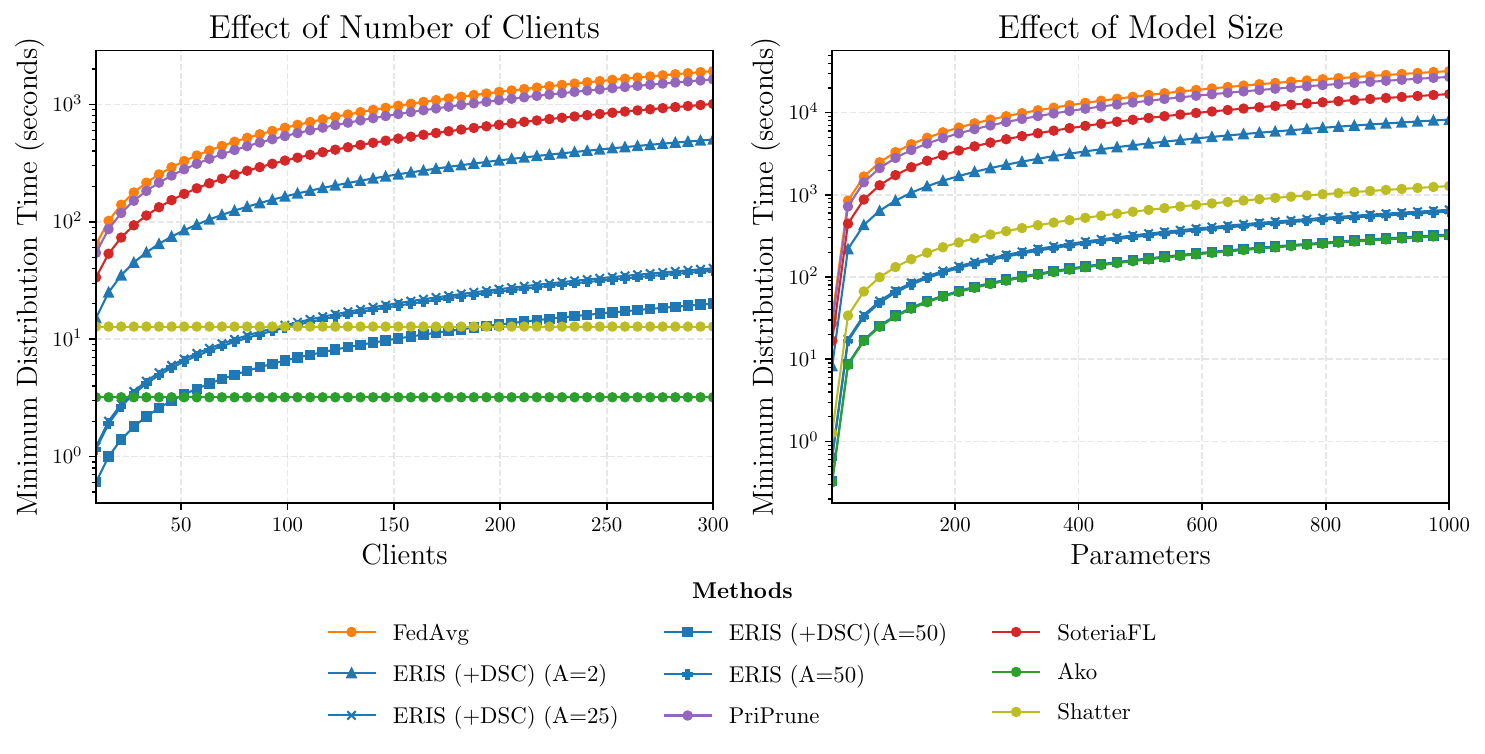}
    \caption{Minimum distribution time for a single training round for FedAvg, PriPrune, SoteriaFL, Ako, Shatter, and \textsc{eris}. The figure shows the minimum distribution time for a single round with $M = 320$ Mbit on a logarithmic scale (\textbf{left}), and the minimum distribution time for a single round with 50 training clients as the model size increases on a logarithmic scale (\textbf{right}).}
    \label{fig:distribution-time}
\end{figure*}

\paragraph{Effect of Transmission Rate.}
We further evaluate the sensitivity of \textsc{eris} to degraded communication links by varying the effective transmission rate while measuring the minimum per-round distribution time. This analysis complements Figure~\ref{fig:distribution-time}, which assumes fixed homogeneous bandwidth, by explicitly testing slower network conditions. As shown in Figure~\ref{fig:transmission-rate}, all methods exhibit the expected inverse relationship between transmission rate and distribution time. However, \textsc{eris} consistently remains more communication-efficient across the full range of rates. This advantage comes from two complementary properties: each communication link transmits only a model shard rather than a full update, and communication is parallelized across multiple aggregators. Increasing the number of aggregators further reduces latency by distributing the communication load more evenly, while DSC provides an additional reduction in the transmitted payload. Notably, even under substantially reduced transmission rates, \textsc{eris} with sufficiently many aggregators remains faster than the strongest baselines operating under more favorable bandwidth conditions.
\begin{figure*}[ht!]
    \captionsetup{skip=3pt}
    \centering
    \includegraphics[scale=0.55]{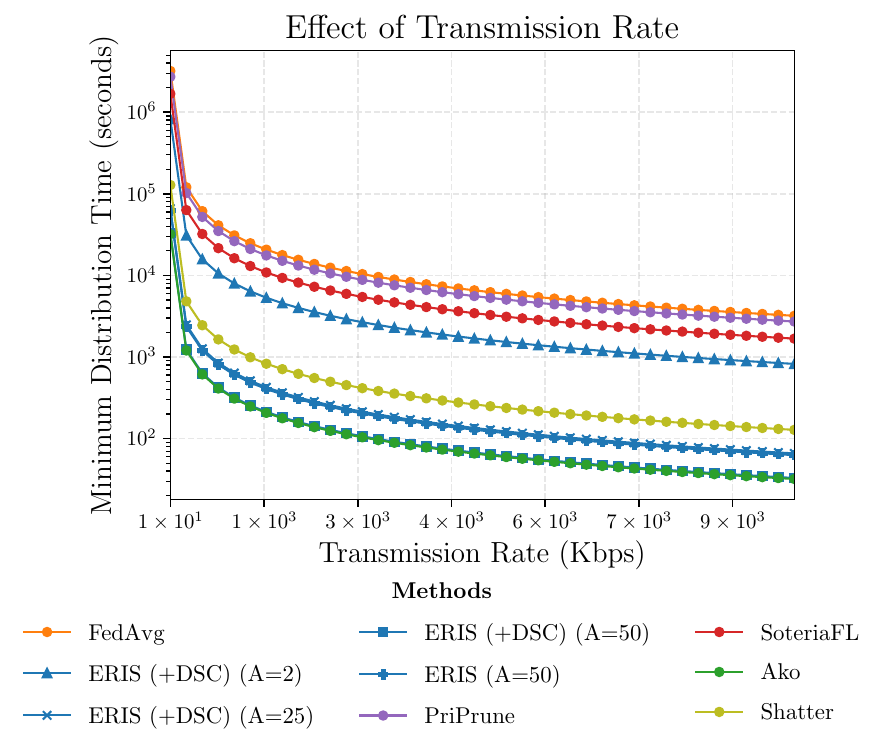}
    \caption{Effect of transmission rate on minimum per-round distribution time for FedAvg, PriPrune, SoteriaFL, Ako, Shatter, and \textsc{eris}. \textsc{eris} remains faster across transmission rates due to shard-based communication and distributed aggregation, with further gains from larger $A$ and DSC.}
    \label{fig:transmission-rate}
\end{figure*}

\paragraph{Communication efficiency.}
In addition to the main paper, we provide a detailed communication comparison across a larger set of FL and decentralized FL (DFL) baselines: FedAvg, Shatter, Ako, Q-DPSGD-1, PriPrune (with multiple pruning rates), SoteriaFL, and \textsc{eris} variants. Tables~\ref{tab:comm_full_2x2_a}--\ref{tab:comm_full_2x2_b} report, for each dataset/model pair, the compression ratio, per-client upload/download volume, total per-round communication per client, and the distribution time per round (20MB/s bandwidth), under the same experimental settings as in Table~\ref{tab:overall_acc_mia_all}.

Among decentralized methods, Shatter and Ako rely on model partitioning, whereas Q-DPSGD-1 reduces communication via quantization. We additionally include PriPrune at three pruning rates and SoteriaFL at 5\% compression. For \textsc{eris} with DSC, we report (i) a matched-compression setting with SoteriaFL (same ratio) to isolate the benefit of FSA and (ii) the more aggressive compression regime used in our main experiments. Overall, Tables~\ref{tab:comm_full_2x2_a}--\ref{tab:comm_full_2x2_b} show that, for a fixed compression ratio, \textsc{eris} with DSC consistently achieves substantially lower distribution time than both centralized and decentralized baselines. In addition, \textsc{eris} with DSC enables markedly more aggressive compression while preserving accuracy, as reported in the main text.

\begin{table*}[t]
  \captionsetup{skip=4pt}
  \caption{Direct communication comparison (per-client) across FL/DFL baselines.}
  \label{tab:comm_full_2x2_a}
  \centering
  \scriptsize
  \renewcommand{\arraystretch}{1.15}
  \setlength{\tabcolsep}{2.2pt}
  \begin{tabular}{lcccccccccc}
    \toprule
    \multirow{2}{*}{\textbf{Method}} &
    \multicolumn{5}{c}{\textbf{CNN/Daily Mail — GPT-Neo (1.3B)}} &
    \multicolumn{5}{c}{\textbf{IMDB — DistilBERT (69M)}} \\
    \cmidrule(lr){2-6} \cmidrule(lr){7-11}
    & \textbf{Comp.} & \textbf{Upload} & \textbf{Down.} & \textbf{Tot.} & \textbf{Time}
    & \textbf{Comp.} & \textbf{Upload} & \textbf{Down.} & \textbf{Tot.} & \textbf{Time} \\
    \midrule
    FedAvg                  & 100\%  & 5.2 GB    & 5.2 GB   & 10.4 GB  & 5200.0 s
                            & 100\%  & 268.0 MB  & 268.0 MB & 536.0 MB & 670.0 s  \\
    Shatter                 & 100\%  & 5.2 GB    & 5.2 GB   & 36.4 GB  & 780.0 s
                            & 100\%  & 268.0 MB  & 268.0 MB & 2.41 GB  & 53.6 s \\
    Ako ($p{=}5$)           & 100\%  & 9.36 GB   & 9.36 GB  & 18.72 GB & 936.0 s
                            & 100\%  & 1.29 GB   & 1.29 GB  & 2.57 GB  &  128.64 s \\
    Q-DPSGD-1 ($K_n{=}0.4$) & 18.8\% & 3.51 GB   & 3.51 GB  & 37.44 GB & 351.0 s
                            & 18.8\% & 482.4 MB  & 482.4 MB & 5.15 GB  & 48.24 s \\
    PriPrune (0.01)         & 90\%   & 4.68 GB   & 5.2 GB   & 9.88 GB  & 4940.0 s
                            & 90\%   & 241.2 MB  & 268.0 MB & 509.2 MB & 636.5 s \\
    PriPrune (0.05)         & 80\%   & 4.16 GB   & 5.2 GB   & 9.36 GB  & 4680.0 s
                            & 80\%   & 214.4 MB  & 268.0 MB & 482.4 MB & 603.0 s   \\
    PriPrune (0.1)          & 70\%   & 3.64 GB   & 5.2 GB   & 8.84 GB  & 4420.0 s  
                            & 70\%   & 187.6 MB  & 268.0 MB & 455.6 MB & 569.5 s   \\
    SoteriaFL               & 5\%    & 260.0 MB  & 5.2 GB   & 5.46 GB  & 2730.0 s  
                            & 5\%    & 13.4 MB   & 268.0 MB & 281.4 MB & 351.75 s  \\
    \textsc{eris}
                            & 100\%  & 4.68 GB   & 4.68 GB  & 9.36 GB  & 468.0 s   
                            & 100\%  & 257.28 MB & 257.28 MB & 514.56 MB & 25.73 s    \\  
    \textsc{eris} (+\textsc{dsc} with $\omega_{\text{SoteriaFL}}$)
                            & 5\%    & 234.0 MB  & 4.68 GB  & 4.91 GB  & 245.7 s   
                            & 5\%    & 12.86 MB  & 257.28 MB & 270.14 MB & 13.51 s   \\
    \textsc{eris} (+\textsc{dsc})
                            & 1\%    & 46.8 MB   & 4.68 GB  & 4.73 GB  & 236.34 s  
                            & 0.012\% & 30.87 KB & 257.28 MB & 257.31 MB  & 12.87 s   \\
    \bottomrule
  \end{tabular}
  \label{tab:comm_efficiency_all_1}
\end{table*}

\begin{table*}[t]
  \captionsetup{skip=4pt}
  \caption{Direct communication comparison (per-client) across FL/DFL baselines (continued).}
  \label{tab:comm_full_2x2_b}
  \centering
  \scriptsize
  \renewcommand{\arraystretch}{1.15}
  \setlength{\tabcolsep}{2.2pt}
  \begin{tabular}{lcccccccccc}
    \toprule
    \multirow{2}{*}{\textbf{Method}} &
    \multicolumn{5}{c}{\textbf{CIFAR-10 — ResNet9 (1.65M)}} &
    \multicolumn{5}{c}{\textbf{MNIST — LeNet5 (62K)}} \\
    \cmidrule(lr){2-6} \cmidrule(lr){7-11}
    & \textbf{Comp.} & \textbf{Upload} & \textbf{Down.} & \textbf{Tot.} & \textbf{Time}
    & \textbf{Comp.} & \textbf{Upload} & \textbf{Down.} & \textbf{Tot.} & \textbf{Time} \\
    \midrule
    FedAvg                  & 100\%  & 6.6 MB    & 6.6 MB   & 13.2 MB   & 33.0 s
                            & 100\%  & 248.0 KB  & 248.0 KB & 496.0 KB  & 1.24 s \\
    Shatter                 & 100\%  & 6.6 MB    & 6.6 MB   & 59.4 MB   & 1.32 s
                            & 100\%  & 248.0 KB  & 248.0 KB & 2.23 MB   & 0.05 s \\
    Ako ($p{=}5$)           & 100\%  & 64.68 MB  & 64.68 MB & 129.36 MB & 6.47 s
                            & 100\%  & 2.43 MB   & 2.43 MB  & 4.86 MB   & 0.24 s \\
    Q-DPSGD-1 ($K_n{=}0.4$) & 18.8\% & 24.25 MB  & 24.25 MB & 258.72 MB & 2.43 s
                            & 18.8\% & 911.4 KB  & 911.4 KB & 9.72 MB   & 0.09 s    \\
    PriPrune (0.01)         & 99\%   & 6.53 MB   & 6.6 MB   & 13.13 MB  & 32.84 s
                            & 99\%   & 245.52 KB & 248.0 KB & 493.52 KB & 1.23 s   \\
    PriPrune (0.05)         & 95\%   & 6.27 MB   & 6.6 MB   & 12.87 MB  & 32.17 s
                            & 95\%   & 235.6 KB  & 248.0 KB & 483.6 KB  & 1.21 s   \\
    PriPrune (0.1)          & 90\%   & 5.94 MB   & 6.6 MB   & 12.54 MB  & 31.35 s
                            & 90\%   & 223.2 KB  & 248.0 KB & 471.2 KB  & 1.18 s    \\
    SoteriaFL               & 5\%    & 330.0 KB  & 6.6 MB   & 6.93 MB   & 17.32 s
                            & 5\%    & 12.4 KB   & 248.0 KB & 260.4 KB  & 0.65 s   \\
    \textsc{eris}
                            & 100\%  & 6.47 MB   & 6.47 MB  & 12.94 MB  & 0.65 s
                            & 100\%  & 243.04 KB & 243.04 KB & 486.08 KB & 0.02 s   \\ 
    \textsc{eris} (+\textsc{dsc} with $\omega_{\text{SoteriaFL}}$)
                            & 5\%    & 323.4 KB  & 6.47 MB  & 6.79 MB   & 0.34 s
                            & 5\%    & 12.15 KB  & 243.04 KB & 255.19 KB & 0.01 s   \\
    \textsc{eris}(+\textsc{dsc})
                            & 0.6\%  & 38.81 KB  & 6.47 MB  & 6.51 MB   & 0.33 s
                            & 3.3\%  & 8.02 KB   & 243.04 KB & 251.06 KB & 0.01 s   \\
    \bottomrule
  \end{tabular}
  \label{tab:comm_efficiency_all_2}
\end{table*}

\begin{figure}  
  \captionsetup{skip=3pt}
  \
  \centering
  \includegraphics[width=0.50\textwidth]{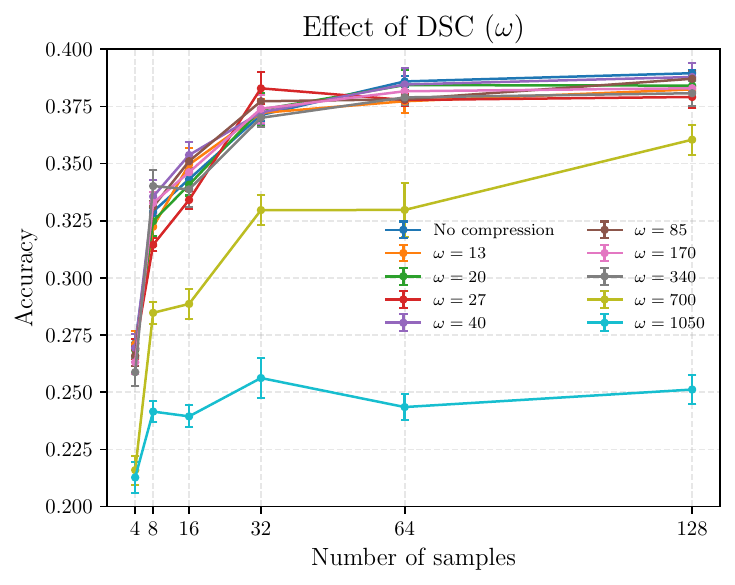}
  \caption{Effect of DSC on CIFAR-10, varying $\omega$ across different local training sample sizes.}
  \label{fig:shited_compr_app}
\end{figure}
\subsection{Effect of Distributed Shifted Compression on Model Utility} \label{app:shited_compr}
This section provides additional results complementing the analysis in Paragraph~\ref{pg:model_partitioning_compression}. Figure~\ref{fig:shited_compr_app} illustrates the impact of increasing the compression constant $\omega$ on test accuracy for CIFAR-10, under varying numbers of local training samples per client. We observe that up to $\omega = 340$—which corresponds to a compression rate of approximately 0.29\%—test accuracy remains statistically unchanged. This indicates that the communication cost can be substantially reduced, sharing only 0.29\% of gradients per client, without degrading performance. However, beyond this threshold, the aggressive compression starts discarding critical information, leading to compromised model convergence. As expected, further increasing $\omega$ results in progressively lower accuracy, highlighting a clear trade-off between compression strength and model utility.

\subsection{Effect of Local Data Size on the Utility--Privacy Trade-off} \label{app:effect_data_size}
The experiments reported in Table \ref{tab:overall_acc_mia_all} and in Appendix~\ref{app:balancing_ut_pri_iid}--\ref{app:pareto} focus on a low-data regime, where each client is assigned only from 4 to 128 local training samples. This setting was chosen deliberately to study privacy leakage under conditions that promote memorization and overfitting, which are known to amplify membership inference risk. As a result, these experiments are not intended as full-data benchmarks, but rather as controlled privacy--utility evaluations under limited local data.

To verify that the trends observed in this regime are not an artifact of unstable training or underfitting, we extend the CIFAR-10 analysis to larger local data sizes. Table~\ref{tab:cifar_more_data} reports test accuracy and MIA accuracy for 64, 128, 256, and 415 samples per client. The results show a stable and monotonic trend across all methods: increasing the number of local training samples consistently improves utility while reducing privacy leakage. This confirms that the original low-data setting reflects a deliberate operating regime for privacy auditing, rather than a failure of optimization.

Importantly, \textsc{eris} with DSC continues to closely track FedAvg in utility across all larger-data settings while maintaining substantially lower MIA leakage. For example, at 415 samples per client, FedAvg reaches 63.3\% accuracy, while \textsc{eris} attains 62.9\%, but reduces MIA accuracy from 82.2\% to 68.4\%. This indicates that the privacy advantages of \textsc{eris} persist even when the local data regime becomes substantially more favorable for utility. We note that the 415-sample setting corresponds to approximately 21k training samples used for federated optimization, while the remaining samples are reserved for validation and privacy auditing. This split is necessary to ensure a consistent and controlled evaluation of membership inference across methods.
\begin{table}[t]
\centering
\caption{CIFAR-10 in a larger-data regime: accuracy and MIA accuracy for increasing numbers of training samples per client. Results confirm a stable utility--privacy trend beyond the low-data regime.}
\label{tab:cifar_more_data}
\setlength{\tabcolsep}{1.7pt}
\resizebox{\textwidth}{!}{%
\begin{tabular}{lcccccccc}
\toprule
& \multicolumn{2}{c}{64 samples} 
& \multicolumn{2}{c}{128 samples} 
& \multicolumn{2}{c}{256 samples} 
& \multicolumn{2}{c}{415 samples} \\
\cmidrule(lr){2-3} 
\cmidrule(lr){4-5} 
\cmidrule(lr){6-7} 
\cmidrule(lr){8-9}
Method 
& Acc. ($\uparrow$) & MIA Acc. ($\downarrow$) 
& Acc. ($\uparrow$) & MIA Acc. ($\downarrow$) 
& Acc. ($\uparrow$) & MIA Acc. ($\downarrow$) 
& Acc. ($\uparrow$) & MIA Acc. ($\downarrow$) \\
\midrule
FedAvg     
& 46.4 $\pm$ 0.4 & 93.9 $\pm$ 0.7 
& 50.0 $\pm$ 0.6 & 89.9 $\pm$ 0.3
& 56.6 $\pm$ 0.5 & 85.9 $\pm$ 0.3 
& 63.3 $\pm$ 0.4 & 82.2 $\pm$ 0.1 \\

FedAvg+LDP 
& 27.2 $\pm$ 0.4 & 58.1 $\pm$ 0.2 
& 29.9 $\pm$ 0.2 & 56.4 $\pm$ 0.4
& 33.2 $\pm$ 0.3 & 54.5 $\pm$ 0.5 
& 36.3 $\pm$ 0.2 & 54.0 $\pm$ 0.8 \\

PriPrune   
& 45.0 $\pm$ 0.3 & 74.4 $\pm$ 0.8 
& 49.0 $\pm$ 0.5 & 74.1 $\pm$ 1.0
& 56.1 $\pm$ 0.5 & 78.1 $\pm$ 0.8 
& 63.1 $\pm$ 0.3 & 75.8 $\pm$ 0.3 \\

\textsc{eris} (+\textsc{dsc})       
& 46.2 $\pm$ 0.9 & 71.1 $\pm$ 0.8 
& 49.7 $\pm$ 0.7 & 69.8 $\pm$ 0.7
& 56.0 $\pm$ 0.7 & 68.6 $\pm$ 0.7 
& 62.9 $\pm$ 0.4 & 68.4 $\pm$ 0.5 \\
\midrule
Min.\ Leakage 
& 46.4 $\pm$ 0.3 & 72.6 $\pm$ 0.9 
& 49.9 $\pm$ 0.7 & 71.3 $\pm$ 0.8
& 57.0 $\pm$ 0.2 & 69.9 $\pm$ 0.5 
& 63.2 $\pm$ 0.3 & 67.9 $\pm$ 0.6 \\
\bottomrule
\end{tabular}%
}
\end{table}

\subsection{Robustness to Aggregator and Link Failures}
\label{app:aggregator_dropout}
We evaluate the robustness of \textsc{eris} under two failure modes: aggregator dropout and client--aggregator link failures. In both cases, failures remove a subset of model shards from the global update in a given round. However, because each shard corresponds only to a disjoint part of the model, failures do not invalidate the update; they primarily reduce its effective magnitude and therefore slow convergence.

\paragraph{Aggregator dropout.}
We first consider aggregator unavailability during training. At each round, a fixed proportion of aggregators is randomly deactivated, so their corresponding model shards are not included in the global update. Figure~\ref{fig:agg_dropout} reports the effect of increasing dropout rates on both (i) test accuracy and (ii) the best validation round at which the model reaches its peak performance (i.e., the minimum validation loss). The results show that \textsc{eris} maintains nearly constant test accuracy up to a dropout rate of 70\%. The right plot explains this behavior: as dropout increases, the best validation round shifts steadily toward the 200-round training cap, indicating a convergence slowdown rather than algorithmic instability. Once the slowdown becomes too large, the model no longer reaches the same optimum within the fixed training budget, and accuracy drops.
\begin{figure*}[h!]
  \captionsetup{skip=3pt}
  \centering
  \includegraphics[width=.9\textwidth]{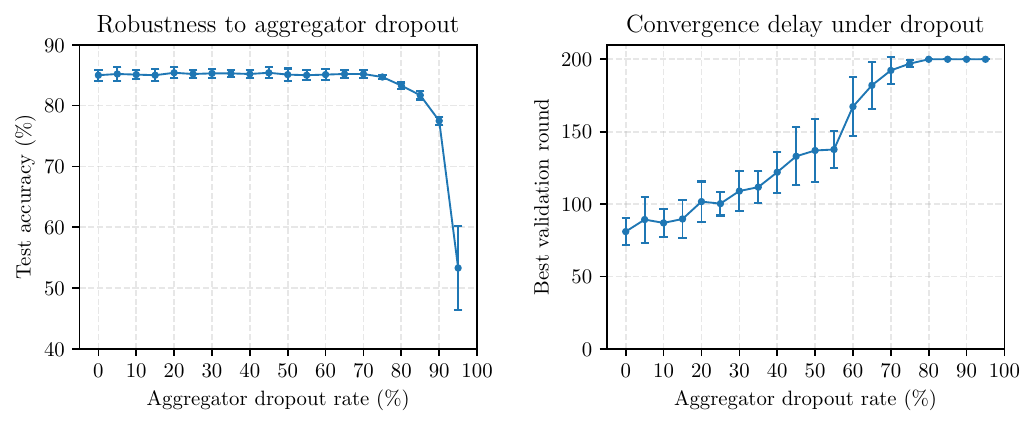}
  \caption{Robustness of \textsc{eris} to aggregator dropout. Left: test accuracy remains nearly constant up to a 70\% dropout rate. Right: convergence slows as dropout increases, eventually hitting the 200-round cap, which explains the accuracy drop beyond 70\%.} 
  \label{fig:agg_dropout}
  \vskip -0.1in   
\end{figure*}

\paragraph{Client--aggregator link failures.}
We further evaluate random failures of individual client--aggregator communication links. When such a link fails, only the corresponding shard contribution is lost for that round. Figure~\ref{fig:link_failures} shows a similar pattern to aggregator dropout: accuracy remains nearly unchanged up to moderate failure rates, while the best validation round progressively approaches the 200-round cap. In particular, accuracy remains close to the no-failure setting up to roughly 50\% link failures, whereas larger failure rates mainly hurt performance because training cannot fully converge within the fixed round budget. This contrasts with centralized FL, where a failed client--server link discards the entire client update; in \textsc{eris}, a failed link removes only one shard contribution.
\begin{figure*}[h!]
  \captionsetup{skip=3pt}
  \centering
  \includegraphics[width=.9\textwidth]{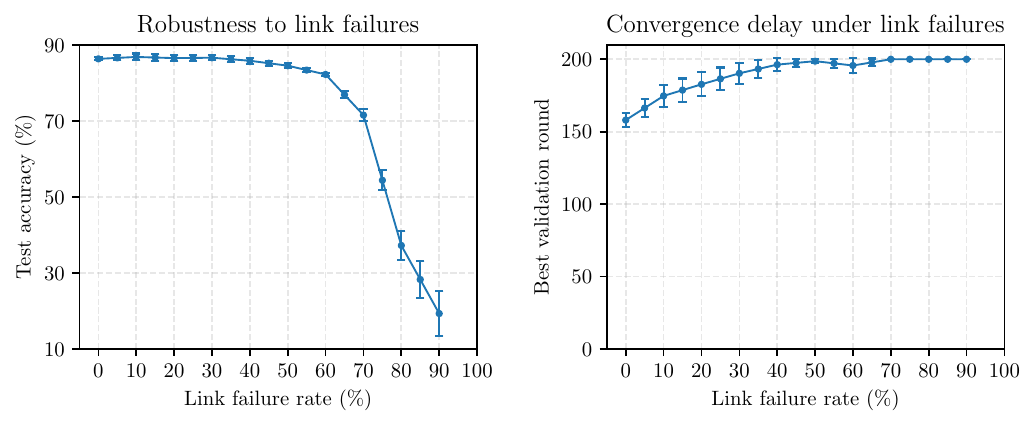}
  \caption{Robustness of \textsc{eris} to client--aggregator link failures. Left: accuracy remains close to the no-failure setting up to 50\% link failures. Right: convergence slows as link failures increase, with the best round approaching the 200-round cap, which explains the accuracy drop at higher failure rates.}
  \label{fig:link_failures}
  \vskip -0.1in
\end{figure*}

Together, these results show that \textsc{eris} degrades gracefully under both node- and edge-level failures. The system remains stable as long as a sufficient fraction of shards is aggregated, and performance degradation is primarily caused by convergence delay under a fixed round cap rather than by instability of the sharded aggregation mechanism.

\subsection{Data Reconstruction Attacks} \label{app:dra}
To further assess the privacy guarantees of \textsc{eris}, we evaluate its resilience to Data Reconstruction Attacks (DRAs), which represent one of the most severe privacy threats in FL. To favour the attacker and stress-test our approach, we consider the uncommon but worst-case scenario where each client performs gradient descent with a mini-batch of size 1 and transmits the resulting gradient—which can be intercepted by an eavesdropper or a compromised aggregator/server. Therefore, we assume the adversary has white-box access to the client gradient.

Given this gradient, reconstruction methods such as DLG~\citep{DLG}, iDLG~\citep{FCleak_label}, and ROG \citep{yue_rog_attack2023} aim to recover the original training sample by optimizing candidate inputs to match the leaked gradient. Unlike earlier gradient-matching attacks, ROG projects the unknown image into a low-dimensional latent space (e.g., via bicubic downsampling or an autoencoder) and optimizes that compact representation so that the decoded image’s gradients align with the leaked gradient, before applying a learned enhancement module to obtain perceptually faithful reconstructions. In our experiments, a dedicated enhancement decoder was trained for each dataset using a hold-out set. GGL instead constrains the reconstruction to the latent space of a pretrained generative model and incorporates an adaptive gradient-matching objective that accounts for the transformation induced by privacy defenses. This makes GGL particularly suitable for auditing degraded gradients, such as those produced by compression, sparsification, clipping, or additive noise.

Figure~\ref{fig:rec_lpips} reports the reconstruction quality, measured via the LPIPS score \citep{LPIPS2018}, as a function of the percentage of model parameters available to the attacker. The x-axis is plotted on a non-linear scale to improve readability in the low-percentage regime. The results are averaged over 200 reconstructed samples and tested across three datasets: MNIST, CIFAR-10, and LFW. The findings show that in the full-gradient setting (e.g., FedAvg), all DRAs can almost perfectly reconstruct the original image. However, as the proportion of accessible gradients decreases, the reconstruction quality of DLG and iDLG degrades significantly, with LPIPS scores approaching the baseline of random images when only 90\% of the parameters are visible. Remarkably, even in the least favourable configuration of \textsc{eris}—with only two aggregators—the system already provides sufficient obfuscation to render reconstruction attacks ineffective, as highlighted by the shaded regions in the figure. A different pattern emerges for ROG, which tends to maintain higher reconstruction quality. Closer inspection of MNIST and LFW examples, however, reveals that this apparent advantage stems primarily from the trained enhancement decoder. This module effectively biases reconstructions toward the training distribution, thereby inflating similarity scores. In fact, even when random gradients are passed through the decoder (purple dashed line), the outputs achieve LPIPS values lower than random images, underscoring that the improvement reflects postprocessing artefacts. 
However, once 25\% or fewer gradients are accessible, the reconstructed outputs are severely distorted and no longer capture any semantically meaningful features of the original data.
\begin{figure*}[h!]
  \captionsetup{skip=3pt}
  \centering
  \includegraphics[width=1.0\textwidth]{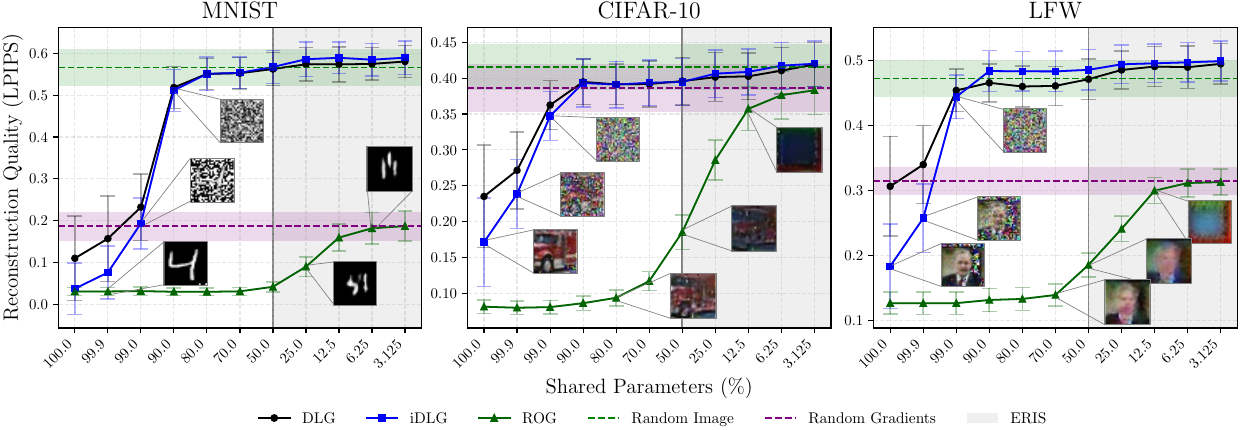}
  \caption{Reconstruction quality under DLG, iDLG, and ROG attacks as a function of the percentage of model parameters available to the attacker. The LPIPS score (higher is better) is averaged over 200 samples. The x-axis is plotted on a non-linear scale for improved clarity of the low-percentage regime. Shaded regions highlight the obfuscation achieved by \textsc{eris}, which renders reconstruction attacks ineffective even in its weakest configuration (two aggregators and no compression).}  
  \label{fig:rec_lpips}
\end{figure*}

To further characterize the privacy guarantees under stronger adaptive reconstruction settings, we complement the previous analysis with a focused evaluation of ROG and GGL across all implemented baselines and standalone compression mechanisms (Table~\ref{tab:comm_strategies_quality}). The results confirm that compression alone is insufficient: both QSGD and uniform quantization degrade reconstruction quality only at aggressive rates (e.g., $s=4$), while Top-$k$ sparsification becomes effective only at extreme sparsity levels (0.98--0.99), where utility is severely compromised. Similarly, differentially private training via DP-SGD shows a clear trade-off between privacy and utility: with mild clipping ($\text{clip}=10$) and low noise ($\sigma=10^{-4}$), reconstructions remain close to FedAvg, whereas stronger noise or tighter clipping substantially degrades image quality but at the expense of model performance. PriPrune exhibits a comparable pattern, with higher pruning probabilities providing stronger obfuscation but relying on increasingly aggressive gradient removal. Finally, \textsc{eris} provides robust protection even in its least favorable setting ($A=2$ aggregators), and its privacy guarantees strengthen as the number of aggregators increases or, when combined with DSC, as the compression strength $\omega$ increases. Notably, the strongest \textsc{eris} configurations approach the random-gradient baseline under ROG and substantially increase LPIPS under GGL. These results indicate that the combination of model partitioning and DSC effectively limits the information available to the attacker, including in the presence of adaptive generative reconstruction, while preserving utility.

\begin{table}[t]
\centering
\caption{Reconstruction quality under ROG and GGL attacks across privacy-preserving mechanisms and compression techniques on CIFAR-10 and ImageNet, respectively. For LPIPS, higher values indicate stronger defenses ($\uparrow$).}
\scriptsize
\renewcommand{\arraystretch}{1.25}
\setlength{\tabcolsep}{5pt}
\begin{tabular}{lcc}
\toprule
\textbf{Method} 
& \textbf{ROG} 
& \textbf{GGL} \\
\midrule
FedAvg 
& 0.193 $\pm$ 0.059 
& 0.666 $\pm$ 0.053 \\
\midrule
QSGD ($s=16$) 
& 0.209 $\pm$ 0.060 
& 0.670 $\pm$ 0.057 \\
QSGD ($s=8$)  
& 0.250 $\pm$ 0.065 
& 0.671 $\pm$ 0.058 \\
QSGD ($s=4$)  
& 0.343 $\pm$ 0.074 
& 0.676 $\pm$ 0.055 \\
\midrule
Uniform Quantization ($s=16$) 
& 0.243 $\pm$ 0.066 
& 0.676 $\pm$ 0.055 \\
Uniform Quantization ($s=8$)  
& 0.302 $\pm$ 0.071 
& 0.673 $\pm$ 0.064 \\
Uniform Quantization ($s=4$)  
& 0.403 $\pm$ 0.079 
& 0.686 $\pm$ 0.053 \\
\midrule
Top-\emph{k} Sparsification (sparsity=$0.90$) 
& 0.228 $\pm$ 0.063 
& 0.670 $\pm$ 0.059 \\
Top-\emph{k} Sparsification (sparsity=$0.98$) 
& 0.392 $\pm$ 0.083 
& 0.674 $\pm$ 0.051 \\
Top-\emph{k} Sparsification (sparsity=$0.99$) 
& 0.456 $\pm$ 0.094 
& 0.679 $\pm$ 0.061 \\
\midrule
DP-SGD (clip=$10$, $\sigma=10^{-4}$) 
& 0.200 $\pm$ 0.062 
& 0.668 $\pm$ 0.062 \\
DP-SGD (clip=$10$, $\sigma=10^{-3}$) 
& 0.340 $\pm$ 0.071 
& 0.671 $\pm$ 0.061 \\
DP-SGD (clip=$10$, $\sigma=10^{-2}$) 
& 0.498 $\pm$ 0.091 
& 0.676 $\pm$ 0.051 \\
DP-SGD (clip=$1$, $\sigma=10^{-4}$) 
& 0.432 $\pm$ 0.094 
& 0.676 $\pm$ 0.059 \\
DP-SGD (clip=$1$, $\sigma=10^{-3}$) 
& 0.436 $\pm$ 0.095 
& 0.682 $\pm$ 0.048 \\
DP-SGD (clip=$1$, $\sigma=10^{-2}$) 
& 0.472 $\pm$ 0.090 
& 0.689 $\pm$ 0.042 \\
\midrule
PriPrune ($p=10^{-5}$) 
& 0.305 $\pm$ 0.075 
& 0.686 $\pm$ 0.047 \\
PriPrune ($p=10^{-3}$) 
& 0.506 $\pm$ 0.087 
& 0.688 $\pm$ 0.036 \\
PriPrune ($p=0.1$) 
& 0.569 $\pm$ 0.067 
& 0.696 $\pm$ 0.035 \\
\midrule
\textsc{eris} ($A=2$) 
& 0.458 $\pm$ 0.081 
& 0.665 $\pm$ 0.060 \\
\textsc{eris} ($A=4$) 
& 0.514 $\pm$ 0.078 
& 0.675 $\pm$ 0.054 \\
\textsc{eris} ($A=8$) 
& 0.546 $\pm$ 0.075 
& 0.678 $\pm$ 0.056 \\
\textsc{eris} (+\textsc{dsc}) ($\omega=1$, $A=2$) 
& 0.453 $\pm$ 0.081 
& 0.676 $\pm$ 0.053 \\
\textsc{eris} (+\textsc{dsc}) ($\omega=4$, $A=2$) 
& 0.524 $\pm$ 0.078 
& 0.673 $\pm$ 0.052 \\
\textsc{eris}(+\textsc{dsc})  ($\omega=9$, $A=2$) 
& 0.547 $\pm$ 0.073 
& 0.683 $\pm$ 0.048 \\
\textsc{eris} (+\textsc{dsc}) ($\omega=49$, $A=50$) 
& 0.569 $\pm$ 0.073
& 0.690 $\pm$ 0.064 \\
\midrule
Random Gradients 
& 0.572 $\pm$ 0.065 
& 0.812 $\pm$ 0.047 \\
\bottomrule
\end{tabular}
\label{tab:comm_strategies_quality}
\end{table}

\subsection{Balancing Utility and Privacy - IID Setting} \label{app:balancing_ut_pri_iid}
This section provides detailed numerical results supporting the analysis in Paragraph~\ref{pg:balancing_u_p}. Specifically, we report test accuracy and Membership Inference Attack (MIA) accuracy for all evaluated methods across multiple datasets and varying local training sizes in IID setting. 
Tables~\ref{tab:rouge_mia_cnn}–\ref{tab:acc_mia_acc_mnist_large} report results for CNN/DailyMail, IMDB, CIFAR-10, and MNIST. IMDB, CIFAR-10, and MNIST are evaluated across 4–128 client training samples, while CNN/DailyMail is limited to 16–128 samples, as overfitting saturates already at 16. Part of these values serve as the coordinates for Figure~\ref{fig:unbiased_acc_priv}, which visualizes the utility–privacy trade-off achieved by \textsc{eris} and baselines. Importantly, the tables extend beyond the conditions illustrated in the figure by covering a wider set of hyperparameter configurations—namely, additional pruning rates ($p$) and privacy budgets ($\epsilon$) for LDP-based methods.

Notably, these results (Figure~\ref{fig:unbiased_acc_priv} and Tables~\ref{tab:rouge_mia_cnn}–\ref{tab:acc_mia_acc_mnist_large}) show two clear trends. First, a smaller amount of local data leads all methods to lower task accuracy and higher MIA accuracy, reflecting the stronger overfitting in this regime. Second, especially under low-data conditions, \textsc{eris} (with and without DSC) consistently delivers markedly better privacy preservation while retaining competitive accuracy. For instance, on CNN/DailyMail with 16 samples, for the same ROUGE-1 score as FedAvg, \textsc{eris} reduces MIA accuracy from 100\% to 77.7\%; on IMDB with 4 samples, it lowers MIA accuracy from 82.9\% to 65.2\%, closely approaching the unattainable upper-bound of 64.4\%. These findings highlight \textsc{eris}’s robustness across data modalities and model capacities.

\begin{table}[htbp]
\captionsetup{skip=3pt}
\centering
\caption{
Comparison of \textsc{eris} with and without compression ($\omega$) against SOTA baselines in terms of ROUGE-1 and MIA accuracy on CNN/DailyMail with 16, 32, 64, and 128 local training samples.
}
\scriptsize
\renewcommand{\arraystretch}{1.2}
\setlength{\tabcolsep}{1.6pt}
\begin{tabular}{lcccccccc}
\toprule
\textbf{Local Training Size} & \multicolumn{2}{c}{\textbf{$16$ samples}} & \multicolumn{2}{c}{\textbf{$32$ samples}} & \multicolumn{2}{c}{\textbf{$64$ samples}} & \multicolumn{2}{c}{\textbf{$128$ samples}} \\
\cmidrule(lr){2-3} \cmidrule(lr){4-5} \cmidrule(lr){6-7} \cmidrule(lr){8-9}
\textbf{Method} & \textbf{R-1 ($\uparrow$)} & \textbf{MIA Acc. ($\downarrow$)} & \textbf{R-1 ($\uparrow$)} & \textbf{MIA Acc. ($\downarrow$)} & \textbf{R-1 ($\uparrow$)} & \textbf{MIA Acc. ($\downarrow$)} & \textbf{R-1 ($\uparrow$)} & \textbf{MIA Acc. ($\downarrow$)} \\
\midrule
FedAvg & 30.37$\,$±$\,$1.25 & 100.00$\,$±$\,$0.00 & 32.21$\,$±$\,$1.46 & 98.75$\,$±$\,$1.25 & 34.27$\,$±$\,$0.65 & 96.46$\,$±$\,$0.36 & 36.04$\,$±$\,$0.57 & 96.57$\,$±$\,$0.90 \\
FedAvg ($10,\delta$)-LDP & 25.66$\,$±$\,$0.86 & 54.17$\,$±$\,$5.20 & 26.26$\,$±$\,$0.07 & 50.00$\,$±$\,$3.31 & 26.30$\,$±$\,$0.10 & 49.33$\,$±$\,$2.53 & 25.78$\,$±$\,$0.10 & 54.41$\,$±$\,$1.47 \\
FedAvg ($100,\delta$)-LDP & 26.33$\,$±$\,$0.07 & 54.17$\,$±$\,$5.20 & 26.36$\,$±$\,$0.10 & 50.42$\,$±$\,$2.89 & 24.90$\,$±$\,$0.13 & 48.75$\,$±$\,$2.25 & 26.32$\,$±$\,$0.10 & 54.31$\,$±$\,$1.19 \\
SoteriaFL ($\epsilon=100$) &  25.89$\,$±$\,$0.12 & 54.13$\,$±$\,$1.15 & 26.09$\,$±$\,$0.07 & 54.27$\,$±$\,$5.20 & 26.01$\,$±$\,$0.11 & 50.35$\,$±$\,$2.89 & 24.02$\,$±$\,$0.15 & 49.15$\,$±$\,$2.15 \\
SoteriaFL ($\epsilon=10$) &  25.78$\,$±$\,$0.91 & 54.02$\,$±$\,$1.48 & 25.90$\,$±$\,$0.36 & 54.17$\,$±$\,$5.20 & 25.90$\,$±$\,$0.75 & 50.83$\,$±$\,$2.60 & 24.75$\,$±$\,$0.79 & 49.54$\,$±$\,$2.60 \\
PriPrune ($p=0.1$) & 32.21$\,$±$\,$0.44 & 95.83$\,$±$\,$2.89 & 26.01$\,$±$\,$0.70 & 82.92$\,$±$\,$1.91 & 30.97$\,$±$\,$0.55 & 90.00$\,$±$\,$1.08 & 33.90$\,$±$\,$1.45 & 88.73$\,$±$\,$0.90 \\
PriPrune ($p=0.2$) &  29.96$\,$±$\,$0.94 & 88.33$\,$±$\,$2.89 & 26.61$\,$±$\,$0.72 & 76.67$\,$±$\,$3.61 & 29.00$\,$±$\,$0.41 & 79.38$\,$±$\,$3.80 & 31.49$\,$±$\,$1.89 & 79.02$\,$±$\,$0.61 \\
PriPrune ($p=0.3$) & 18.41$\,$±$\,$15.11 & 74.17$\,$±$\,$3.82 & 21.80$\,$±$\,$0.67 & 70.00$\,$±$\,$3.31 & 29.00$\,$±$\,$1.10 & 70.83$\,$±$\,$3.44 & 29.49$\,$±$\,$1.67 & 70.39$\,$±$\,$0.74 \\
Shatter & 30.05$\,$±$\,$1.22 & 78.73$\,$±$\,$7.50 & 30.38$\,$±$\,$0.59 & 69.02$\,$±$\,$2.60 & 33.04$\,$±$\,$0.65 & 66.78$\,$±$\,$5.12 & 34.35$\,$±$\,$0.38 & 68.13$\,$±$\,$0.90 \\
\midrule
\textsc{eris} & 30.37$\,$±$\,$1.25 & 78.50$\,$±$\,$7.21 & 32.41$\,$±$\,$1.46 & 69.12$\,$±$\,$2.64 & 34.04$\,$±$\,$0.76 & 66.18$\,$±$\,$5.00 & 36.04$\,$±$\,$0.57 & 68.16$\,$±$\,$0.96 \\
\textsc{eris} (+\textsc{dsc}) ($\omega_{\text{SoteriaFL}}$) & 30.05$\,$±$\,$0.86 & 78.33$\,$±$\,$6.29 & 32.15$\,$±$\,$1.75 & 68.75$\,$±$\,$3.31 & 34.12$\,$±$\,$0.38 & 63.75$\,$±$\,$4.38 & 35.06$\,$±$\,$1.27 & 67.65$\,$±$\,$0.78 \\
\textsc{eris} (+\textsc{dsc}) ($\omega \!\approx\! 100$) & 30.04$\,$±$\,$0.95 & 77.73$\,$±$\,$6.29 & 31.60$\,$±$\,$0.95 & 68.27$\,$±$\,$3.61 & 34.14$\,$±$\,$0.76 & 64.38$\,$±$\,$5.12 & 35.62$\,$±$\,$0.48 & 67.84$\,$±$\,$0.74 \\
\midrule
Min. Leakage & 30.37$\,$±$\,$1.25 & 67.83$\,$±$\,$10.10 & 32.41$\,$±$\,$1.46 & 60.58$\,$±$\,$1.91 & 34.27$\,$±$\,$0.65 & 54.08$\,$±$\,$4.77 & 36.04$\,$±$\,$0.57 & 59.61$\,$±$\,$2.54 \\
\bottomrule
\end{tabular}
\label{tab:rouge_mia_cnn}
\end{table}

\begin{table}[htbp]
\captionsetup{skip=3pt}
\centering
\caption{
Comparison of \textsc{eris} with and without compression ($\omega$) against SOTA baselines in terms of test and MIA accuracy on IMDB with 4, 8, and 16 local training samples. 
}
\scriptsize
\renewcommand{\arraystretch}{1.2}
\setlength{\tabcolsep}{4pt}
\begin{tabular}{lcccccc}
\toprule
\textbf{Local Training Size} & \multicolumn{2}{c}{\textbf{$4$ samples}} & \multicolumn{2}{c}{\textbf{$8$ samples}} & \multicolumn{2}{c}{\textbf{$16$ samples}} \\
\cmidrule(lr){2-3} \cmidrule(lr){4-5} \cmidrule(lr){6-7}
\textbf{Method} & \textbf{Accuracy ($\uparrow$)} & \textbf{MIA Acc. ($\downarrow$)} & \textbf{Accuracy ($\uparrow$)} & \textbf{MIA Acc. ($\downarrow$)} & \textbf{Accuracy ($\uparrow$)} & \textbf{MIA Acc. ($\downarrow$)} \\
\midrule
FedAvg & 71.73 ± 4.93 & 82.93 ± 6.39 & 79.56 ± 0.61 & 78.40 ± 3.95 & 80.52 ± 0.30 & 66.91 ± 1.81 \\
FedAvg ($\epsilon=100,\delta$)-LDP & 53.79 ± 0.08 & 55.56 ± 4.91 & 53.82 ± 0.06 & 53.33 ± 1.00 & 53.92 ± 0.11 & 50.06 ± 2.81 \\
FedAvg ($\epsilon=10,\delta$)-LDP & 53.80 ± 0.03 & 52.80 ± 5.82 & 53.81 ± 0.02 & 50.40 ± 3.12 & 53.83 ± 0.06 & 49.89 ± 1.74 \\
SoteriaFL ($\epsilon=100,\delta$) & 53.46 ± 0.15 & 55.56 ± 6.00 & 54.73 ± 0.15 & 54.40 ± 1.96 & 53.74 ± 0.16 & 50.30 ± 2.69 \\
SoteriaFL ($\epsilon=10,\delta$) & 53.36 ± 0.29 & 55.20 ± 5.63 & 54.01 ± 0.24 & 51.36 ± 3.48 & 53.67 ± 0.29 & 50.11 ± 1.58 \\
PriPrune ($p=0.1$) & 54.40 ± 5.46 & 80.53 ± 4.59 & 71.70 ± 2.09 & 74.72 ± 3.63 & 77.64 ± 1.44 & 65.31 ± 2.18 \\
PriPrune ($p=0.2$) & 52.78 ± 2.39 & 74.67 ± 4.99 & 58.55 ± 5.30 & 71.36 ± 2.23 & 65.33 ± 8.65 & 62.84 ± 2.41 \\
PriPrune ($p=0.3$) & 53.52 ± 2.79 & 70.40 ± 3.20 & 55.92 ± 3.49 & 65.76 ± 3.63 & 60.32 ± 5.71 & 59.82 ± 2.85 \\
Shatter & 68.52	± 4.66 & 67.52 ± 2.80 & 74.84 ± 1.88 & 62.56 ± 3.77 & 77.91 ± 0.55 & 54.75 ± 1.97 \\
\midrule
\textsc{eris} & 70.15 ± 4.24 & 67.67 ± 2.89 & 79.39 ± 0.57 & 62.45 ± 3.79 & 80.49 ± 0.33 & 54.72 ± 1.90 \\
\textsc{eris} (+\textsc{dsc}) ($\omega_{\text{SoteriaFL}}$) & 71.74$\,$±$\,$4.94 & 65.87$\,$±$\,$3.22 & 79.55$\,$±$\,$0.61 & 60.52$\,$±$\,$3.00 & 80.51$\,$±$\,$0.31 & 54.56$\,$±$\,$1.57 \\
\textsc{eris} (+\textsc{dsc}) ($\omega\!\approx\!8000$) & 71.28 ± 4.74 & 65.22 ± 2.95 & 79.28 ± 0.71 & 60.51 ± 2.50 & 80.11 ± 0.46 & 54.12 ± 1.78 \\
\midrule
Min. Leakage & 72.39 ± 1.99 & 64.44 ± 2.27 & 79.33 ± 0.75 & 58.67 ± 2.10 & 80.68 ± 0.09 & 53.21 ± 2.53 \\
\bottomrule
\end{tabular}
\label{tab:acc_mia_acc_imdb_small}
\end{table}

\begin{table}[htbp]
\captionsetup{skip=3pt}
\centering
\caption{
Comparison of \textsc{eris} with and without compression ($\omega$) against SOTA baselines in terms of test and MIA accuracy on IMDB with 32, 64, and 128 local training samples. 
}
\scriptsize
\renewcommand{\arraystretch}{1.2}
\setlength{\tabcolsep}{4pt}
\begin{tabular}{lcccccc}
\toprule
\textbf{Local Training Size} & \multicolumn{2}{c}{\textbf{$32$ samples}} & \multicolumn{2}{c}{\textbf{$64$ samples}} & \multicolumn{2}{c}{\textbf{$128$ samples}} \\
\cmidrule(lr){2-3} \cmidrule(lr){4-5} \cmidrule(lr){6-7}
\textbf{Method} & \textbf{Accuracy ($\uparrow$)} & \textbf{MIA Acc. ($\downarrow$)} & \textbf{Accuracy ($\uparrow$)} & \textbf{MIA Acc. ($\downarrow$)} & \textbf{Accuracy ($\uparrow$)} & \textbf{MIA Acc. ($\downarrow$)} \\
\midrule
FedAvg & 81.62 ± 0.11 & 63.58 ± 1.47 & 81.70 ± 0.05 & 60.54 ± 2.11 & 82.45 ± 0.18 & 56.89 ± 0.81 \\
FedAvg ($\epsilon=100,\delta$)-LDP & 54.11 ± 0.15 & 51.56 ± 1.71 & 54.50 ± 0.19 & 50.67 ± 1.60 & 55.09 ± 0.32 & 51.26 ± 0.49 \\
FedAvg ($\epsilon=10,\delta$)-LDP & 53.97 ± 0.10 & 50.02 ± 1.52 & 54.12 ± 0.12 & 49.84 ± 0.86 & 54.30 ± 0.19 & 50.37 ± 0.92 \\
SoteriaFL ($\epsilon=100,\delta$) & 54.87 ± 0.72 & 51.94 ± 1.94 & 55.81 ± 0.62 & 50.98 ± 1.87 & 56.08 ± 1.19 & 51.17 ± 0.52 \\
SoteriaFL ($\epsilon=10,\delta$) & 54.35 ± 0.39 & 50.10 ± 1.83 & 54.79 ± 0.35 & 50.12 ± 0.93 & 55.28 ± 0.57 & 50.64 ± 0.92 \\
PriPrune ($p=0.1$) & 79.93 ± 0.23 & 61.87 ± 1.25 & 80.34 ± 0.10 & 59.61 ± 2.11 & 80.87 ± 0.11 & 56.11 ± 0.93 \\
PriPrune ($p=0.2$) & 71.48 ± 7.20 & 59.54 ± 1.71 & 72.12 ± 1.91 & 58.21 ± 2.11 & 77.52 ± 0.55 & 55.02 ± 1.09 \\
PriPrune ($p=0.3$) & 61.20 ± 7.02 & 56.43 ± 1.55 & 62.01 ± 5.91 & 56.63 ± 1.76 & 68.95 ± 1.92 & 54.20 ± 1.18 \\
Shatter & 79.34 ± 0.64 & 53.79 ± 1.56 & 80.19 ± 0.48 & 53.68 ± 0.95 & 80.84 ± 0.19 & 52.02 ± 1.01 \\
\midrule
\textsc{eris} & 81.63 ± 0.12 & 53.98 ± 1.48 & 81.70 ± 0.06 & 53.54 ± 0.93 & 82.38 ± 0.13 & 52.08 ± 1.07 \\
\textsc{eris} (+\textsc{dsc}) ($\omega_{\text{SoteriaFL}}$) & 81.62$\,$±$\,$0.12 & 53.81$\,$±$\,$1.47 & 81.71$\,$±$\,$0.05 & 52.78$\,$±$\,$1.70 & 82.45$\,$±$\,$0.16 & 51.54$\,$±$\,$1.27 \\
\textsc{eris} (+\textsc{dsc}) ($\omega\!\approx\!8000$) & 80.99 ± 0.22 & 53.44 ± 1.26 & 81.16 ± 0.19 & 52.77 ± 1.62 & 81.59 ± 0.21 & 51.80 ± 1.07 \\
\midrule
Min. Leakage & 81.58 ± 0.11 & 52.57 ± 1.27 & 81.67 ± 0.04 & 53.05 ± 1.32 & 82.44 ± 0.09 & 51.53 ± 1.12 \\
\bottomrule
\end{tabular}
\label{tab:acc_mia_acc_imdb_large}
\end{table}

\begin{table}[htbp]
\captionsetup{skip=3pt}
\centering
\caption{Comparison of \textsc{eris} with and without compression ($\omega$) against SOTA baselines in terms of test and MIA accuracy on CIFAR-10 with 4, 8, and 16 local training samples. 
}
\scriptsize
\renewcommand{\arraystretch}{1.2}
\setlength{\tabcolsep}{4pt}
\begin{tabular}{lcccccc}
\toprule
\textbf{Local Training Size} & \multicolumn{2}{c}{\textbf{$4$ samples}} & \multicolumn{2}{c}{\textbf{$8$ samples}} & \multicolumn{2}{c}{\textbf{$16$ samples}} \\
\cmidrule(lr){2-3} \cmidrule(lr){4-5} \cmidrule(lr){6-7}
\textbf{Method} & \textbf{Accuracy ($\uparrow$)} & \textbf{MIA Acc. ($\downarrow$)} & \textbf{Accuracy ($\uparrow$)} & \textbf{MIA Acc. ($\downarrow$)} & \textbf{Accuracy ($\uparrow$)} & \textbf{MIA Acc. ($\downarrow$)} \\
\midrule
FedAvg & 27.12 ± 1.20 & 84.80 ± 4.59 & 32.98 ± 0.61 & 75.84 ± 2.85 & 34.43 ± 1.04 & 70.15 ± 1.41 \\
FedAvg ($\epsilon=10,\delta$)-LDP & 10.33 ± 0.53 & 81.20 ± 3.90 & 14.93 ± 2.01 & 72.40 ± 2.60 & 18.92 ± 1.31 & 62.55 ± 1.19 \\
FedAvg ($\epsilon=1,\delta$)-LDP & 10.34 ± 0.22 & 66.50 ± 3.78 & 10.00 ± 0.00 & 63.60 ± 2.42 & 10.00 ± 0.00 & 56.05 ± 0.49 \\
SoteriaFL ($\epsilon=10,\delta$) & 10.00 ± 0.00 & 69.87 ± 1.86 & 10.06 ± 0.12 & 64.16 ± 1.25 & 10.85 ± 1.06 & 58.25 ± 2.10 \\
SoteriaFL ($\epsilon=1,\delta$) & 9.99 ± 0.00 & 65.67 ± 1.11 & 10.00 ± 0.00 & 62.10 ± 1.56 & 10.00 ± 0.00 & 53.86 ± 0.67 \\
PriPrune ($p=0.01$) & 13.74 ± 2.05 & 74.80 ± 2.87 & 28.42 ± 0.39 & 75.36 ± 3.60 & 29.57 ± 0.70 & 69.82 ± 1.59 \\
PriPrune ($p=0.05$) & 10.09 ± 0.36 & 67.33 ± 2.63 & 12.77 ± 2.52 & 61.68 ± 3.04 & 11.55 ± 1.80 & 54.44 ± 1.50 \\
PriPrune ($p=0.1$) & 10.00 ± 0.00 & 64.27 ± 2.62 & 10.03 ± 0.04 & 58.80 ± 2.83 & 10.00 ± 0.00 & 52.62 ± 1.37 \\
Shatter & 11.47 ± 1.75 & 77.95 ± 5.63 & 11.57 ± 1.96 & 70.49 ± 2.74 & 12.42 ± 1.65 & 64.22 ± 1.85
\\
\midrule
\textsc{eris} & 27.13 ± 1.19 & 77.90 ± 5.55 & 32.90 ± 0.40 & 70.75 ± 2.70 & 34.32 ± 0.91 & 64.21 ± 1.95 \\
\textsc{eris} (+\textsc{dsc}) ($\omega_{\text{SoteriaFL}}$) & 26.84$\,$±$\,$0.68 & 73.86$\,$±$\,$5.27 & 32.48$\,$±$\,$1.43 & 67.95$\,$±$\,$1.45 & 34.08$\,$±$\,$1.05 & 59.23$\,$±$\,$1.43 \\
\textsc{eris} (+\textsc{dsc}) ($\omega\!\approx\!170$) & 26.31 ± 1.16 & 71.63 ± 4.28 & 33.28 ± 1.06 & 68.48 ± 2.30 & 34.62 ± 1.42 & 59.58 ± 2.26 \\
\midrule
Min. Leakage & 27.11 ± 1.17 & 70.27 ± 4.69 & 33.11 ± 0.62 & 65.44 ± 2.43 & 34.62 ± 0.91 & 56.87 ± 1.34 \\
\bottomrule
\end{tabular}
\label{tab:acc_mia_acc_cifar_small}
\end{table}

\begin{table}[htbp]
\captionsetup{skip=3pt}
\centering
\caption{
Comparison of \textsc{eris} with and without compression ($\omega$) against SOTA baselines in terms of test and MIA accuracy on CIFAR-10 with 32, 64, and 128 local training samples. 
}
\scriptsize
\renewcommand{\arraystretch}{1.2}
\setlength{\tabcolsep}{4pt}
\begin{tabular}{lcccccc}
\toprule
\textbf{Local Training Size} & \multicolumn{2}{c}{\textbf{$32$ samples}} & \multicolumn{2}{c}{\textbf{$64$ samples}} & \multicolumn{2}{c}{\textbf{$128$ samples}} \\
\cmidrule(lr){2-3} \cmidrule(lr){4-5} \cmidrule(lr){6-7}
\textbf{Method} & \textbf{Accuracy ($\uparrow$)} & \textbf{MIA Acc. ($\downarrow$)} & \textbf{Accuracy ($\uparrow$)} & \textbf{MIA Acc. ($\downarrow$)} & \textbf{Accuracy ($\uparrow$)} & \textbf{MIA Acc. ($\downarrow$)} \\
\midrule
FedAvg & 37.24 ± 0.41 & 64.57 ± 0.72 & 38.50 ± 0.44 & 59.29 ± 0.79 & 38.88 ± 0.32 & 56.11 ± 0.75 \\
FedAvg ($\epsilon=10,\delta$)-LDP & 22.31 ± 1.12 & 57.14 ± 1.39 & 23.36 ± 0.85 & 53.99 ± 0.83 & 24.13 ± 0.32 & 52.81 ± 0.49 \\
FedAvg ($\epsilon=1,\delta$)-LDP & 10.00 ± 0.00 & 57.57 ± 0.26 & 13.96 ± 1.14 & 54.42 ± 0.40 & 19.29 ± 0.37 & 53.12 ± 0.16 \\
SoteriaFL ($\epsilon=10,\delta$) & 19.68 ± 0.78 & 55.68 ± 1.09 & 26.04 ± 0.52 & 52.94 ± 0.74 & 26.46 ± 0.25 & 52.07 ± 0.55 \\
SoteriaFL ($\epsilon=1,\delta$) & 10.00 ± 0.00 & 54.57 ± 0.50 & 10.00 ± 0.00 & 53.28 ± 0.61 & 12.20 ± 1.25 & 52.58 ± 0.53 \\
PriPrune ($p=0.01$) & 29.39 ± 0.50 & 63.73 ± 0.96 & 28.70 ± 0.51 & 57.09 ± 0.67 & 27.99 ± 0.32 & 53.22 ± 0.70 \\
PriPrune ($p=0.05$) & 11.80 ± 2.44 & 52.80 ± 1.80 & 11.05 ± 1.60 & 52.01 ± 0.35 & 10.21 ± 0.28 & 51.01 ± 0.20 \\
PriPrune ($p=0.1$) & 10.00 ± 0.00 & 51.94 ± 1.86 & 10.00 ± 0.01 & 51.06 ± 0.64 & 10.00 ± 0.00 & 50.48 ± 0.81 \\
Shatter & 12.32 ± 2.03 & 58.58 ± 0.95 & 12.96 ± 2.16 & 54.65 ± 0.54 & 13.64 ± 1.55 & 52.03 ± 0.49 \\

\midrule
\textsc{eris} & 37.12 ± 0.55 & 58.63 ± 0.88 & 38.59 ± 0.50 & 54.60 ± 0.39 & 38.95 ± 0.32 & 52.04 ± 0.41 \\
\textsc{eris} (+\textsc{dsc}) ($\omega_{\text{SoteriaFL}}$) & 37.36$\,$±$\,$1.59 & 56.54$\,$±$\,$0.94 & 38.43$\,$±$\,$1.45 & 53.56$\,$±$\,$0.57 & 38.41$\,$±$\,$0.51 & 51.66$\,$±$\,$0.52 \\
\textsc{eris} (+\textsc{dsc}) ($\omega\!\approx\!170$) & 37.40 ± 1.36 & 57.49 ± 0.85 & 38.16 ± 1.01 & 53.98 ± 0.41 & 38.30 ± 0.88 & 51.70 ± 0.53 \\
\midrule
Min. Leakage & 37.25 ± 0.38 & 55.81 ± 0.81 & 38.57 ± 0.37 & 53.06 ± 0.47 & 38.88 ± 0.36 & 51.67 ± 0.48 \\
\bottomrule
\end{tabular}
\label{tab:acc_mia_acc_cifar_large}
\end{table}

\begin{table}[htbp]
\captionsetup{skip=3pt}
\centering
\caption{
Comparison of \textsc{eris} with and without compression ($\omega$) against SOTA baselines in terms of test and MIA accuracy on MNIST with 4, 8, and 16 local training samples. 
}
\scriptsize
\renewcommand{\arraystretch}{1.2}
\setlength{\tabcolsep}{4pt}
\begin{tabular}{lcccccc}
\toprule
\textbf{Local Training Size} & \multicolumn{2}{c}{\textbf{$4$ samples}} & \multicolumn{2}{c}{\textbf{$8$ samples}} & \multicolumn{2}{c}{\textbf{$16$ samples}} \\
\cmidrule(lr){2-3} \cmidrule(lr){4-5} \cmidrule(lr){6-7}
\textbf{Method} & \textbf{Accuracy ($\uparrow$)} & \textbf{MIA Acc. ($\downarrow$)} & \textbf{Accuracy ($\uparrow$)} & \textbf{MIA Acc. ($\downarrow$)} & \textbf{Accuracy ($\uparrow$)} & \textbf{MIA Acc. ($\downarrow$)} \\
\midrule
FedAvg & 80.69 ± 1.71 & 82.13 ± 1.65 & 86.42 ± 0.88 & 72.00 ± 3.01 & 89.23 ± 0.74 & 65.78 ± 2.24 \\
FedAvg ($\epsilon=10,\delta$)-LDP & 39.65 ± 3.14 & 69.07 ± 1.67 & 50.84 ± 4.75 & 59.44 ± 2.00 & 64.40 ± 1.53 & 57.67 ± 1.61 \\
FedAvg ($\epsilon=1,\delta$)-LDP & 9.73 ± 0.46 & 69.50 ± 1.72 & 10.70 ± 1.27 & 58.50 ± 0.71 & 19.80 ± 1.38 & 57.55 ± 1.66 \\
SoteriaFL ($\epsilon=10,\delta$) & 8.83 ± 2.65 & 71.47 ± 1.81 & 32.15 ± 2.00 & 57.68 ± 1.83 & 67.01 ± 1.31 & 56.87 ± 1.69 \\
SoteriaFL ($\epsilon=1,\delta$) & 10.31 ± 0.38 & 67.50 ± 2.33 & 10.84 ± 0.79 & 57.90 ± 2.37 & 12.72 ± 1.33 & 57.27 ± 1.44 \\
PriPrune ($p=0.01$) & 47.89 ± 8.33 & 77.20 ± 3.33 & 70.60 ± 3.67 & 68.32 ± 4.28 & 84.81 ± 0.31 & 63.56 ± 2.09 \\
PriPrune ($p=0.05$) & 17.01 ± 4.22 & 58.00 ± 3.45 & 18.97 ± 3.12 & 50.16 ± 2.60 & 26.47 ± 2.59 & 54.51 ± 1.62 \\
PriPrune ($p=0.1$) & 11.99 ± 1.87 & 56.67 ± 3.18 & 13.33 ± 2.21 & 49.44 ± 2.54 & 19.46 ± 1.35 & 53.42 ± 1.54 \\
Shatter & 11.96 ± 2.33 & 70.42 ± 2.21 & 12.32 ± 2.92 & 56.51 ± 2.86 & 14.55 ± 4.24 & 55.61 ± 1.33 \\
\midrule
\textsc{eris} & 80.47 ± 1.75 & 69.82 ± 2.01 & 86.28 ± 1.00 & 56.35 ± 2.89 & 89.27 ± 0.73 & 55.60 ± 1.27 \\
\textsc{eris} (+\textsc{dsc}) ($\omega_{\text{SoteriaFL}}$) & 78.31$\,$±$\,$1.45 & 68.70$\,$±$\,$3.08 & 85.59$\,$±$\,$0.67 & 54.43$\,$±$\,$2.14 & 90.04$\,$±$\,$0.43 & 55.58$\,$±$\,$1.77 \\
\textsc{eris} (+\textsc{dsc}) ($\omega\!\approx\!30$) & 78.72 ± 1.19 & 68.48 ± 3.11 & 84.84 ± 0.58 & 55.14 ± 2.73 & 90.26 ± 0.11 & 56.11 ± 1.58 \\

\midrule
Min. Leakage & 80.68 ± 1.95 & 66.67 ± 2.67 & 86.30 ± 1.06 & 54.32 ± 1.67 & 89.26 ± 0.74 & 55.38 ± 1.56 \\
\bottomrule
\end{tabular}
\label{tab:acc_mia_acc_mnist_small}
\end{table}

\begin{table}[htbp]
\captionsetup{skip=3pt}
\centering
\caption{
Comparison of \textsc{eris} with and without compression ($\omega$) against SOTA baselines in terms of test and MIA accuracy on MNIST with 32, 64, and 128 local training samples. 
}
\scriptsize
\renewcommand{\arraystretch}{1.2}
\setlength{\tabcolsep}{4pt}
\begin{tabular}{lcccccc}
\toprule
\textbf{Local Training Size} & \multicolumn{2}{c}{\textbf{$32$ samples}} & \multicolumn{2}{c}{\textbf{$64$ samples}} & \multicolumn{2}{c}{\textbf{$128$ samples}} \\
\cmidrule(lr){2-3} \cmidrule(lr){4-5} \cmidrule(lr){6-7}
\textbf{Method} & \textbf{Accuracy ($\uparrow$)} & \textbf{MIA Acc. ($\downarrow$)} & \textbf{Accuracy ($\uparrow$)} & \textbf{MIA Acc. ($\downarrow$)} & \textbf{Accuracy ($\uparrow$)} & \textbf{MIA Acc. ($\downarrow$)} \\
\midrule
FedAvg & 91.48 ± 0.37 & 59.94 ± 1.11 & 92.55 ± 0.06 & 56.68 ± 1.92 & 93.11 ± 0.16 & 54.14 ± 0.65 \\
FedAvg ($\epsilon=10,\delta$)-LDP & 70.35 ± 1.31 & 53.05 ± 1.09 & 70.43 ± 1.27 & 53.00 ± 1.00 & 70.48 ± 0.39 & 51.20 ± 0.91 \\
FedAvg ($\epsilon=1,\delta$)-LDP & 57.75 ± 0.87 & 54.71 ± 0.85 & 70.96 ± 0.71 & 52.72 ± 0.74 & 70.56 ± 0.34 & 51.40 ± 0.66 \\
SoteriaFL ($\epsilon=10,\delta$) & 77.95 ± 2.38 & 53.28 ± 0.60 & 78.15 ± 2.66 & 52.16 ± 0.84 & 79.50 ± 1.45 & 51.29 ± 0.77 \\
SoteriaFL ($\epsilon=1,\delta$) & 8.49 ± 1.87 & 54.14 ± 0.53 & 40.37 ± 1.44 & 52.83 ± 0.24 & 68.75 ± 0.38 & 51.89 ± 0.53 \\
PriPrune ($p=0.01$) & 87.77 ± 0.15 & 56.99 ± 1.14 & 87.01 ± 0.17 & 54.54 ± 1.03 & 86.38 ± 0.19 & 52.65 ± 0.69 \\
PriPrune ($p=0.05$) & 33.01 ± 0.73 & 51.41 ± 1.34 & 34.81 ± 1.98 & 51.39 ± 1.00 & 33.87 ± 0.63 & 50.69 ± 0.93 \\
PriPrune ($p=0.1$) & 21.08 ± 1.39 & 50.74 ± 1.58 & 20.68 ± 0.90 & 51.14 ± 0.98 & 20.46 ± 0.37 & 50.65 ± 0.87 \\
Shatter & 16.51 ± 6.16 & 52.02 ± 1.09 & 18.50 ± 6.06 & 51.46 ± 0.79 & 21.29 ± 7.22 & 51.61 ± 0.76 \\
\midrule
\textsc{eris} & 91.52 ± 0.32 & 52.08 ± 1.04 & 92.54 ± 0.04 & 51.45 ± 0.85 & 93.12 ± 0.16 & 51.53 ± 0.79 \\
\textsc{eris} (+\textsc{dsc}) ($\omega_{\text{SoteriaFL}}$) & 92.27$\,$±$\,$0.21 & 52.66$\,$±$\,$0.73 & 93.23$\,$±$\,$0.10 & 51.57$\,$±$\,$0.74 & 93.78$\,$±$\,$0.10 & 51.71$\,$±$\,$0.67 \\
\textsc{eris}  (+\textsc{dsc}) ($\omega\!\approx\!30$) & 92.56 ± 0.30 & 52.71 ± 0.86 & 93.58 ± 0.23 & 51.74 ± 0.66 & 94.02 ± 0.19 & 51.65 ± 0.64 \\

\midrule
Min. Leakage & 91.47 ± 0.39 & 52.06 ± 1.04 & 92.56 ± 0.05 & 51.41 ± 0.84 & 93.14 ± 0.17 & 51.50 ± 0.85 \\
\bottomrule
\end{tabular}
\label{tab:acc_mia_acc_mnist_large}
\end{table}

\subsection{Balancing Utility and Privacy - non-IID Setting}  \label{app:nonIID_tradeoff}
We further evaluate the utility–privacy trade-off under non-IID client data, using a Dirichlet partition with $\alpha{=}0.5$ for IMDB and $\alpha{=}0.2$ for CIFAR-10 and MNIST. 
Figure~\ref{fig:non_IID_tradeoff} illustrates the utility–privacy trade-off across methods and datasets, where the ideal region corresponds to the top-right corner (high accuracy and low privacy leakage). Non-IID distributions generally make convergence more challenging, lowering overall accuracy and increasing variability across clients. Nevertheless, \textsc{eris} with DSC remains stable and consistently reduces privacy leakage. For example, on IMDB with 4 local samples, \textsc{eris} matches FedAvg in accuracy while reducing MIA accuracy from 91.7\% to 80.7\%, approaching the ideal upper-bound of not sharing local gradients. On CIFAR-10 and MNIST, \textsc{eris} even matches or slightly surpass non-private FedAvg in terms of accuracy, while still offering strong privacy protection. By contrast, privacy-enhancing baselines such as Shatter and FedAvg-LDP struggle to maintain utility, often remaining close to random-guess performance, particularly when models are trained from scratch. Table~\ref{tab:overall_acc_mia_all_biased_setting} reports detailed mean accuracy and MIA accuracy values, averaged over varying local sample sizes. Together, these results confirm that the advantages of \textsc{eris} extend robustly to heterogeneous data distributions.  

\begin{figure*}[h!]
  \centering
  \includegraphics[width=1.0\textwidth]{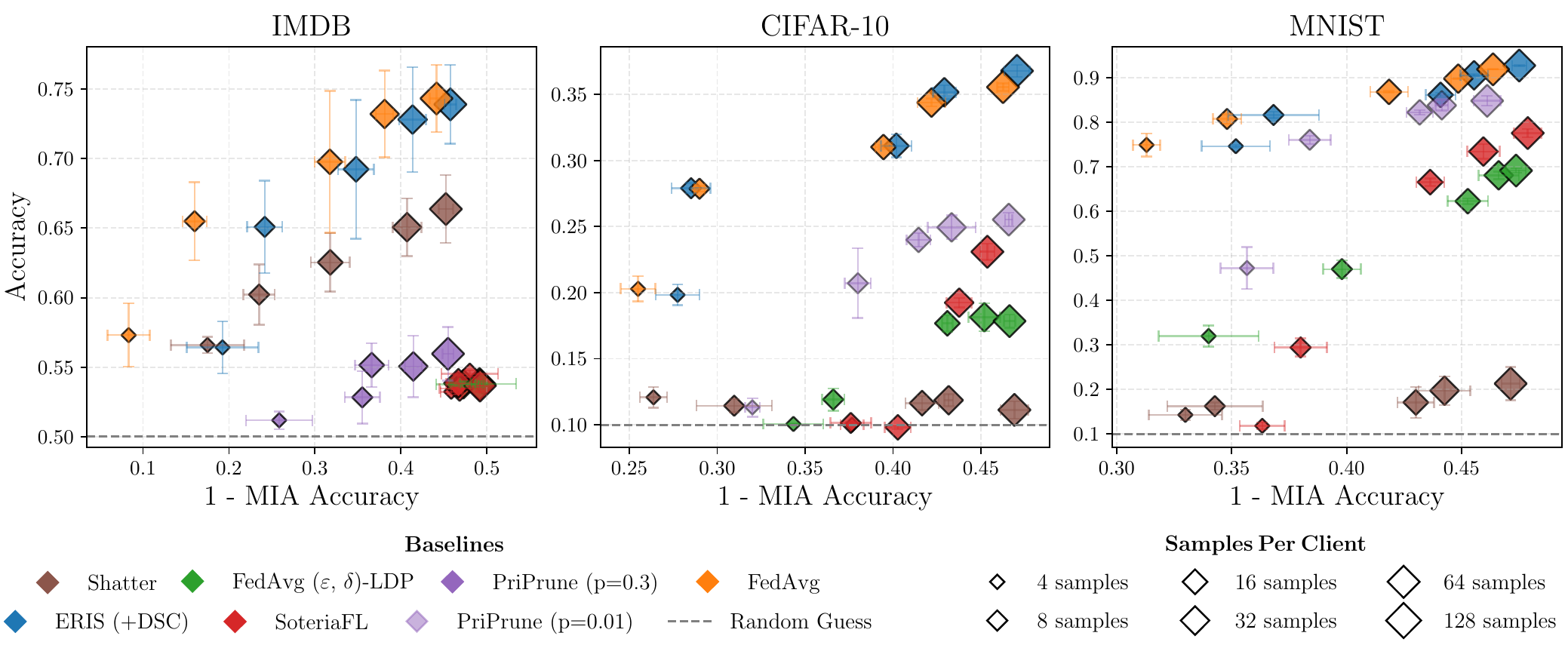}
  \caption{Comparison of accuracy and (1-MIA) accuracy across varying model sizes 
  and client-side overfitting levels, controlled via the number of client training samples under non-IID setting. 
  }
  \label{fig:non_IID_tradeoff}
\end{figure*}

\begin{table}[htbp]
    \captionsetup{skip=5pt}
    \centering
    \caption{Mean test accuracy and MIA accuracy, averaged over varying local sample sizes under non-IID setting. For DP-based methods, $\epsilon{=}10$; for PriPrune, pruning rates are $p\!\in\!\{0.1,0.2,0.3\}$ on IMDB and $p\!\in\!\{0.01,0.05,0.1\}$ on CIFAR-10/MNIST.}
    \tiny
    \renewcommand{\arraystretch}{1.2}
    \setlength{\tabcolsep}{4pt}
    \begin{tabular}{l|cc|cc|cc}
    \toprule
    & \multicolumn{2}{c|}{\textbf{IMDB – DistilBERT}} 
    & \multicolumn{2}{c|}{\textbf{CIFAR-10 – ResNet9}} 
    & \multicolumn{2}{c}{\textbf{MNIST – LeNet5}}\\
    \textbf{Method} 
    & \textbf{Acc. ($\uparrow$)} & \textbf{MIA Acc. ($\downarrow$)} 
    & \textbf{Acc. ($\uparrow$)} & \textbf{MIA Acc. ($\downarrow$)} 
    & \textbf{Acc. ($\uparrow$)} & \textbf{MIA Acc. ($\downarrow$)}\\
    \midrule
    FedAvg & 68.02 $\pm$ 7.03 & 72.34 $\pm$ 3.33 
           & 29.83 $\pm$ 0.85 & 63.52 $\pm$ 1.28 
           & 84.80 $\pm$ 1.76 & 60.17 $\pm$ 1.29 \\
    FedAvg ($\epsilon,\delta$)\,-LDP & 53.79 $\pm$ 0.30 & 52.00 $\pm$ 3.97 
           & 15.13 $\pm$ 1.37 & 58.83 $\pm$ 1.73 
           & 55.69 $\pm$ 2.83 & 57.39 $\pm$ 2.31 \\
    SoteriaFL ($\epsilon,\delta$) & 53.75 $\pm$ 0.81 & 52.70 $\pm$ 3.86 
           & 14.46 $\pm$ 0.55 & 59.07 $\pm$ 1.64 
           & 51.75 $\pm$ 2.77 & 57.64 $\pm$ 1.80 \\
    PriPrune ($p_1$) & 57.01 $\pm$ 7.37 & 70.48 $\pm$ 3.42 
           & 21.30 $\pm$ 2.36 & 59.73 $\pm$ 1.52 
           & 74.80 $\pm$ 3.78 & 58.50 $\pm$ 1.56 \\
    PriPrune ($p_2$) & 53.64 $\pm$ 3.40 & 67.52 $\pm$ 3.52 
           & 11.51 $\pm$ 1.41 & 57.71 $\pm$ 3.03 
           & 24.94 $\pm$ 5.34 & 53.95 $\pm$ 1.89 \\
    PriPrune ($p_3$) & 54.03 $\pm$ 3.68 & 63.04 $\pm$ 4.23 
           & 10.98 $\pm$ 0.91 & 55.53 $\pm$ 2.24 
           & 15.54 $\pm$ 1.47 & 52.64 $\pm$ 2.45 \\
    Shatter & 62.16 $\pm$ 4.19 & 68.26 $\pm$ 4.85 
           & 11.63 $\pm$ 1.31 & 62.19 $\pm$ 2.18 
           & 17.73 $\pm$ 5.56 & 59.67 $\pm$ 2.67 \\
    \textsc{eris} (+\textsc{dsc}) & 67.49 $\pm$ 7.51 & 66.95 $\pm$ 4.72 
           & 30.16 $\pm$ 1.35 & 62.72 $\pm$ 2.04 
           & 85.10 $\pm$ 0.84 & 58.17 $\pm$ 2.24 \\
    \midrule
    Min.\ Leakage & 68.88 $\pm$ 6.75 & 68.85 $\pm$ 3.00
               & 29.80 $\pm$ 0.86 & 61.92 $\pm$ 3.09 
           & 84.95 $\pm$ 1.73 & 56.08 $\pm$ 1.67 \\
    \bottomrule
    \end{tabular}
    \label{tab:overall_acc_mia_all_biased_setting}
    \vspace{-0pt}
\end{table}

\subsection{Balancing Utility and Privacy - Biased Gradient Estimator} \label{app:biased_ut_pr}
In this section, we extend our analysis of the utility–privacy trade-off to the biased setting (already adopted for CNN/DailyMail dataset), where each client performs multiple local updates per communication round, 
thereby introducing bias into the gradient estimator. The training hyperparameters are detailed in Section \ref{app:model_and_hyper}.

Figure~\ref{fig:fig1_biased} summarizes the performance of \textsc{eris} with DSC and several SOTA privacy-preserving baselines in terms of test accuracy and MIA accuracy across different model sizes and local training regimes. As in the main paper (Figure~\ref{fig:unbiased_acc_priv}), we evaluate datasets with distinct memorization characteristics—ranging from lightweight models such as LeNet-5 on MNIST to large-scale architectures like GPT-Neo 1.3B on CNN/DailyMail—and vary client-side overfitting by controlling the number of training samples per client. The observed trends mirror those under unbiased conditions: \textsc{eris} with DSC consistently achieves the best overall trade-off, retaining accuracy comparable to non-private FedAvg while substantially reducing privacy leakage toward the idealized \textit{Min.~Leakage} baseline. For instance, on IMDB with 4 local samples per client and identical training conditions (e.g., same communication rounds), \textsc{eris} with DSC achieves an accuracy of 67.8 $\pm$ 4.9, comparable to FedAvg’s 66.9 $\pm$ 5.5, while significantly reducing MIA accuracy from 92.3\% to 68.2\%—approaching the unattainable upper bound of 66.9\% obtained by not sharing local gradients. The only methods that surpass \textsc{eris} with DSC in privacy protection are DP-based approaches, which, however, degrade test accuracy to nearly random-guess levels, namely SoteriaFL (53.1 $\pm$ 0.8) and FedAvg-LDP (53.4 $\pm$ 0.5). Indeed, DP-based methods reduce leakage only at the cost of severe utility degradation, particularly for larger models, while decentralized methods with partial gradient exchange, such as Shatter, often fail to converge within the predefined number of communication rounds—especially when models are trained from scratch.  

Table~\ref{tab:overall_acc_mia_all_nonIID} reports mean test and MIA accuracy under the biased setting, complementing trends in Figure~\ref{fig:fig1_biased}. Consistent with the figure, \textsc{eris} with DSC delivers the strongest utility–privacy balance across data- sets, maintaining accuracy close to FedAvg while reducing leakage toward the \textit{Min.~Leakage} baseline. DP-based methods achieve lower leakage but at a steep accuracy cost, PriPrune trades off utility and privacy depending on the pruning rate, and Shatter struggles to converge reliably. These results further confirm that \textsc{eris} with DSC offers the most favorable trade-off, even in biased local training regimes.  


\begin{figure*}[h!]
   \captionsetup{skip=5pt}
  \centering
  \includegraphics[width=1.0\textwidth]{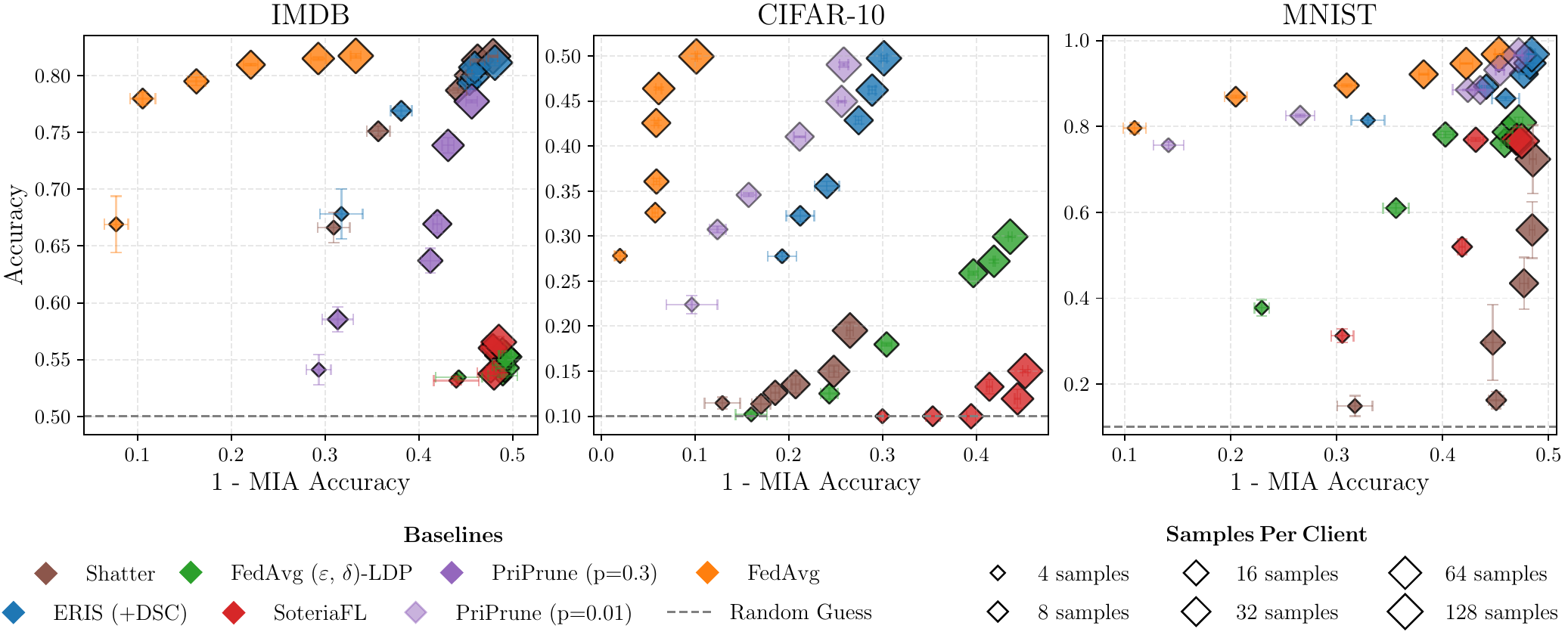}
    \caption{Comparison of test accuracy and (1-MIA) accuracy across varying model capacities (one per dataset) and client-side overfitting levels, controlled via the number of training samples per client using a biased gradient estimator. 
    }
  \label{fig:fig1_biased}
\end{figure*}

\begin{table}[htbp]
    \captionsetup{skip=5pt}
    \centering
    \caption{Mean test accuracy and MIA accuracy, averaged over varying local sample sizes using a biased gradient estimator. For DP-based methods, $\epsilon\!\in\!\{10,100\}$ on IMDB and  $\epsilon\!\in\!\{1,10\}$ on others; for PriPrune, pruning rates are $p\!\in\!\{0.1,0.2,0.3\}$ on IMDB and $p\!\in\!\{0.01,0.05,0.1\}$ on others.}
    \tiny
    \renewcommand{\arraystretch}{1.2}
    \setlength{\tabcolsep}{4pt}
    \begin{tabular}{l|cc|cc|cc}
    \toprule
    & \multicolumn{2}{c|}{\textbf{IMDB – DistilBERT}} 
    & \multicolumn{2}{c|}{\textbf{CIFAR-10 – ResNet9}} 
    & \multicolumn{2}{c}{\textbf{MNIST – LeNet5}}\\
    \textbf{Method} 
    & \textbf{Acc. ($\uparrow$)} & \textbf{MIA Acc. ($\downarrow$)} 
    & \textbf{Acc. ($\uparrow$)} & \textbf{MIA Acc. ($\downarrow$)} 
    & \textbf{Acc. ($\uparrow$)} & \textbf{MIA Acc. ($\downarrow$)}\\
    \midrule
    FedAvg & 78.11 $\pm$ 1.42 & 80.13 $\pm$ 2.11 
           & 39.23 $\pm$ 0.76 & 94.03 $\pm$ 0.69 
           & 89.95 $\pm$ 0.77 & 68.65 $\pm$ 1.63 \\
    FedAvg ($\epsilon_1,\delta$)\,-LDP & 54.39 $\pm$ 0.57 & 51.75 $\pm$ 2.69 
           & 10.78 $\pm$ 0.36 & 59.17 $\pm$ 1.21
           & 29.02 $\pm$ 1.20 & 58.81 $\pm$ 0.95 \\
    FedAvg ($\epsilon_2,\delta$)\,-LDP & 55.00 $\pm$ 1.10 & 52.69 $\pm$ 2.74 
           & 20.62 $\pm$ 0.57 & 67.34 $\pm$ 1.46 
           & 68.77 $\pm$ 2.09 & 60.33 $\pm$ 1.42 \\
    SoteriaFL ($\epsilon_1,\delta$) & 54.90 $\pm$ 0.74 & 52.66 $\pm$ 2.73 
           & 10.03 $\pm$ 0.01 & 56.30 $\pm$ 0.84 
           & 14.23 $\pm$ 1.39 & 56.25 $\pm$ 0.95 \\
    SoteriaFL ($\epsilon_2,\delta$) & 55.40 $\pm$ 1.88 & 53.44 $\pm$ 2.62 
           & 11.70 $\pm$ 0.75 & 60.69 $\pm$ 0.89 
           & 65.11 $\pm$ 1.88 & 57.16 $\pm$ 1.16 \\
    PriPrune ($p_1$) & 76.52 $\pm$ 1.08 & 73.00 $\pm$ 2.30 
           & 37.12 $\pm$ 0.75 & 81.59 $\pm$ 2.17 
           & 91.11 $\pm$ 0.23 & 57.83 $\pm$ 1.63 \\
    PriPrune ($p_2$) & 71.62 $\pm$ 1.30 & 66.66 $\pm$ 2.24 
           & 25.32 $\pm$ 1.13 & 64.16 $\pm$ 1.81 
           & 61.29 $\pm$ 1.16 & 55.14 $\pm$ 1.39 \\
    PriPrune ($p_3$) & 65.82 $\pm$ 1.90 & 61.22 $\pm$ 2.10 
           & 13.91 $\pm$ 0.83 & 56.35 $\pm$ 1.67 
           & 53.99 $\pm$ 1.46 & 51.72 $\pm$ 1.42 \\
    Shatter & 77.33 $\pm$ 0.87 & 58.26 $\pm$ 2.03 
           & 13.91 $\pm$ 1.77 & 79.90 $\pm$ 1.74 
           & 32.03 $\pm$ 11.54 & 56.47 $\pm$ 1.68 \\
    \textsc{eris} (+\textsc{dsc}) & 77.59 $\pm$ 1.38 & 57.44 $\pm$ 2.24 
           & 39.06 $\pm$ 1.01 & 74.81 $\pm$ 1.99 
           & 90.16 $\pm$ 0.68 & 55.45 $\pm$ 1.68 \\
    \midrule
    Min.\ Leakage & 78.11 $\pm$ 1.42 & 57.02 $\pm$ 2.10
               & 39.23 $\pm$ 0.87 & 76.22 $\pm$ 1.91
               & 90.06 $\pm$ 0.71 & 55.20 $\pm$ 1.50 \\
    \bottomrule
    \end{tabular}
    \label{tab:overall_acc_mia_all_nonIID}
\end{table}

\subsection{Pareto Analysis under Varying Privacy Constraints} \label{app:pareto}
This section complements the analysis in Paragraph~\ref{pg:pareto} of the main text with additional details and numerical results. We study how the utility–privacy trade-off evolves under different strengths of privacy-preserving mechanisms and varying numbers of local training samples. Shatter is excluded, as it already fails to converge reliably with 16 samples per client (Figure~\ref{fig:pareto_16}). For DP-based approaches (FedAvg+LDP and SoteriaFL), we vary the privacy budget $\epsilon$ together with the clipping norm $C$ to simulate different protection levels. Following the same configuration, we also evaluate \textsc{eris} with LDP, where LDP is applied on top of its native masking mechanism. For pruning-based methods (PriPrune), we vary the pruning rate $p$ to control information flow through gradient sparsification. The exact configurations of these hyperparameters are in Table~\ref{tab:ppm_acc_privleak_pct}.

Figure~\ref{fig:pareto} shows the utility–privacy trade-off across different numbers of local training samples. As expected, the Pareto frontier shifts toward higher accuracy and lower privacy leakage as clients are assigned more local data. Across all regimes, \textsc{eris} dominates the frontier, contributing the large majority of favorable points, while baselines are mostly dominated. 

Table~\ref{tab:ppm_acc_privleak_pct} reports the underlying quantitative results, including additional configurations not visualized in the figure. These include finer granularity in both $p$ and $\epsilon$ values, enabling a more exhaustive comparison. The results confirm the trends observed in the main text: \textsc{eris} consistently occupies favorable positions in the utility–privacy space, contributing most points along the Pareto frontier. Moreover, when augmented with LDP, \textsc{eris} demonstrates further privacy gains with only minor utility losses—outperforming other baselines that suffer substantial degradation as privacy constraints tighten. Overall, this detailed breakdown reinforces that \textsc{eris} achieves strong privacy guarantees and high utility, even under stringent privacy budgets and aggressive compression strategies.

\begin{figure*}[h!]
  \centering
  \includegraphics[width=0.99\textwidth]{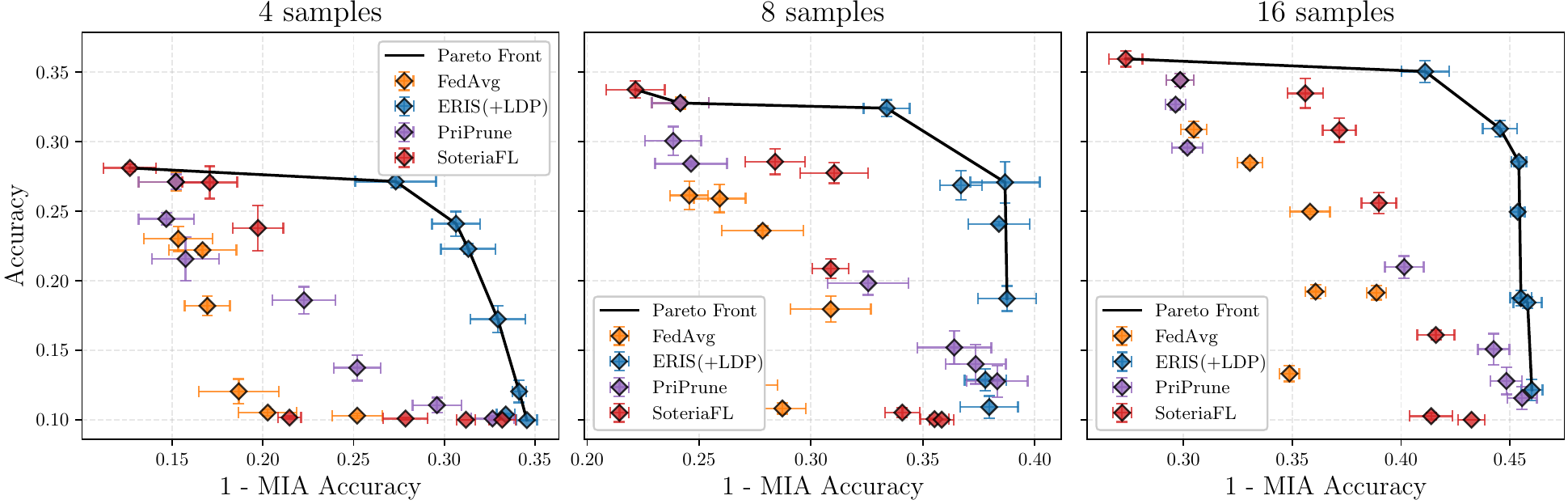}
  \caption{Utility–privacy trade-off on CIFAR-10 under varying strengths of the privacy-preserving mechanisms. Each subplot shows test accuracy vs. (1-MIA) accuracy for methods with different client training samples. The Pareto front represents a set of optimal trade-off points.}
  \label{fig:pareto}
  \vspace{-10pt} 
\end{figure*}

\begin{table}[htbp]
\centering
\vspace{-10pt}
\caption{Mean test accuracy and privacy leakage (with standard deviation) for various privacy‐preserving mechanisms across different local sample sizes. DP‐based methods use epsilon $\epsilon$ and clipping norm $C$; PriPrune uses rate $p$.}
\scriptsize
\renewcommand{\arraystretch}{1.2}
\setlength{\tabcolsep}{4pt}
\begin{tabular}{l|cc|cc|cc}
\toprule
& \multicolumn{2}{c|}{\textbf{4 samples}}
& \multicolumn{2}{c|}{\textbf{8 samples}}
& \multicolumn{2}{c}{\textbf{16 samples}}\\
\textbf{Method}
& \textbf{Accuracy ($\uparrow$)} & \textbf{MIA Acc. ($\downarrow$)}
& \textbf{Accuracy ($\uparrow$)} & \textbf{MIA Acc. ($\downarrow$)}
& \textbf{Accuracy ($\uparrow$)} & \textbf{MIA Acc. ($\downarrow$)}\\
\midrule
\multicolumn{7}{l}{\textbf{FedAvg + LDP}}\\
No LDP                              & 27.12\% $\pm$ 1.20\% & 84.80\% $\pm$ 4.59\% 
                                    & 32.98\% $\pm$ 0.61\% & 75.84\% $\pm$ 2.85\%
                                    & 34.43\% $\pm$ 1.04\% & 70.15\% $\pm$ 1.41\% \\
LDP ($\epsilon$=0.001,\,C=10)         & 23.01\% $\pm$ 2.00\% & 84.67\% $\pm$ 4.24\%
                                    & 26.13\% $\pm$ 2.30\% & 75.44\% $\pm$ 1.90\%
                                    & 30.88\% $\pm$ 1.30\% & 69.53\% $\pm$ 1.31\% \\
LDP ($\epsilon$=0.01,\,C=5)           & 22.20\% $\pm$ 0.71\% & 83.33\% $\pm$ 4.17\%
                                    & 25.91\% $\pm$ 2.23\% & 74.08\% $\pm$ 2.58\%
                                    & 28.48\% $\pm$ 0.72\% & 66.95\% $\pm$ 1.29\% \\
LDP ($\epsilon$=0.1,\,C=2)            & 18.20\% $\pm$ 1.58\% & 83.07\% $\pm$ 2.78\%
                                    & 23.60\% $\pm$ 0.76\% & 72.16\% $\pm$ 4.06\%
                                    & 24.97\% $\pm$ 0.48\% & 64.18\% $\pm$ 2.05\% \\
LDP ($\epsilon$=0.3,\,C=1)            & 12.04\% $\pm$ 1.98\% & 81.33\% $\pm$ 4.94\%
                                    & 17.95\% $\pm$ 2.08\% & 69.12\% $\pm$ 4.01\%
                                    & 19.14\% $\pm$ 1.12\% & 61.13\% $\pm$ 0.99\% \\
LDP ($\epsilon$=0.6,\,C=1)            & 10.52\% $\pm$ 0.55\% & 79.73\% $\pm$ 3.59\%
                                    & 12.50\% $\pm$ 1.42\% & 72.88\% $\pm$ 3.10\%
                                    & 19.22\% $\pm$ 1.12\% & 63.93\% $\pm$ 1.05\% \\
LDP ($\epsilon$=1.0,\,C=1)            & 10.29\% $\pm$ 0.40\% & 74.80\% $\pm$ 3.08\%
                                    & 10.81\% $\pm$ 0.88\% & 71.28\% $\pm$ 2.37\%
                                    & 13.32\% $\pm$ 1.24\% & 65.13\% $\pm$ 1.01\% \\
\midrule
\multicolumn{7}{l}{\textbf{\textsc{eris} + LDP}}\\
No LDP                              & 27.14\% $\pm$ 0.95\% & 72.67\% $\pm$ 4.99\%
                                    & 32.21\% $\pm$ 1.32\% & 66.62\% $\pm$ 2.30\%
                                    & 35.05\% $\pm$ 1.75\% & 58.89\% $\pm$ 2.45\% \\
LDP ($\epsilon$=0.001,\,C=10)         & 24.10\% $\pm$ 1.98\% & 69.35\% $\pm$ 2.95\%
                                    & 26.87\% $\pm$ 2.37\% & 63.30\% $\pm$ 2.10\%
                                    & 30.95\% $\pm$ 1.32\% & 55.45\% $\pm$ 1.77\% \\
LDP ($\epsilon$=0.01,\,C=5)           & 22.29\% $\pm$ 0.86\% & 68.67\% $\pm$ 3.37\%
                                    & 27.08\% $\pm$ 3.33\% & 59.32\% $\pm$ 3.47\%
                                    & 28.55\% $\pm$ 0.71\% & 54.59\% $\pm$ 0.77\% \\
LDP ($\epsilon$=0.1,\,C=2)            & 17.24\% $\pm$ 2.15\% & 67.04\% $\pm$ 3.40\%
                                    & 24.09\% $\pm$ 0.33\% & 61.60\% $\pm$ 3.09\%
                                    & 24.96\% $\pm$ 0.63\% & 54.64\% $\pm$ 0.73\% \\
LDP ($\epsilon$=0.3,\,C=1)            & 12.03\% $\pm$ 1.77\% & 65.87\% $\pm$ 0.84\%
                                    & 18.72\% $\pm$ 2.03\% & 61.24\% $\pm$ 2.92\%
                                    & 18.76\% $\pm$ 1.23\% & 54.49\% $\pm$ 1.10\% \\
LDP ($\epsilon$=0.6,\,C=1)            & 10.38\% $\pm$ 0.16\% & 66.61\% $\pm$ 1.12\%
                                    & 12.87\% $\pm$ 1.73\% & 62.21\% $\pm$ 2.06\%
                                    & 18.43\% $\pm$ 0.93\% & 54.18\% $\pm$ 1.45\% \\
LDP ($\epsilon$=1.0,\,C=1)            &  9.97\% $\pm$ 0.04\% & 65.44\% $\pm$ 1.27\%
                                    & 10.91\% $\pm$ 1.80\% & 62.94\% $\pm$ 2.88\%
                                    & 12.15\% $\pm$ 1.69\% & 53.99\% $\pm$ 1.11\% \\
\midrule
\multicolumn{7}{l}{\textbf{SoteriaFL}}\\
No LDP                              & 28.11\% $\pm$ 0.61\% & 87.33\% $\pm$ 3.24\%
                                    & 33.76\% $\pm$ 1.38\% & 77.84\% $\pm$ 2.92\%
                                    & 35.96\% $\pm$ 1.28\% & 72.65\% $\pm$ 1.72\% \\
LDP ($\epsilon$=0.001,\,C=10)         & 27.08\% $\pm$ 2.60\% & 82.93\% $\pm$ 3.39\%
                                    & 28.56\% $\pm$ 2.05\% & 71.60\% $\pm$ 2.98\%
                                    & 33.48\% $\pm$ 2.37\% & 64.40\% $\pm$ 1.82\% \\
LDP ($\epsilon$=0.01,\,C=5)           & 23.79\% $\pm$ 3.62\% & 80.27\% $\pm$ 3.12\%
                                    & 27.76\% $\pm$ 1.67\% & 68.96\% $\pm$ 3.39\%
                                    & 30.84\% $\pm$ 1.93\% & 62.84\% $\pm$ 1.70\% \\
LDP ($\epsilon$=0.1,\,C=2)            & 10.16\% $\pm$ 0.26\% & 78.53\% $\pm$ 1.42\%
                                    & 20.86\% $\pm$ 1.56\% & 69.12\% $\pm$ 1.81\%
                                    & 25.59\% $\pm$ 1.71\% & 61.02\% $\pm$ 1.78\% \\
LDP ($\epsilon$=0.3,\,C=1)            & 10.09\% $\pm$ 0.18\% & 72.13\% $\pm$ 2.75\%
                                    & 10.52\% $\pm$ 0.76\% & 65.92\% $\pm$ 1.72\%
                                    & 16.10\% $\pm$ 0.78\% & 58.40\% $\pm$ 1.91\% \\
LDP ($\epsilon$=0.6,\,C=1)            & 10.00\% $\pm$ 0.00\% & 68.80\% $\pm$ 1.15\%
                                    & 10.04\% $\pm$ 0.08\% & 64.48\% $\pm$ 1.44\%
                                    & 10.26\% $\pm$ 0.52\% & 58.62\% $\pm$ 2.20\% \\
LDP ($\epsilon$=1.0,\,C=1)            & 10.00\% $\pm$ 0.00\% & 66.80\% $\pm$ 1.65\%
                                    & 10.00\% $\pm$ 0.00\% & 64.16\% $\pm$ 1.20\%
                                    & 10.00\% $\pm$ 0.00\% & 56.76\% $\pm$ 1.37\% \\
\midrule
\multicolumn{7}{l}{\textbf{PriPrune}}\\
No Pruning                         & 27.12\% $\pm$ 1.20\% & 84.80\% $\pm$ 4.59\%
                                    & 32.98\% $\pm$ 0.61\% & 75.84\% $\pm$ 2.85\%
                                    & 34.43\% $\pm$ 1.04\% & 70.15\% $\pm$ 1.41\% \\
Pruning ($p$=0.0005)               & 24.45\% $\pm$ 1.00\% & 85.33\% $\pm$ 3.40\%
                                    & 30.06\% $\pm$ 2.30\% & 76.16\% $\pm$ 2.80\%
                                    & 32.67\% $\pm$ 0.79\% & 70.36\% $\pm$ 1.02\% \\
Pruning ($p$=0.001)                & 21.57\% $\pm$ 3.51\% & 84.27\% $\pm$ 4.12\%
                                    & 28.42\% $\pm$ 0.39\% & 75.36\% $\pm$ 3.60\%
                                    & 29.57\% $\pm$ 0.70\% & 69.82\% $\pm$ 1.59\% \\
Pruning ($p$=0.005)                & 18.60\% $\pm$ 2.15\% & 77.73\% $\pm$ 3.88\%
                                    & 19.83\% $\pm$ 1.89\% & 67.44\% $\pm$ 4.04\%
                                    & 20.98\% $\pm$ 1.77\% & 59.85\% $\pm$ 1.99\% \\
Pruning ($p$=0.01)                 & 13.74\% $\pm$ 2.05\% & 74.80\% $\pm$ 2.87\%
                                    & 15.20\% $\pm$ 2.67\% & 63.60\% $\pm$ 3.71\%
                                    & 15.08\% $\pm$ 2.49\% & 55.75\% $\pm$ 1.61\% \\
Pruning ($p$=0.03)                 & 11.05\% $\pm$ 1.22\% & 70.40\% $\pm$ 3.00\%
                                    & 14.00\% $\pm$ 3.18\% & 62.64\% $\pm$ 3.00\%
                                    & 12.79\% $\pm$ 2.17\% & 55.16\% $\pm$ 1.63\% \\
Pruning ($p$=0.05)                 & 10.09\% $\pm$ 0.36\% & 67.33\% $\pm$ 2.63\%
                                    & 12.77\% $\pm$ 2.52\% & 61.68\% $\pm$ 3.04\%
                                    & 11.55\% $\pm$ 1.80\% & 54.44\% $\pm$ 1.50\% \\
\bottomrule
\end{tabular}
\vspace{-10pt}
\label{tab:ppm_acc_privleak_pct}
\end{table}


\end{document}